\newtheorem{theorem}{Theorem}[section]
\newtheorem{assumption}{Assumption}[section]
\newtheorem{lemma}{Lemma}[theorem]
\newcommand{\policy}{\pi}
\newcommand{\transitions}{T}
\newcommand{\behavior}{{\pi_\beta}}
\newcommand{\bellman}{\mathcal{B}}
\newcommand{\Qfeat}{\mathbf{F}}
\newcommand{\hatbehavior}{\hat{\pi}_\beta}
\newcommand{\bx}{\mathbf{x}}
\newcommand{\bs}{\mathbf{s}}
\newcommand{\ba}{\mathbf{a}}
\newcommand{\bM}{\mathbf{M}}
\newcommand{\E}{\mathbb{E}}
\DeclareMathOperator\supp{supp}
\title{Conservative Q-Learning\\ for Offline Reinforcement Learning}
\author{%
  Aviral Kumar$^1$, Aurick Zhou$^1$, George Tucker$^2$, Sergey Levine$^{1,2}$ \\
  $^1$UC Berkeley, $^2$Google Research, Brain Team\\
  \texttt{aviralk@berkeley.edu}
}
\begin{document}

\maketitle

\begin{abstract}
Effectively leveraging large, previously collected datasets in reinforcement learning (RL) is a key challenge for large-scale real-world applications. Offline RL algorithms promise to learn effective policies from previously-collected, static datasets without further interaction. However, in practice, offline RL presents a major challenge, and standard off-policy RL methods can fail due to overestimation of values induced by the distributional shift between the dataset and the learned policy, especially when training on complex and multi-modal data distributions. In this paper, we propose \emph{conservative Q-learning (CQL)}, which aims to address these limitations by learning a conservative Q-function such that the expected value of a policy under this Q-function lower-bounds its true value. 
We theoretically show that CQL produces a lower bound on the value of the current policy and that it can be incorporated into a policy learning procedure with theoretical improvement guarantees. In practice, CQL augments the standard Bellman error objective with a simple Q-value regularizer which is straightforward to implement on top of existing deep Q-learning and actor-critic implementations. On both discrete and continuous control domains, we show that CQL substantially outperforms existing offline RL methods, often learning policies that attain 2-5 times higher final return, especially when learning from complex and multi-modal data distributions.
\end{abstract}

\section{Introduction}
\vspace{-4pt}
Recent advances in reinforcement learning (RL), especially when combined with expressive deep network function approximators, have produced promising results in domains ranging from robotics~\citep{kalashnikov2018qtopt} to strategy games~\citep{alphastar} and recommendation systems~\citep{li2010contextual}. However, applying RL to real-world problems consistently poses practical challenges: in contrast to the kinds of data-driven methods that have been successful in supervised learning~\citep{resnet,bert}, RL is classically regarded as an active learning process, where each training run requires active interaction with the environment. Interaction with the real world can be costly and dangerous, and the quantities of data that can be gathered online are substantially lower than the offline datasets that are used in supervised learning~\citep{imagenet}, which only need to be collected once. Offline RL, also known as batch RL, offers an appealing alternative~\citep{ernst2005tree,fujimoto2018off,kumar2019stabilizing,agarwal2019optimistic,jaques2019way,siegel2020keep,levine2020offline}. Offline RL algorithms learn from large, previously collected datasets, without interaction. This in principle can make it possible to leverage large datasets, but in practice fully offline RL methods pose major technical difficulties, stemming from the distributional shift between the policy that collected the data and the learned policy. This has made current results fall short of the full promise of such methods.

Directly utilizing existing value-based off-policy RL algorithms in an offline setting generally results in poor performance, due to issues with bootstrapping from out-of-distribution actions~\citep{kumar2019stabilizing,fujimoto2018off} and overfitting~\citep{fu2019diagnosing,kumar2019stabilizing,agarwal2019optimistic}. This typically manifests as erroneously optimistic value function estimates.
If we can instead learn a \emph{conservative} estimate of the value function, which provides a lower bound on the true values, this overestimation problem could be addressed. In fact, because policy evaluation and improvement typically only use the value of the policy, we can learn a less conservative lower bound Q-function, such that only the expected value of Q-function under the policy is lower-bounded, as opposed to a point-wise lower bound.
We propose a novel method for learning such conservative Q-functions via a simple modification to standard value-based RL algorithms. The key idea behind our method is to minimize values under an appropriately chosen distribution over state-action tuples, and then further tighten this bound by also incorporating a \emph{maximization} term over the data distribution.

Our primary contribution is an algorithmic framework, which we call conservative Q-learning (CQL), for learning conservative, lower-bound estimates of the value function, by regularizing the Q-values during training.
Our theoretical analysis of CQL shows that \textit{only} the expected value of this Q-function under the policy lower-bounds the true policy value, preventing extra under-estimation that can arise with point-wise lower-bounded Q-functions, that have typically been explored in the opposite context in exploration literature~\citep{osband2017posterior,jaksch2010near}.
We also empirically demonstrate the robustness of our approach to Q-function estimation error.
Our practical algorithm uses these conservative estimates for policy evaluation and offline RL. CQL can be implemented with less than \textbf{20} lines of code on top of a number of standard, online RL algorithms~\citep{haarnoja,dabney2018distributional}, simply by adding the CQL regularization terms to the Q-function update. In our experiments, we demonstrate the efficacy of CQL for offline RL, in domains with complex dataset compositions, where prior methods are typically known to perform poorly~\citep{d4rl} and domains with high-dimensional visual inputs~\citep{bellemare2013arcade,agarwal2019optimistic}. CQL outperforms prior methods by as much as \textbf{2-5x} on many benchmark tasks, and is the only method that can outperform simple behavioral cloning on a number of realistic datasets collected from human interaction.

\vspace{-10pt}
\section{Preliminaries}
\label{sec:background}
\vspace{-8pt}
The goal in reinforcement learning is to learn a policy that maximizes the expected cumulative discounted reward in a Markov decision process (MDP), which is defined by a tuple $(\mathcal{S}, \mathcal{A}, \transitions, r, \gamma)$.
$\mathcal{S}, \mathcal{A}$ represent state and action spaces, $\transitions(\bs' | \bs, \ba)$ and $r(\bs,\ba)$ represent the dynamics and reward function, and $\gamma \in (0,1)$ represents the discount factor. $\behavior(\ba|\bs)$ represents the behavior policy, $\mathcal{D}$ is the dataset, and $d^\behavior(\bs)$ is the discounted marginal state-distribution of $\behavior(\ba|\bs)$. The dataset $\mathcal{D}$ is sampled from $d^\behavior(\bs) \behavior(\ba|\bs)$. {On all states $\bs \in \mathcal{D}$, let $\hatbehavior(\ba|\bs_) := \frac{\sum_{\bs,\ba \in \mathcal{D}} \mathbf{1} [\bs = \bs , \ba = \ba]}{\sum_{\bs \in \mathcal{D}} \mathbf{1}[\bs = \bs]}$ denote the empirical behavior policy, at that state.} We assume that the rewards $r$ satisfy: $|r(\bs, \ba)| \leq R_{\max}$.

Off-policy RL algorithms based on dynamic programming maintain a parametric Q-function $Q_\theta(s, a)$ and, optionally, a parametric policy, $\pi_\phi(a|s)$. 
Q-learning methods train the Q-function by iteratively applying the Bellman optimality operator \mbox{$\mathcal{B}^*Q(\bs, \ba) = r(\bs, \ba) + \gamma \E_{\bs' \sim P(\bs'|\bs, \ba)}[\max_{\ba'} Q(\bs', \ba')]$}, and use exact or an approximate maximization scheme, such as CEM~\citep{kalashnikov2018qtopt} to recover the greedy policy. In an actor-critic algorithm, a separate policy is trained to maximize the Q-value.
Actor-critic methods alternate between computing $Q^\policy$ via (partial) policy evaluation,
by iterating the Bellman operator, $\mathcal{B}^\pi Q= r + \gamma P^\pi Q$, where $P^\pi$ is the transition matrix coupled with the policy: $P^\pi Q(\bs, \ba) = \E_{\bs' \sim \transitions(\bs' | \bs, \ba), \ba' \sim \pi(\ba'|\bs')} \left[ Q(\bs', \ba') \right],$
and improving the policy
$\policy(\ba|\bs)$ by updating it towards actions that maximize the expected Q-value. Since $\mathcal{D}$ typically does not contain all possible transitions $(\bs, \ba, \bs')$, the policy evaluation step actually uses an empirical Bellman operator that only backs up a single sample. We denote this operator $\hat{\bellman}^\policy$. Given a dataset $\mathcal{D} = \{(\bs, \ba, r \bs')\}$ of tuples from trajectories collected under a behavior policy $\behavior$:
\begin{small}
\begin{align*}
    \hat{Q}^{k+1} \leftarrow& \arg\min_{Q} \E_{\bs, \ba, \bs' \sim \mathcal{D}}\left[ \left((r(\bs, \ba) + \gamma \E_{\ba' \sim \hat{\policy}^k(\ba'|\bs')}[\hat{Q}^{k}(\bs', \ba')]) - Q(\bs, \ba)\right)^2 \right]~\text{(policy evaluation)}\\ 
    \hat{\policy}^{k+1} \leftarrow& \arg\max_{\policy} \E_{\bs \sim \mathcal{D}, \ba \sim \policy^k(\ba|\bs)}\left[\hat{Q}^{k+1}(\bs, \ba)\right]~~~ \text{(policy improvement)} 
\end{align*}
\end{small}
Offline RL algorithms based on this basic recipe suffer from action distribution shift~\citep{kumar2019stabilizing,wu2019behavior,jaques2019way,levine2020offline} during training, because the target values for Bellman backups in policy evaluation use actions sampled from the learned policy, $\policy^k$, but the Q-function is trained only on actions sampled from the behavior policy that produced the dataset $\mathcal{D}$, $\behavior$. Since $\policy$ is trained to maximize Q-values, it may be biased towards out-of-distribution (OOD) actions with erroneously high Q-values. In standard RL, such errors can be corrected by attempting an action in the environment and observing its actual value. However, the inability to interact with the environment makes it challenging to deal with Q-values for OOD actions in offline RL. Typical offline RL methods~\citep{kumar2019stabilizing,jaques2019way,wu2019behavior,siegel2020keep} mitigate this problem by constraining the learned policy~\citep{levine2020offline} away from OOD actions. {Note that Q-function training in offline RL does not suffer from state distribution shift, as the Bellman backup never queries the Q-function on out-of-distribution states. However, the policy may suffer from state distribution shift at test time.}

\vspace{-5pt}
\section{The Conservative Q-Learning (CQL) Framework}
\vspace{-5pt}
In this section, we develop a conservative Q-learning (CQL) algorithm, such that the expected value of a policy under the learned Q-function lower-bounds its true value. A lower bound on the Q-value prevents the over-estimation that is common in offline RL settings due to OOD actions and function approximation error~\citep{levine2020offline,kumar2019stabilizing}. {We use the term CQL to refer broadly to both Q-learning methods and actor-critic methods, though the latter also use an explicit policy.}
We start by focusing on the policy evaluation step in CQL, which can be used by itself as an off-policy evaluation procedure, or integrated into a complete offline RL algorithm, as we will discuss in Section~\ref{sec:framework}.

\vspace{-5pt}
\subsection{Conservative Off-Policy Evaluation}
\label{sec:policy_eval}
\vspace{-5pt}
We aim to estimate the value $V^\policy(\bs)$ of a target policy $\policy$ given access to a dataset, $\mathcal{D}$, generated by following a behavior policy $\behavior(\ba|\bs)$.
Because we are interested in preventing overestimation of the policy value, we learn a \textit{conservative}, lower-bound Q-function by additionally minimizing Q-values alongside a standard Bellman error objective.
Our choice of penalty is to minimize the expected Q-value under a particular distribution of state-action pairs, $\mu(\bs, \ba)$. 
Since standard Q-function training does not query the Q-function value at unobserved states, but queries the Q-function at unseen actions, 
we restrict $\mu$ to match the state-marginal in the dataset,
such that $\mu(\bs, \ba) = d^\behavior(\bs) \mu(\ba|\bs)$.
This gives rise to the iterative update for training the Q-function, as a function of a tradeoff factor $\alpha$:
\begin{equation}
    \small{\hat{Q}^{k+1} \leftarrow \arg\min_{Q}~ \alpha~ \E_{\bs \sim \mathcal{D}, \ba \sim \mu(\ba|\bs)}\left[Q(\bs, \ba)\right] + \frac{1}{2}~ \E_{\bs, \ba \sim \mathcal{D}}\left[\left(Q(\bs, \ba) - \hat{\bellman}^\policy \hat{Q}^{k} (\bs, \ba) \right)^2 \right],} 
    \label{eqn:objective_1}
\end{equation}
In Theorem~\ref{thm:min_q_underestimates}, we show that the resulting Q-function, $\hat{Q}^\policy \coloneqq \lim_{k \rightarrow \infty} \hat{Q}^k$, lower-bounds $Q^\policy$ at all $(\bs,\ba)$. 
However, we can substantially tighten this bound
if we are \textit{only} interested in estimating  $V^\policy(\bs)$. If we only require that the expected value of the $\hat{Q}^\pi$ under $\policy(\ba|\bs)$ lower-bound $V^\pi$, we can improve the bound by introducing an additional Q-value \textit{maximization} term under the data distribution, $\behavior(\ba|\bs)$, resulting in the iterative update (changes from Equation~\ref{eqn:objective_1} in red):
\begin{multline}
    \small{\hat{Q}^{k+1} \leftarrow \arg\min_{Q}~~ \alpha \cdot \left(\E_{\bs \sim \mathcal{D}, \ba \sim \mu(\ba|\bs)}\left[Q(\bs, \ba)\right] - \textcolor{red}{\E_{\bs \sim \mathcal{D}, \ba \sim \hatbehavior(\ba|\bs)}\left[Q(\bs, \ba)\right]} \right)} \\
    \small{+ \frac{1}{2}~ \E_{\bs, \ba, \bs' \sim \mathcal{D}}\left[\left(Q(\bs, \ba) - \hat{\bellman}^\policy \hat{Q}^{k} (\bs, \ba) \right)^2 \right]}.
    \label{eqn:modified_policy_eval}
\end{multline}
In Theorem~\ref{thm:cql_underestimates}, we show that, while the resulting Q-value 
$\hat{Q}^{\policy}$ may not be a point-wise lower-bound, we have $\E_{\policy(\ba|\bs)}[\hat{Q}^\pi(\bs, \ba)] \leq V^\pi(\bs)$
 when $\mu(\ba|\bs) = \policy(\ba|\bs)$. 
Intuitively, since Equation~\ref{eqn:modified_policy_eval} maximizes Q-values under the behavior policy $\hatbehavior$, Q-values for actions that are likely under $\hatbehavior$ might be overestimated, and hence $\hat{Q}^\policy$ may not lower-bound $Q^\policy$ pointwise. 
While in principle the maximization term can utilize other distributions besides $\hatbehavior(\ba|\bs)$, we prove in Appendix~\ref{app:maximizing_distributions} that the resulting value is not guaranteed to be a lower bound for other distribution besides $\hatbehavior(\ba|\bs)$.

\textbf{Theoretical analysis.}
We first note that Equations~\ref{eqn:objective_1} and \ref{eqn:modified_policy_eval} use the empirical Bellman operator, $\hat{\bellman}^\policy$, instead of the actual Bellman operator, $\bellman^\policy$.  Following~\citep{osband2016deep,jaksch2010near,o2018variational}, we use concentration properties of $\hat{\bellman}^\policy$ to control this error. Formally,
for all $\bs, \ba \in \mathcal{D}$, with  probability $\geq 1 - \delta$, $|\hat{\bellman}^\policy - \bellman^\policy|(\bs, \ba) \leq \frac{C_{r,T, \delta}}{\sqrt{|\mathcal{D}(\bs, \ba)|}}$, where $C_{r, T, \delta}$ is a constant dependent on the concentration properties (variance) of $r(\bs, \ba)$ and $\transitions(\bs'|\bs, \ba)$, and $\delta\in (0, 1)$ (see Appendix~\ref{app:handling_unobserved_actions} for more details). 
{For simplicity in the derivation, we assume that $\hatbehavior(\ba|\bs) > 0, \forall \ba \in \mathcal{A}~, \forall \bs \in \mathcal{D}$. $\frac{1}{\sqrt{|\mathcal{D}|}}$ denotes a vector of size $|\mathcal{S}| |\mathcal{A}|$ containing square root inverse counts for each state-action pair, except when $\mathcal{D}(\bs, \ba) = 0$, in which case the corresponding entry is a very large but finite value $\delta \geq \frac{2 R_{\max}}{1 - \gamma}$.}
Now, we show that the conservative Q-function learned by iterating Equation~\ref{eqn:objective_1} lower-bounds the true Q-function. Proofs can be found in Appendix~\ref{app:missing_proofs}.

\begin{theorem} %
\label{thm:min_q_underestimates}
For any $\mu(\ba|\bs)$ with $\supp \mu \subset \supp \hatbehavior$, with probability $\geq 1 - \delta$, $\hat{Q}^\pi$ (the Q-function obtained by iterating Equation~\ref{eqn:objective_1}) satisifies:
\begin{equation*}
\forall \bs \in \mathcal{D}, \ba, ~~ \hat{Q}^\policy(s, a) \leq Q^\policy(\bs, \ba) - \alpha \left[\left(I - \gamma P^\pi \right)^{-1} \frac{\mu}{\hatbehavior} \right](\bs, \ba) + \left[ (I - \gamma P^\pi)^{-1}\frac{C_{r, T, \delta} R_{\max}}{(1- \gamma) \sqrt{|\mathcal{D}|}} \right](\bs, \ba).    
\end{equation*}
Thus, if {$\alpha$ is sufficiently large},
then $\hat{Q}^\policy(\bs, \ba)  \leq Q^\policy(\bs, \ba), \forall \bs \in \mathcal{D}, \ba$. When $\hat{\bellman}^\policy = \bellman^\policy$, any $\alpha > 0$ guarantees $\hat{Q}^\policy(\bs, \ba)  \leq Q^\policy(\bs, \ba), \forall \bs \in \mathcal{D}, \ba \in \mathcal{A}$.
\end{theorem}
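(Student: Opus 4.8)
The plan is to analyze the fixed point of the iterative update in Equation~\ref{eqn:objective_1} directly. Since we are told $\hat{Q}^\policy \coloneqq \lim_{k\to\infty}\hat{Q}^k$, I would first set the gradient of the objective in Equation~\ref{eqn:objective_1} with respect to $Q$ to zero at the fixed point. The objective is a sum of a linear penalty term $\alpha\,\E_{\bs\sim\data,\ba\sim\mu}[Q(\bs,\ba)]$ and a squared Bellman-error term $\tfrac12\E_{\bs,\ba\sim\data}[(Q-\hat{\bellman}^\policy\hat{Q}^k)^2]$. Differentiating pointwise at each $(\bs,\ba)$ that appears in the data and setting the derivative to zero should yield an explicit closed form for $\hat{Q}^{k+1}(\bs,\ba)$ in terms of $\hat{\bellman}^\policy\hat{Q}^k(\bs,\ba)$, the penalty weight $\alpha$, and the ratio $\mu(\ba|\bs)/\hatbehavior(\ba|\bs)$. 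I expect the stationarity condition to read, roughly, $\hat{Q}^{k+1}(\bs,\ba) = \hat{\bellman}^\policy\hat{Q}^k(\bs,\ba) - \alpha\,\tfrac{\mu(\ba|\bs)}{\hatbehavior(\ba|\bs)}$, because the empirical expectation $\E_{\bs,\ba\sim\data}$ weights each state-action pair by its empirical frequency $d^\behavior(\bs)\hatbehavior(\ba|\bs)$, which cancels against the $d^\behavior(\bs)\mu(\ba|\bs)$ weighting of the penalty to leave the ratio $\mu/\hatbehavior$.

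Next I would pass to the limit $k\to\infty$ to obtain a fixed-point equation $\hat{Q}^\policy = \hat{\bellman}^\policy\hat{Q}^\policy - \alpha\,\tfrac{\mu}{\hatbehavior}$, writing everything in vector form over $\states\times\actions$. Recognizing $\hat{\bellman}^\policy Q = \reward + \gamma P^\policy Q$ (now with the empirical operator), I would rearrange to $(I-\gamma P^\policy)\hat{Q}^\policy = \reward - \alpha\,\tfrac{\mu}{\hatbehavior}$ and invert, using that $(I-\gamma P^\policy)^{-1}$ exists and is a nonnegative operator since $\gamma\in(0,1)$ and $P^\policy$ is a stochastic matrix. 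This gives $\hat{Q}^\policy = (I-\gamma P^\policy)^{-1}\reward - \alpha\,(I-\gamma P^\policy)^{-1}\tfrac{\mu}{\hatbehavior}$.

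The third step is to relate the first term to the \emph{true} $Q^\policy$. Here I must account for the fact that the update uses the empirical operator $\hat{\bellman}^\policy$ rather than the exact $\bellman^\policy$. I would write $\hat{\bellman}^\policy = \bellman^\policy + (\hat{\bellman}^\policy - \bellman^\policy)$ and invoke the stated concentration bound $|\hat{\bellman}^\policy - \bellman^\policy|(\bs,\ba)\le C_{\reward,\transitions,\delta}/\sqrt{|\data(\bs,\ba)|}$, holding with probability $\ge 1-\delta$. Since $Q^\policy = (I-\gamma P^\policy)^{-1}\reward$ for the true operator, propagating the sampling error through $(I-\gamma P^\policy)^{-1}$ — whose entries sum to a geometric factor bounded by $1/(1-\gamma)$ — yields the additive error term $[(I-\gamma P^\policy)^{-1} C_{\reward,\transitions,\delta}R_{\max}/((1-\gamma)\sqrt{|\data|})](\bs,\ba)$ appearing in the statement. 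Combining, I recover the claimed inequality. The final conclusions then follow by monotonicity: since both $(I-\gamma P^\policy)^{-1}$ and $\mu/\hatbehavior$ are nonnegative, the penalty term is nonnegative, so for $\alpha$ large enough to dominate the sampling-error term the bound $\hat{Q}^\policy\le Q^\policy$ holds pointwise; and when $\hat{\bellman}^\policy=\bellman^\policy$ the error term vanishes and any $\alpha>0$ suffices.

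\textbf{The main obstacle} I anticipate is the bookkeeping around the per-entry count vector $1/\sqrt{|\data|}$ and the convention that $\mu/\hatbehavior$ vanishes off $\supp\hatbehavior$ — one must be careful that dividing by $\hatbehavior$ is well-defined exactly where $\mu$ is supported (guaranteed by the hypothesis $\supp\mu\subset\supp\hatbehavior$), and that the large-but-finite surrogate value $\delta\ge 2R_{\max}/(1-\gamma)$ assigned to unvisited pairs keeps the matrix algebra and the nonnegativity of $(I-\gamma P^\policy)^{-1}$ intact. Verifying that the componentwise inequalities survive application of the (nonnegative) resolvent $(I-\gamma P^\policy)^{-1}$ — rather than only the scalar stationarity condition — is the step that requires the most care.
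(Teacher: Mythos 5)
Your proposal is correct and follows essentially the same route as the paper's proof: derive the pointwise stationarity condition $\hat{Q}^{k+1} = \hat{\bellman}^\policy\hat{Q}^k - \alpha\,\mu/\hatbehavior$ from the gradient of Equation~\ref{eqn:objective_1}, control $|\hat{\bellman}^\policy - \bellman^\policy|$ via the stated concentration bound, pass to the fixed point, and apply the nonnegative resolvent $(I-\gamma P^\policy)^{-1}$ to obtain the claimed inequality and the conditions on $\alpha$. The only (immaterial) difference is ordering — the paper injects the sampling-error bound into each iterate before taking the fixed point, whereas you take the empirical fixed point first and then correct — and both yield the same expression.
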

Next, we show that Equation~\ref{eqn:modified_policy_eval} lower-bounds the expected value under the policy $\policy$, when $\mu = \policy$. We also show that Equation~\ref{eqn:modified_policy_eval} does not lower-bound the Q-value estimates pointwise. {For this result, we abuse notation and assume that $\frac{1}{\sqrt{|\mathcal{D}|}}$ refers to a vector of inverse square root of only state counts, with a similar correction as before used to handle the entries of this vector at states with zero counts.}
\begin{theorem}[Equation~\ref{eqn:modified_policy_eval} results in a tighter lower bound]
\label{thm:cql_underestimates}
The value of the policy under the Q-function from Equation~\ref{eqn:modified_policy_eval}, $\hat{V}^\policy(\bs) = \E_{\policy(\ba|\bs)}[\hat{Q}^\policy(\bs, \ba)]$, lower-bounds the true value of the policy obtained via exact policy evaluation, $V^\policy(\bs) = \E_{\policy(\ba|\bs)}[Q^\policy(\bs, \ba)]$, when $\mu = \policy$, according to:
\begin{equation*}
\forall \bs \in \mathcal{D}, ~~ \hat{V}^\policy(\bs) \leq V^\policy(\bs) - \alpha \left[\left(I - \gamma P^\pi \right)^{-1} \E_{\policy}\left[\frac{\policy}{\hatbehavior} - 1 \right] \right](\bs) + \left[ (I - \gamma P^\pi)^{-1} \frac{C_{r, T, \delta} R_{\max}}{(1- \gamma) \sqrt{|\mathcal{D}|}}\right](\bs).
\end{equation*}
Thus, if $\alpha > \frac{C_{r, T} R_{\max}}{1 - \gamma} \cdot \max_{\bs \in \mathcal{D}} \frac{1}{|\sqrt{|\mathcal{D}(\bs)|}} \cdot \left[\sum_{\ba} \policy(\ba|\bs) (\frac{\policy(\ba|\bs)}{\hatbehavior(\ba|\bs))} - 1)\right]^{-1}$
,~ $\forall \bs \in \mathcal{D},~\hat{V}^\policy(\bs) \leq {V}^\policy(\bs)$, with probability $\geq 1 - \delta$. When $\hat{\bellman}^\policy = {\bellman}^\policy$, then any $\alpha > 0$ guarantees $\hat{V}^\policy(\bs) \leq V^\policy(\bs), \forall \bs \in \mathcal{D}$.
\end{theorem}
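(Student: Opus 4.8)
The plan is to characterize the fixed point of the modified update in Equation~\ref{eqn:modified_policy_eval} in closed form, reduce the desired inequality $\hat{V}^\policy \le V^\policy$ to a pointwise nonnegativity condition, and then propagate that condition through the resolvent $(I - \discount P^\pi)^{-1}$ using its entrywise monotonicity. First I would compute the fixed point. The objective is strictly convex in the table of values $Q(\bs,\ba)$ (a positive-definite quadratic from the Bellman term plus a linear penalty), so I set the per-coordinate derivative with respect to $Q(\bs,\ba)$ to zero. The empirical state weight $d^\behavior(\bs)$ and action weight $\hatbehavior(\ba|\bs)$ of the squared-error term factor out, and after dividing through by $d^\behavior(\bs)\hatbehavior(\ba|\bs) > 0$ the stationarity condition collapses to
\[
\hat{Q}^{k+1}(\bs,\ba) = \hat{\bellman}^\policy\hat{Q}^k(\bs,\ba) - \alpha\left(\frac{\mu(\ba|\bs)}{\hatbehavior(\ba|\bs)} - 1\right).
\]
Taking $k\to\infty$ and setting $\mu = \policy$ gives the fixed-point equation $\hat{Q}^\policy = \hat{\bellman}^\policy\hat{Q}^\policy - \alpha(\policy/\hatbehavior - 1)$.

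Next I would substitute $\hat{\bellman}^\policy = \bellman^\policy + (\hat{\bellman}^\policy - \bellman^\policy)$, with $\bellman^\policy = r + \discount P^\pi$ and the sampling error controlled by $|\hat{\bellman}^\policy - \bellman^\policy|(\bs,\ba)\le C_{r,T,\delta}/\sqrt{|\mathcal{D}(\bs,\ba)|}$, and invert $(I - \discount P^\pi)$. Using $Q^\policy = (I - \discount P^\pi)^{-1}r$, this produces
\[
\hat{Q}^\policy = Q^\policy - \alpha(I - \discount P^\pi)^{-1}\left(\frac{\policy}{\hatbehavior} - 1\right) + (I - \discount P^\pi)^{-1}(\hat{\bellman}^\policy - \bellman^\policy)\hat{Q}^\policy.
\]
Bounding the last term entrywise (using $|\hat{Q}^\policy|\le R_{\max}/(1-\discount)$ to absorb the $R_{\max}$ factor) and taking $\E_{\policy(\ba|\bs)}[\cdot]$ of both sides yields exactly the stated expression for $\hat{V}^\policy(\bs)$, since the penalty term becomes $\E_\policy[\policy/\hatbehavior - 1]$.

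The crux is then showing the penalty term is nonnegative and dominates the error term. Nonnegativity follows because $\sum_\ba \policy(\ba|\bs)\big(\policy(\ba|\bs)/\hatbehavior(\ba|\bs) - 1\big) = \sum_\ba \policy^2/\hatbehavior - 1 \ge 0$ by Cauchy--Schwarz (it is a $\chi^2$-type divergence between $\policy$ and $\hatbehavior$), with equality iff $\policy = \hatbehavior$; this simultaneously explains why the bound fails pointwise, since the per-action penalty $\policy/\hatbehavior - 1$ is strictly negative wherever $\hatbehavior > \policy$, pushing $\hat{Q}^\policy$ above $Q^\policy$ on those actions. I would then invoke monotonicity of $(I - \discount P^\pi)^{-1} = \sum_{t\ge 0}\discount^t (P^\pi)^t$, which has nonnegative entries: to guarantee $\hat{V}^\policy(\bs)\le V^\policy(\bs)$ it suffices that the vector $\alpha\,\E_\policy[\policy/\hatbehavior - 1] - \tfrac{C_{r,T,\delta}R_{\max}}{(1-\discount)\sqrt{|\mathcal{D}|}}$ be nonnegative coordinatewise, because a nonnegative operator maps nonnegative vectors to nonnegative vectors.

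Solving this scalar condition state-by-state and taking the worst case gives the stated threshold $\alpha > \frac{C_{r,T,\delta}R_{\max}}{1-\discount}\max_{\bs\in\mathcal{D}}\frac{1}{\sqrt{|\mathcal{D}(\bs)|}}\big[\sum_\ba\policy(\ba|\bs)(\policy/\hatbehavior - 1)\big]^{-1}$. The main obstacle I anticipate is exactly this reduction from the vector inequality, in which the resolvent couples all states together, to a clean per-state condition on $\alpha$: it hinges essentially on the entrywise nonnegativity and monotonicity of $(I - \discount P^\pi)^{-1}$, and on carefully handling the zero-count states through the large-but-finite convention fixed before the theorem statement. Finally, when $\hat{\bellman}^\policy = \bellman^\policy$ the error term vanishes identically, so the nonnegative penalty term alone forces $\hat{V}^\policy(\bs)\le V^\policy(\bs)$ for any $\alpha > 0$.
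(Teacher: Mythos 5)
Your proposal is correct and follows essentially the same route as the paper's proof: characterize the tabular fixed point $\hat{Q}^{k+1} = \hat{\bellman}^\policy\hat{Q}^k - \alpha(\mu/\hatbehavior - 1)$, show the per-state penalty $\sum_\ba \policy(\policy/\hatbehavior - 1) \ge 0$ (your Cauchy--Schwarz/$\chi^2$ argument is the same fact the paper establishes by expanding $\sum_\ba(\policy - \hatbehavior + \hatbehavior)(\policy/\hatbehavior - 1) = \sum_\ba(\policy-\hatbehavior)^2/\hatbehavior$), propagate through the entrywise-nonnegative resolvent $(I-\gamma P^\pi)^{-1}$, and choose $\alpha$ to dominate the concentration error. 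The only cosmetic difference is that you fold the sampling error into the fixed-point equation before inverting, whereas the paper bounds the per-iteration overestimation first and then passes to the fixed point; both yield the identical bound.
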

The analysis presented above assumes that no function approximation is used in the Q-function, meaning that each iterate can be represented exactly.
We can further generalize the result in Theorem~\ref{thm:cql_underestimates} to the case of both linear function approximators and non-linear neural network function approximators, where the latter builds on the neural tangent kernel (NTK) framework~\citep{ntk}. Due to space constraints, we present these results in Theorem~\ref{thm:policy_eval_func_approx} and Theorem~\ref{corr:nonlinear_ntk} in Appendix~\ref{app:cql_linear_non_linear}. 

\textbf{In summary}, we showed that the basic CQL evaluation in Equation~\ref{eqn:objective_1} learns a Q-function that lower-bounds the true Q-function $Q^\pi$, and the evaluation in Equation~\ref{eqn:modified_policy_eval} provides a \emph{tighter} lower bound on the expected Q-value of the policy $\pi$. For suitable $\alpha$, both bounds hold under sampling error and function approximation. {We also note that as more data becomes available and $|\mathcal{D}(\bs, \ba)|$ increases, the theoretical value of $\alpha$ that is needed to guarantee a lower bound decreases, which indicates that in the limit of infinite data, a lower bound can be obtained by using extremely small values of $\alpha$.} Next, we will extend on this result into a complete RL algorithm.

\vspace{-5pt}
\subsection{Conservative Q-Learning for Offline RL}
\label{sec:framework}
\vspace{-5pt}
We now present a general approach for offline policy learning, which we refer to as conservative Q-learning (CQL). 
As discussed in Section~\ref{sec:policy_eval}, we can obtain Q-values that lower-bound the value of a policy $\policy$
by solving Equation~\ref{eqn:modified_policy_eval} with $\mu = \policy$. How should we utilize this for policy optimization? We could alternate between performing full off-policy evaluation for each policy iterate, $\hat{\policy}^k$, and one step of policy improvement. However, this can be computationally expensive. Alternatively, since the policy $\hat{\policy}^k$ is typically derived from the Q-function, we could instead choose $\mu(\ba|\bs)$ to approximate the policy that would maximize the current Q-function iterate, 
thus giving rise to an online algorithm.

We can formally capture such online algorithms by defining a family of optimization problems over $\mu(\ba|\bs)$, presented below, with modifications from Equation~\ref{eqn:modified_policy_eval} marked in red. An instance of this family is denoted by CQL($\mathcal{R}$) and is characterized by a particular choice of regularizer $\mathcal{R}(\mu)$:
\begin{multline}
    \label{eqn:cql_framework}
    \small{\min_{Q} \textcolor{red}{\max_{\mu}}~~ \alpha \left(\E_{\bs \sim \mathcal{D}, \ba \sim \textcolor{red}{\mu(\ba|\bs)}}\left[Q(\bs, \ba)\right] - \E_{\bs \sim \mathcal{D}, \ba \sim \hatbehavior(\ba|\bs)}\left[Q(\bs, \ba)\right] \right)}\\
    \small{+ \frac{1}{2}~ \E_{\bs, \ba, \bs' \sim \mathcal{D}}\left[\left(Q(\bs, \ba) - \hat{\bellman}^{\policy_k} \hat{Q}^{k} (\bs, \ba) \right)^2 \right] + \textcolor{red}{\mathcal{R}(\mu)} ~~~ \left(\text{CQL}(\mathcal{R})\right).}
\end{multline}

\textbf{Variants of CQL.} To demonstrate the generality of the CQL family of optimization problems, we discuss two specific instances within this family that are of special interest, and we evaluate them empirically in Section~\ref{sec:experiments}. 
If we choose $\mathcal{R}(\mu)$ to be the KL-divergence against a prior distribution, $\rho(\ba|\bs)$, i.e., $\mathcal{R}(\mu) = -D_{\mathrm{KL}}(\mu, \rho)$, then we get $\mu(\ba|\bs) \propto \rho(\ba|\bs) \cdot \exp (Q(\bs, \ba))$ (for a derivation, see Appendix~\ref{app:cql_variants}). Frist, if $\rho = \text{Unif}(\ba)$,
then the first term in Equation~\ref{eqn:cql_framework} corresponds to a soft-maximum of the Q-values at any state $\bs$ and gives rise to the following variant of Equation~\ref{eqn:cql_framework}, called CQL($\mathcal{H}$):
\begin{equation}
    \small{\min_{Q}~ \alpha \E_{\bs \sim \mathcal{D}}\left[\log \sum_{\ba} \exp(Q(\bs, \ba))\!-\!\E_{\ba \sim \hatbehavior(\ba|\bs)}\left[Q(\bs, \ba)\right]\right]\!+\!\frac{1}{2} \E_{\bs, \ba, \bs' \sim \mathcal{D}}\left[\left(Q - \hat{\bellman}^{\policy_k} \hat{Q}^{k} \right)^2 \right]\!.}
    \label{eqn:practical_objective}
\end{equation}
Second, if $\rho(\ba|\bs)$ is chosen to be the previous policy $\hat{\policy}^{k-1}$, the first term in Equation~\ref{eqn:practical_objective} is replaced by an exponential weighted average of Q-values of actions from the chosen $\hat{\policy}^{k-1}(\ba|\bs)$. Empirically, we find that this variant can be more stable with high-dimensional action spaces (e.g., Table~\ref{table:adroit_antmaze}) where it is challenging to estimate $\log \sum_{\ba} \exp$ via sampling due to high variance. In Appendix~\ref{app:cql_variants}, we discuss an additional variant of CQL, drawing connections to distributionally robust optimization~\citep{namkoong2017variance}.  
We will discuss a practical instantiation of a CQL deep RL algorithm in Section~\ref{sec:practical_alg}. CQL can be instantiated as either a Q-learning algorithm (with $\bellman^*$ instead of $\bellman^{\policy}$ in Equations~\ref{eqn:cql_framework}, \ref{eqn:practical_objective}) or as an actor-critic algorithm. 

\textbf{Theoretical analysis of CQL.}
Next, we will theoretically analyze CQL to show that the policy updates derived in this way are indeed ``conservative'', in the sense that each successive policy iterate is optimized against a lower bound on its value. For clarity, we state the results in the absence of finite-sample error, in this section, but sampling error can be incorporated in the same way as Theorems~\ref{thm:min_q_underestimates} and \ref{thm:cql_underestimates}, and we discuss this in Appendix~\ref{app:missing_proofs}.   Theorem~\ref{thm:cql_underestimation} shows that any variant of the CQL family learns Q-value estimates that lower-bound the actual Q-function under the action-distribution defined by the policy, $\policy^{k}$, under mild regularity conditions (slow updates on the policy).
\begin{theorem}[CQL learns lower-bounded Q-values]
\label{thm:cql_underestimation}
Let $\policy_{\hat{Q}^k}(\ba | \bs) \propto \exp(\hat{Q}^k(\bs, \ba))$ and assume that $D_{\mathrm{TV}}(\hat{\policy}^{k+1}, \pi_{\hat{Q}^k}) \leq \varepsilon$ (i.e., $\hat{\policy}^{k+1}$ changes slowly w.r.t to $\hat{Q}^k$). Then, the policy value under $\hat{Q}^k$, lower-bounds the actual policy value, \mbox{$\hat{V}^{k+1}(\bs) \leq V^{k+1}(\bs) \forall \bs$} if
\begin{equation*}
\small{\E_{\pi_{\hat{Q}^k}(\ba | \bs)} \left[ \frac{\pi_{\hat{Q}^k}(\ba | \bs)}{\hatbehavior(\ba|\bs)} -1 \right] \geq \max_{\ba \text{~s.t.~} \hatbehavior(\ba|\bs) > 0} \left(\frac{\pi_{\hat{Q}^k}(\ba | \bs)}{\hatbehavior(\ba|\bs)} \right) \cdot \varepsilon}. 
\end{equation*}
\end{theorem}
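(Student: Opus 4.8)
The plan is to reuse the closed-form value expression established in the proof of Theorem~\ref{thm:cql_underestimates}, but to carry the maximization distribution through the derivation \emph{without} setting it equal to the evaluated policy, and then to absorb the resulting mismatch into the total-variation slack $\varepsilon$. Concretely, for any variant of the CQL family the inner maximization of Equation~\ref{eqn:cql_framework} selects the soft-max distribution $\mu = \pi_{\hat{Q}^k}$, whereas the policy actually being evaluated is $\hat{\policy}^{k+1}$. Setting the gradient of the objective (with $\hat{\bellman}^{\policy_k} = \bellman^{\policy_k}$) to zero and dividing the stationarity condition by $d^{\behavior}(\bs)\hatbehavior(\ba|\bs)$ gives the pointwise backup
\[
\hat{Q}^{k+1}(\bs,\ba) = \bellman^{\policy_k}\hat{Q}^k(\bs,\ba) - \alpha\left(\frac{\pi_{\hat{Q}^k}(\ba|\bs)}{\hatbehavior(\ba|\bs)} - 1\right),
\]
so repeating the Theorem~\ref{thm:cql_underestimates} argument with this penalty (but the evaluated policy $\hat{\policy}^{k+1}$ in the outer expectation) yields
\[
\hat{V}^{k+1}(\bs) = V^{k+1}(\bs) - \alpha\left[(I - \gamma P^{\hat{\policy}^{k+1}})^{-1}\,\E_{\ba\sim\hat{\policy}^{k+1}}\left[\tfrac{\pi_{\hat{Q}^k}}{\hatbehavior} - 1\right]\right](\bs).
\]
Since $(I-\gamma P^{\hat{\policy}^{k+1}})^{-1}=\sum_{t\geq 0}\gamma^t (P^{\hat{\policy}^{k+1}})^t$ has nonnegative entries, it maps nonnegative vectors to nonnegative vectors, so it suffices to show that the per-state penalty $\E_{\ba\sim\hat{\policy}^{k+1}}[\pi_{\hat{Q}^k}/\hatbehavior - 1]$ is nonnegative at every $\bs$.

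The crux is to lower-bound this expectation, taken under $\hat{\policy}^{k+1}$, by the quantity $\E_{\pi_{\hat{Q}^k}}[\pi_{\hat{Q}^k}/\hatbehavior - 1]$ appearing in the hypothesis. I would write $\hat{\policy}^{k+1} = \pi_{\hat{Q}^k} + (\hat{\policy}^{k+1} - \pi_{\hat{Q}^k})$ and split the sum. The key simplification is that both policies are normalized, so $\sum_{\ba}(\hat{\policy}^{k+1}-\pi_{\hat{Q}^k}) = 0$ and the constant ``$-1$'' contributes nothing to the error term; what survives is $\sum_{\ba}(\hat{\policy}^{k+1}-\pi_{\hat{Q}^k})\,\pi_{\hat{Q}^k}/\hatbehavior$, which I bound by Hölder's inequality as $\max_{\ba:\hatbehavior>0}(\pi_{\hat{Q}^k}/\hatbehavior)\cdot\|\hat{\policy}^{k+1}-\pi_{\hat{Q}^k}\|_1$. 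Invoking the total-variation assumption $\|\hat{\policy}^{k+1}-\pi_{\hat{Q}^k}\|_1 \leq \varepsilon$ then gives
\[
\E_{\ba\sim\hat{\policy}^{k+1}}\left[\tfrac{\pi_{\hat{Q}^k}}{\hatbehavior} - 1\right] \geq \E_{\pi_{\hat{Q}^k}}\left[\tfrac{\pi_{\hat{Q}^k}}{\hatbehavior} - 1\right] - \max_{\ba:\hatbehavior>0}\left(\tfrac{\pi_{\hat{Q}^k}}{\hatbehavior}\right)\varepsilon.
\]
The hypothesized inequality is exactly the statement that the right-hand side is nonnegative, which forces the penalty to be nonnegative at each state and hence $\hat{V}^{k+1}(\bs) \leq V^{k+1}(\bs)$ for all $\bs$, as claimed.

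I expect the main obstacle to be the book-keeping in the substitution step rather than any deep inequality. The delicate point is recognizing that the normalization of both policies lets the ``$-1$'' cancel in the difference, so the worst-case error is controlled only by the density ratio $\pi_{\hat{Q}^k}/\hatbehavior$ and not by a larger constant; getting this wrong would produce a strictly weaker (and differently shaped) sufficient condition. A secondary care point is the total-variation normalization convention, so that the slack enters as $\varepsilon$ and not $2\varepsilon$, and noting that positivity is only required pointwise in $\bs$, which is precisely why the hypothesis and the conclusion are both stated per state and why the monotonicity of $(I-\gamma P^{\hat{\policy}^{k+1}})^{-1}$ is what converts the per-state penalty bound into the value-level inequality.
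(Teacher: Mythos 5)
Your proposal is correct and follows essentially the same route as the paper's proof: both derive the pointwise backup with penalty $\pi_{\hat{Q}^k}/\hatbehavior - 1$, decompose the expectation under $\hat{\policy}^{k+1}$ into the expectation under $\pi_{\hat{Q}^k}$ plus a correction in which the ``$-1$'' cancels by normalization, bound that correction by $\max_{\ba:\hatbehavior>0}(\pi_{\hat{Q}^k}/\hatbehavior)\cdot\varepsilon$ via the total-variation assumption, and propagate the resulting per-state nonnegativity through the nonnegative matrix $(I-\gamma P^{\hat{\policy}^{k+1}})^{-1}$ (the paper phrases this last step as a recursive argument). Your explicit handling of the TV normalization convention and the cancellation of the constant term matches what the paper does implicitly.
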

The LHS of this inequality is equal to the amount of conservatism induced in the value, $\hat{V}^{k+1}$ in iteration $k+1$ of the CQL update,
if the learned policy were equal to soft-optimal policy for $\hat{Q}^k$, i.e., when $\hat{\policy}^{k+1} = \pi_{\hat{Q}^k}$. However, as the actual policy, $\hat{\policy}^{k+1}$, may be different, the RHS is the maximal amount of potential overestimation due to this difference. To get a lower bound, we require the amount of underestimation to be higher, which is obtained if $\varepsilon$ is small, i.e. the policy changes slowly.  

Our final result shows that CQL Q-function update is ``gap-expanding'', by which we mean that the difference in Q-values at in-distribution actions and over-optimistically erroneous out-of-distribution actions is higher than the corresponding difference under the actual Q-function. This implies that the policy $\policy^k(\ba|\bs) \propto \exp(\hat{Q}^k(\bs, \ba))$, is constrained to be closer to the dataset distribution, $\hatbehavior(\ba|\bs)$, thus the CQL update implicitly prevents the detrimental effects of OOD action and distribution shift, which has been a major concern in offline RL settings~\citep{kumar2019stabilizing,levine2020offline,fujimoto2018off}.
\begin{theorem}[CQL is gap-expanding] 
\label{thm:gap_amplify}
At any iteration $k$, CQL expands the difference in expected Q-values under the behavior policy $\behavior(\ba|\bs)$ and $\mu_k$, such that for large enough values of $\alpha_k$, we have that \mbox{$\forall \bs, ~\E_{\behavior(\ba|\bs)}[\hat{Q}^k(\bs, \ba)] - \E_{\mu_k(\ba|\bs)}[\hat{Q}^k(\bs, \ba)] > \E_{\behavior(\ba|\bs)}[{Q}^k(\bs, \ba)] - \E_{\mu_k(\ba|\bs)}[{Q}^k(\bs, \ba)]$}.
\end{theorem}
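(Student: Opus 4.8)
The plan is to carry out the argument in the tabular, no-function-approximation setting, exactly as in Theorems~\ref{thm:min_q_underestimates} and~\ref{thm:cql_underestimates}, and to reduce the claim to a single sign-definite quantity. First I would obtain a closed form for the CQL iterate by setting the gradient of the (fixed-$\mu_k$) objective in Equation~\ref{eqn:cql_framework} to zero. Because each entry $Q(\bs,\ba)$ may be varied independently, differentiating the dataset-weighted Bellman-error term together with the linear penalty and dividing out the state marginal $d^\behavior(\bs)$ gives
\[
\hat{Q}^{k}(\bs,\ba) = \bellman^{\policy}\hat{Q}^{k-1}(\bs,\ba) - \alpha_k\left(\frac{\mu_k(\ba|\bs)}{\hatbehavior(\ba|\bs)} - 1\right),
\]
while the corresponding penalty-free backup defines the ``actual'' iterate $Q^{k} = \bellman^{\policy}Q^{k-1}$.

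Subtracting the two recursions yields a clean recursion for $\Delta^k \coloneqq \hat{Q}^k - Q^k$, namely $\Delta^{k} = \gamma P^{\policy}\Delta^{k-1} - \alpha_k(\frac{\mu_k}{\hatbehavior} - 1)$: the first term propagates the accumulated difference through the discounted transition operator, and the second is the freshly injected conservative correction. The quantity the theorem must sign is precisely the difference of the two expected Q-gaps, which rearranges to the inner product $\sum_{\ba}(\behavior(\ba|\bs) - \mu_k(\ba|\bs))\,\Delta^k(\bs,\ba)$.

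The heart of the argument is then a single algebraic identity. Projecting the injected correction onto $\behavior - \mu_k$ in the sampling-error-free case ($\behavior = \hatbehavior$) gives $-\sum_{\ba}(\hatbehavior - \mu_k)(\frac{\mu_k}{\hatbehavior} - 1) = \sum_{\ba}\frac{(\hatbehavior - \mu_k)^2}{\hatbehavior} = D_{\chi^2}(\mu_k \,\|\, \hatbehavior)(\bs) \geq 0$, strictly positive whenever $\mu_k \neq \hatbehavior$. Thus the conservative term always contributes a nonnegative amount, scaling linearly in $\alpha_k$, to the gap expansion, and this is the term that drives the result.

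The main obstacle is the sign-indefinite propagation term $\gamma\sum_{\ba}(\behavior - \mu_k)(P^{\policy}\Delta^{k-1})(\bs,\ba)$, which carries forward the difference from earlier iterates and need not be positive. I would bound it in magnitude while holding the history (hence $\Delta^{k-1}$) fixed and treating the current coefficient $\alpha_k$ as the free parameter; since the positive $D_{\chi^2}$ contribution grows linearly in $\alpha_k$ but this propagation term does not depend on the current $\alpha_k$, taking $\alpha_k$ large enough forces the total to be strictly positive at every state, which is exactly the claimed inequality. I expect the most delicate bookkeeping to be reinstating the $\behavior$-versus-$\hatbehavior$ sampling-error discrepancy (which perturbs the clean identity by a cross term) and pinning down the explicit threshold on $\alpha_k$ that dominates the propagation term; the core positivity via the $\chi^2$ identity is the clean engine of the proof.
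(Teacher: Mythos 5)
Your proposal is correct and follows essentially the same route as the paper's proof: the closed-form tabular iterate, the observation that projecting the injected penalty onto $\behavior - \mu_k$ yields the nonnegative quantity $\sum_{\ba}(\behavior-\mu_k)^2/\hatbehavior$ (the paper's $\hat{\Delta}^k$, obtained there by showing the $\behavior$-projection of the penalty vanishes while the $\mu_k$-projection is nonnegative), and then choosing $\alpha_k$ large enough to dominate the sign-indefinite propagated term $(\behavior-\mu_k)^T\bellman^{\policy^k}(\hat{Q}^k - Q^k)$, with sampling error folded into the threshold at the end. Your explicit $\chi^2$ identity and the recursion for $\Delta^k=\hat{Q}^k-Q^k$ are merely a slightly tidier packaging of the identical computation.
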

When function approximation or sampling error makes OOD actions have higher learned Q-values, CQL backups are expected to be more robust, in that the policy is updated using Q-values that prefer in-distribution actions. 
As we will empirically show in Appendix~\ref{app:gap_amplify}, prior offline RL methods that do not explicitly constrain or regularize the Q-function may not enjoy such robustness properties.

\textbf{To summarize}, we showed that the CQL RL algorithm learns lower-bound Q-values with large enough $\alpha$, meaning that the final policy attains \emph{at least} the estimated value. We also showed that the Q-function is \emph{gap-expanding}, meaning that it should only ever \emph{over-estimate} the gap between in-distribution and out-of-distribution actions, preventing OOD actions.

\subsection{Safe Policy Improvement Guarantees}
In Section~\ref{sec:policy_eval} we proposed novel objectives for Q-function training such that the expected value of a policy under the resulting Q-function lower bounds the actual performance of the policy. In Section~\ref{sec:framework}, we used the learned conservative Q-function for policy improvement. In this section, we show that this procedure actually optimizes a well-defined objective and provide a safe policy improvement result for CQL, along the lines of Theorems 1 and 2 in \citet{laroche2017safe}.

To begin with, we define \emph{empirical return} of any policy $\policy$, ${J}(\policy, \hat{M})$, which is equal to the discounted return of a policy $\policy$ in the \emph{empirical} MDP, $\hat{M}$, that is induced by the transitions observed in the dataset $\mathcal{D}$, i.e. $\hat{M} = \{s, a, r, s' \in \mathcal{D}\}$. $J(\policy, M)$ refers to the expected discounted return attained by a policy $\policy$ in the actual underlying MDP, $M$. In Theorem~\ref{thm:well_defined_obj}, we first show that CQL (Equation~\ref{eqn:modified_policy_eval}) optimizes a well-defined penalized RL empirical objective. All proofs are found in Appendix~\ref{app:safe_pi}. 

\begin{theorem}
\label{thm:well_defined_obj}
Let $\hat{Q}^\pi$ be the fixed point of Equation~\ref{eqn:modified_policy_eval}, then $\policy^*(\ba|\bs) := \arg\max_{\policy} \mathbb{E}_{\bs \sim \rho(\bs)}[\hat{V}^\pi(\bs)]$ is equivalently obtained by solving:
\begin{equation}
\label{eqn:policy_optimality_main}
    \small{\policy^*(\ba|\bs) \leftarrow \arg \max_{\policy}~~ J(\pi, \hat{M}) - \alpha \frac{1}{1 - \gamma} \mathbb{E}_{\bs \sim d^\policy_{\hat{M}}(\bs)}\left[D_{\text{CQL}}(\policy, \hatbehavior)(\bs) \right]},
\end{equation}
where $D_{\text{CQL}}(\policy, \behavior)(\bs) := \sum_{\ba} \policy(\ba|\bs) \cdot \left(\frac{\policy(\ba|\bs)}{\behavior(\ba|\bs)} - 1 \right)$.
\end{theorem}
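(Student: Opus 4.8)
The plan is to solve for the fixed point $\hat{Q}^\policy$ of Equation~\ref{eqn:modified_policy_eval} explicitly in the tabular empirical MDP $\hat{M}$, recognize it as an ordinary policy-evaluation value function for a \emph{shifted reward}, and then convert $\E_{\bs \sim \rho}[\hat{V}^\policy(\bs)]$ into a discounted-occupancy integral so that the penalty separates out as the stated $D_{\text{CQL}}$ regularizer. Maximizing the converted expression over $\policy$ then matches Equation~\ref{eqn:policy_optimality_main} term by term.

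First I would differentiate the objective in Equation~\ref{eqn:modified_policy_eval} with respect to each entry $Q(\bs,\ba)$, exactly as in the proof of Theorem~\ref{thm:cql_underestimates}, but now with $\mu = \policy$. Since we work inside $\hat{M}$, the empirical operator $\hat{\bellman}^\policy$ coincides with the exact operator $\bellman^\policy$ of $\hat{M}$, and the data state-marginal $d^\behavior(\bs)$ together with $\hatbehavior(\ba|\bs)$ cancels upon dividing through, leaving the stationarity condition
\[
\hat{Q}^\policy(\bs,\ba) = \bellman^\policy \hat{Q}^\policy(\bs,\ba) - \alpha\left(\frac{\policy(\ba|\bs)}{\hatbehavior(\ba|\bs)} - 1\right),
\]
with $\bellman^\policy = r + \gamma P^\policy$ taken over the transitions of $\hat{M}$. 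The key observation is that this is precisely the Bellman consistency equation for $\policy$ under the modified reward $\tilde r(\bs,\ba) := r(\bs,\ba) - \alpha\big(\frac{\policy(\ba|\bs)}{\hatbehavior(\ba|\bs)} - 1\big)$, so $\hat{Q}^\policy = (I - \gamma P^\policy)^{-1}\tilde r$ is just the value function of $\policy$ in $\hat{M}$ with reward $\tilde r$.

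Next I would take expectations under $\policy(\ba|\bs)$ to form $\hat{V}^\policy$ and then under $\rho$, and invoke the standard discounted-occupancy identity $\E_{\bs \sim \rho}[V^\policy_{\hat M}(\bs)] = \frac{1}{1-\gamma}\E_{(\bs,\ba)\sim d^\policy_{\hat M}}[g(\bs,\ba)]$, valid for any reward $g$. Applying it to $\tilde r$ and using linearity splits the expression into $\frac{1}{1-\gamma}\E_{d^\policy_{\hat M}}[r] = J(\policy, \hat M)$ and $-\frac{\alpha}{1-\gamma}\E_{\bs \sim d^\policy_{\hat M}}\big[\sum_\ba \policy(\ba|\bs)(\frac{\policy(\ba|\bs)}{\hatbehavior(\ba|\bs)} - 1)\big] = -\frac{\alpha}{1-\gamma}\E_{\bs\sim d^\policy_{\hat M}}[D_{\text{CQL}}(\policy,\hatbehavior)(\bs)]$. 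Since $\policy^* = \arg\max_\policy \E_{\bs\sim\rho}[\hat{V}^\policy]$ by definition, and the right-hand side is exactly the penalized objective of Equation~\ref{eqn:policy_optimality_main}, the two optimization problems coincide, which is the claim.

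The main obstacle is bookkeeping around the \emph{policy dependence} of the shifted reward: $\tilde r$ contains $\policy/\hatbehavior$, so $\hat{Q}^\policy$ is the value of $\policy$ under a reward that itself depends on $\policy$. This is harmless for a \emph{fixed} $\policy$ during evaluation, since the occupancy identity is an exact equality for any fixed reward; but I must be careful that when finally maximizing over $\policy$ the penalty's $\policy$-dependence is carried through correctly (it is, because the equality holds for each $\policy$ and is not an approximation). The two secondary points to verify are that $d^\policy_{\hat M}$ is defined with start-state distribution $\rho$ and normalized by $(1-\gamma)$ so the prefactors line up with Equation~\ref{eqn:policy_optimality_main}, and that the stationarity computation indeed clears $d^\behavior(\bs)\hatbehavior(\ba|\bs)$ so that neither quantity appears in the fixed point.
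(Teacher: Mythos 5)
Your proposal is correct and follows exactly the route of the paper's own proof: the paper likewise observes that the fixed point of Equation~\ref{eqn:modified_policy_eval} is the Bellman fixed point in the empirical MDP $\hat{M}$ under the altered reward $r(\bs,\ba) - \alpha\left(\frac{\policy(\ba|\bs)}{\hatbehavior(\ba|\bs)} - 1\right)$, and concludes that maximizing $\E_{\bs\sim\rho}[\hat{V}^\policy(\bs)]$ is equivalent to the penalized objective. You simply fill in the details the paper leaves implicit (the stationarity computation and the discounted-occupancy identity), and your bookkeeping on the policy-dependent reward and the $\frac{1}{1-\gamma}$ prefactor is right.
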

Intuitively, Theorem~\ref{thm:well_defined_obj} says that CQL optimizes the return of a policy in the empirical MDP, $\hat{M}$, while also ensuring that the learned policy $\policy$ is not too different from the behavior policy, $\hatbehavior$ via a penalty that depends on $D_{\text{CQL}}$. Note that this penalty is implicitly introduced by virtue by the gap-expanding (Theorem~\ref{thm:gap_amplify}) behavior of CQL. Next, building upon Theorem~\ref{thm:well_defined_obj} and the analysis of CPO~\citep{achiam2017constrained}, we show that CQL provides a $\zeta$-safe policy improvement over $\hatbehavior$.   

\begin{theorem}
\label{thm:zeta_safe}
Let $\policy^*(\ba|\bs)$ be the policy obtained by optimizing Equation~\ref{eqn:policy_optimality_main}. Then, the policy $\policy^*(\ba|\bs)$ is a $\zeta$-safe policy improvement over $\hatbehavior$ in the actual MDP $M$, i.e., $J(\policy^*, M) \geq J(\hatbehavior, M) - \zeta$ with high probability $1 - \delta$, where $\zeta$ is given by,
\begin{multline}
    \label{eqn:performance_relation}
    \small{\zeta = 2\left({\frac{C_{r, \delta}}{1 - \gamma} + \frac{\gamma R_{\max} C_{T, \delta}}{(1 - \gamma)^2}}  \right) \mathbb{E}_{\bs \sim d^{\policy^*}_{\hat{M}}(\bs)}\left[ \frac{\sqrt{|\mathcal{A}|}}{\sqrt{|\mathcal{D}(\bs)|}} \sqrt{ D_{\text{CQL}}(\policy^*, \hatbehavior)(\bs) + 1} \right]} - \\ \small{\underbrace{\left( J(\policy^*, \hat{M}) - J(\hatbehavior, \hat{M})  \right)}_{\geq \alpha \frac{1}{1 - \gamma} \mathbb{E}_{\bs \sim d^{\policy^*}_{\hat{M}}(\bs)}\left[D_{\text{CQL}}(\policy^*, \hatbehavior)(\bs) \right]}.}
\end{multline}
\end{theorem}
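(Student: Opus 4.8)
The plan is to compare performance in the true MDP $M$ with performance in the empirical MDP $\hat{M}$ by inserting and subtracting the empirical returns, and then to control each cross-MDP gap with the reward/transition concentration properties assumed in the preliminaries. Concretely, I would start from the telescoping decomposition
\[
J(\policy^*, M) - J(\hatbehavior, M) = \big[J(\policy^*, M) - J(\policy^*, \hat{M})\big] + \big[J(\policy^*, \hat{M}) - J(\hatbehavior, \hat{M})\big] + \big[J(\hatbehavior, \hat{M}) - J(\hatbehavior, M)\big],
\]
so that establishing the $\zeta$-safe improvement reduces to (i) lower-bounding the empirical improvement (the middle bracket) and (ii) upper-bounding the two sampling-discrepancy brackets.

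The middle bracket is handled immediately by Theorem~\ref{thm:well_defined_obj}. Since $\policy^*$ maximizes $J(\policy, \hat{M}) - \frac{\alpha}{1-\gamma}\E_{d^\policy_{\hat{M}}}[D_{\text{CQL}}(\policy, \hatbehavior)]$ and $\hatbehavior$ is a feasible competitor with $D_{\text{CQL}}(\hatbehavior, \hatbehavior) \equiv 0$, evaluating the objective at $\policy = \hatbehavior$ and rearranging yields $J(\policy^*, \hat{M}) - J(\hatbehavior, \hat{M}) \geq \frac{\alpha}{1-\gamma}\E_{d^{\policy^*}_{\hat{M}}}[D_{\text{CQL}}(\policy^*, \hatbehavior)]$, which is exactly the quantity in the underbrace of Equation~\ref{eqn:performance_relation}. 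Because this term enters $\zeta$ with a negative sign, a larger empirical improvement only tightens the guarantee.

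For the discrepancy brackets, the key lemma is a bound on $|J(\policy, M) - J(\policy, \hat{M})|$ for a fixed policy evaluated across the two MDPs. Splitting the per-state-action error into a reward piece of size $C_{r,\delta}/\sqrt{|\mathcal{D}(\bs,\ba)|}$ and a transition piece, then propagating the latter through $(I - \gamma P^\pi)^{-1}$ and bounding values by $R_{\max}/(1-\gamma)$, produces the prefactors $\frac{C_{r,\delta}}{1-\gamma}$ and $\frac{\gamma R_{\max} C_{T,\delta}}{(1-\gamma)^2}$ multiplying an expectation of $1/\sqrt{|\mathcal{D}(\bs,\ba)|}$ under $d^\policy_{\hat{M}}$. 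The crucial technical step is to convert state--action counts into state counts: writing $|\mathcal{D}(\bs,\ba)| = |\mathcal{D}(\bs)|\,\hatbehavior(\ba|\bs)$ and applying Cauchy--Schwarz,
\[
\sum_\ba \frac{\policy(\ba|\bs)}{\sqrt{|\mathcal{D}(\bs,\ba)|}} = \frac{1}{\sqrt{|\mathcal{D}(\bs)|}} \sum_\ba \frac{\policy(\ba|\bs)}{\sqrt{\hatbehavior(\ba|\bs)}} \leq \frac{\sqrt{|\mathcal{A}|}}{\sqrt{|\mathcal{D}(\bs)|}} \sqrt{\sum_\ba \frac{\policy(\ba|\bs)^2}{\hatbehavior(\ba|\bs)}} = \frac{\sqrt{|\mathcal{A}|}}{\sqrt{|\mathcal{D}(\bs)|}} \sqrt{D_{\text{CQL}}(\policy, \hatbehavior)(\bs) + 1},
\]
which is precisely the form appearing in $\zeta$. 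Applying this with $\policy = \policy^*$ yields the stated concentration term, and the leading factor of $2$ absorbs the analogous discrepancy bracket for $\hatbehavior$ (which is strictly smaller, since $D_{\text{CQL}}(\hatbehavior,\hatbehavior)=0$ forces $\sqrt{D_{\text{CQL}}+1}=1$ there).

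Assembling the pieces — substituting the empirical-improvement lower bound and the two discrepancy upper bounds into the decomposition, and taking a union bound over the reward and transition concentration events to obtain overall probability $1-\delta$ — gives $J(\policy^*, M) - J(\hatbehavior, M) \geq -\zeta$ with the claimed $\zeta$. I expect the main obstacle to be the cross-MDP discrepancy lemma above: correctly propagating the transition estimation error through the discounted value function (so that the transition term carries the extra $\frac{\gamma R_{\max}}{1-\gamma}$ factor relative to the reward term) and tracking the visitation distribution $d^\policy_{\hat{M}}$ under which the per-state errors are averaged. The Cauchy--Schwarz conversion is short once set up, but reconciling the visitation weighting and the union bound with the per-state-action concentration stated in the preliminaries is where the accounting must be done most carefully.
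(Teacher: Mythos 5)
Your proposal follows essentially the same route as the paper's proof: the same three-term decomposition across $M$ and $\hat{M}$, the same use of the optimality of $\policy^*$ for the penalized empirical objective with $\hatbehavior$ as the feasible competitor, the same cross-MDP discrepancy lemma split into reward and transition pieces with the Cauchy--Schwarz conversion $\sum_{\ba}\policy(\ba|\bs)/\sqrt{\hatbehavior(\ba|\bs)} \leq \sqrt{|\mathcal{A}|}\sqrt{D_{\text{CQL}}(\policy,\hatbehavior)(\bs)+1}$, and the same justification of the factor of $2$ via $D_{\text{CQL}}(\hatbehavior,\hatbehavior)=0$. The one part you flag as delicate --- propagating the transition error through the discounted visitation distribution --- is handled in the paper by a CPO-style bound on $\|d^\policy_{\hat{M}} - d^\policy_{M}\|_1$, which matches your plan of working through $(I-\gamma P^\policy)^{-1}$.
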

The expression of $\zeta$ in Theorem~\ref{thm:zeta_safe} consists of two terms: the first term captures the decrease in policy performance in $M$, that occurs due to the mismatch between $\hat{M}$ and $M$, also referred to as \emph{sampling error}. The second term captures the increase in policy performance due to CQL in empirical MDP, $\hat{M}$. The policy $\policy^*$ obtained by optimizing $\policy$ against the CQL Q-function improves upon the behavior policy, $\hatbehavior$ for suitably chosen values of $\alpha$. When sampling error is small, i.e., $|\mathcal{D}(\bs)|$ is large, then smaller values of $\alpha$ are enough to provide an improvement over the behavior policy. 

\textbf{To summarize,} CQL optimizes a well-defined, penalized empirical RL objective, and performs high-confidence safe policy improvement over the behavior policy. The extent of improvement is negatively influenced by higher sampling error, which decays as more samples are observed.  

\vspace{-7pt}
\section{Practical Algorithm and Implementation Details}
\label{sec:practical_alg}
\vspace{-7pt}
\begin{wrapfigure}{r}{0.6\textwidth}
\begin{small}
\vspace{-24pt}
\begin{minipage}[t]{0.99\linewidth}
\begin{algorithm}[H]
\small
\caption{Conservative Q-Learning (both variants)}
\label{alg:practical_alg}
\begin{algorithmic}[1]
    \STATE Initialize Q-function, $Q_\theta$, and optionally a policy, $\pi_\phi$.
    \FOR{step $t$ in \{1, \dots, N\}}
        \STATE Train the Q-function using $G_Q$ gradient steps on objective from Equation~\ref{eqn:practical_objective} \\
        \mbox{$\theta_t := \theta_{t-1} - \eta_Q \nabla_\theta \textcolor{red}{\text{CQL}(\mathcal{R})(\theta)}$}\\
        (Use $\bellman^*$ for Q-learning, $\bellman^{\policy_{\phi_t}}$ for actor-critic)
        \STATE \underline{(only with actor-critic)} Improve policy $\pi_\phi$ via $G_\pi$ gradient steps on $\phi$ with SAC-style entropy regularization:\\
        \mbox{$\phi_{t} := \phi_{t-1} + \eta_\pi \mathbb{E}_{\bs \sim \mathcal{D}, \ba \sim \pi_\phi(\cdot|\bs)}[Q_\theta(\bs, \ba)\! -\! \log \pi_\phi(\ba|\bs)] $}
    \ENDFOR
\end{algorithmic}
\end{algorithm}
\end{minipage}
\vspace{-14pt}
\end{small}
\end{wrapfigure}
We now describe two practical offline deep reinforcement learning methods based on CQL: an actor-critic variant and a Q-learning variant. Pseudocode is shown in Algorithm~\ref{alg:practical_alg}, with differences from conventional actor-critic algorithms (e.g., SAC~\citep{haarnoja}) and deep Q-learning algorithms (e.g., DQN~\citep{mnih2013playing}) in red.
Our algorithm uses the CQL($\mathcal{H}$) (or CQL($\mathcal{R}$) in general) objective from the CQL framework for training the Q-function $Q_\theta$, which is parameterized by a neural network with parameters $\theta$. For the actor-critic algorithm, a policy $\pi_\phi$ is trained as well. Our algorithm modifies the objective for the Q-function (swaps out Bellman error with CQL($\mathcal{H}$)) or CQL($\rho$)
in a standard actor-critic or Q-learning setting, as shown in Line 3. As discussed in Section~\ref{sec:framework}, due to the explicit penalty on the Q-function, CQL methods do not use a policy constraint,
unlike prior offline RL methods~\citep{kumar2019stabilizing,wu2019behavior,siegel2020keep,levine2020offline}.
Hence, we do not require fitting an additional behavior policy estimator, simplifying our method. %

\textbf{Implementation details.} Our algorithm requires an addition of only \textbf{20} lines of code on top of standard implementations of soft actor-critic (SAC)~\citep{haarnoja} for continuous control experiments and on top of QR-DQN~\citep{dabney2018distributional} for the discrete control experiments. The tradeoff factor, $\alpha$
is automatically tuned via Lagrangian dual gradient descent for continuous control, and is fixed at constant values described in Appendix~\ref{sec:experimental_details} for discrete control. We use default hyperparameters from SAC, except that the learning rate for the policy is chosen to be 3e-5 (vs 3e-4 or 1e-4 for the Q-function), as dictated by Theorem~\ref{thm:cql_underestimation}. Elaborate details are provided in Appendix~\ref{sec:experimental_details}.  

\vspace{-8pt}
\section{Related Work}
\label{sec:related}
\vspace{-8pt}
We now briefly discuss prior work in offline RL and off-policy evaluation, comparing and contrasting these works with our approach. More technical discussion of related work is provided in Appendix~\ref{sec:extended_related_work}.

\textbf{Off-policy evaluation (OPE).} Several different paradigms have been used to perform off-policy evaluation. Earlier works~\citep{precup2000eligibility,peshkin2002learning,precup2001off} used per-action importance sampling on Monte-Carlo returns to obtain an OPE return estimator. Recent approaches~\citep{liu2018breaking,gelada2019off,nachum2019dualdice,Zhang2020GenDICE:} use marginalized importance sampling by directly estimating the state-distribution importance ratios via some form of dynamic programming~\citep{levine2020offline} and typically exhibit less variance than per-action importance sampling at the cost of bias. Because these methods use dynamic programming, they can suffer from OOD actions
~\citep{levine2020offline,gelada2019off,hallak2017consistent,nachum2019dualdice}. In contrast, the regularizer in CQL explicitly addresses the impact of OOD actions due to its gap-expanding behavior, and obtains conservative value estimates.

\textbf{Offline RL.} As discussed in Section~\ref{sec:background}, offline Q-learning methods suffer from issues pertaining to OOD actions. Prior works have attempted to solve this problem by constraining the learned policy to be ``close'' to the behavior policy, for example as measured by  KL-divergence~\citep{jaques2019way,wu2019behavior,peng2019awr,siegel2020keep}, Wasserstein distance~\citep{wu2019behavior}, or MMD~\citep{kumar2019stabilizing}, and then 
only using actions sampled from this constrained policy in the Bellman backup or applying a value penalty. SPIBB~\citep{laroche2017safe,nadjahi2019safe} methods bootstrap using the behavior policy in a Q-learning algorithm for unseen actions. Most of these methods require a separately estimated model to the behavior policy, $\behavior(\ba|\bs)$~\citep{fujimoto2018off,kumar2019stabilizing,wu2019behavior,jaques2019way,siegel2020keep,simao2019safe}, and {are thus limited by their ability to accurately estimate the unknown behavior policy~\citep{nair2020accelerating}, which might be especially complex in settings where the data is collected from multiple sources~\citep{levine2020offline}. In contrast, CQL does not require estimating the behavior policy.} Prior work has explored some forms of Q-function penalties~\citep{hester2018deep,vecerik2017leveraging}, but only in the standard online RL setting with demonstrations. \citet{luo2019learning} learn a conservatively-extrapolated value function by enforcing a linear extrapolation property over the state-space, and a learned dynamics model to obtain policies for goal-reaching tasks. \citet{kakade2002approximately} proposed the CPI algorithm, that improves a policy conservatively in online RL.

Alternate prior approaches to offline RL estimate some sort of uncertainty to determine the trustworthiness of a Q-value prediction~\citep{kumar2019stabilizing,agarwal2019optimistic,levine2020offline}, typically using uncertainty estimation techniques from exploration in online RL~\citep{osband2016deep,jaksch2010near,osband2017posterior,burda2018exploration}. These methods have not been generally performant in offline RL~\citep{fujimoto2018off,kumar2019stabilizing,levine2020offline} due to the high-fidelity requirements of uncertainty estimates in offline RL~\citep{levine2020offline}. Robust MDPs~\citep{iyengar2005robust,ghavamzadeh2016safe,tamar2014scaling,nilim2004robustness} have been a popular theoretical abstraction for offline RL, but tend to be highly conservative in policy improvement. We expect CQL to be less conservative since CQL does not underestimate Q-values for all state-action tuples. Works on high confidence policy improvement~\citep{thomas2015high} provides safety guarantees for improvement but tend to be conservative.   
The gap-expanding property of CQL backups, shown in Theorem~\ref{thm:gap_amplify}, is related to how gap-increasing Bellman backup operators~\citep{bellemare2016increasing,lu2018general} are more robust to estimation error in online RL. 

\textbf{Theoretical results.} 
Our theoretical results (Theorems~\ref{thm:well_defined_obj}, \ref{thm:zeta_safe}) are related to prior work on safe policy improvement~\citep{laroche2017safe,ghavamzadeh2016safe}, and a direct comparison to Theorems 1 and 2 in \citet{laroche2017safe} suggests similar quadratic dependence on the horizon and an inverse square-root dependence on the counts. Our bounds improve over the $\infty$-norm bounds in \citet{ghavamzadeh2016safe}. Prior analyses have also focused on error propagation in approximate dynamic programming~\citep{farahmand2010error,chen2019information,xie2020q,scherrer2014approximate,kumar2019stabilizing}, and generally obtain bounds in terms of concentrability coefficients that capture the effect of distribution shift.

\vspace{-10pt}
\section{Experimental Evaluation}
\label{sec:experiments}
\vspace{-8pt}

We compare CQL to prior offline RL methods on a range of domains and dataset compositions, including continuous and discrete action spaces, state observations of varying dimensionality, and high-dimensional image inputs. We first evaluate actor-critic CQL, using CQL($\mathcal{H}$) from Algorithm~\ref{alg:practical_alg}, on continuous control datasets from the D4RL benchmark~\citep{d4rl}.
We compare to: prior offline RL methods that use a policy constraint -- BEAR~\citep{kumar2019stabilizing} and BRAC~\citep{wu2019behavior}; SAC~\citep{haarnoja}, an off-policy actor-critic method that we adapt to offline setting; and behavioral cloning (BC). 

\begin{table*}[h]
\captionsetup{font=small}
\centering
\fontsize{8}{8}\selectfont

\begin{tabular}{l|r|r|r|r|r||r}
\hline
\textbf{Task Name} & \textbf{SAC} & \textbf{BC} & \textbf{BEAR} & \textbf{BRAC-p} & \textbf{BRAC-v} & \textbf{CQL($\mathcal{H}$)}\\ \hline
halfcheetah-random &  30.5 & 2.1 & 25.5 & 23.5 & 28.1 & \textbf{35.4} \\
hopper-random & \textbf{11.3} & 9.8 & 9.5 & \textbf{11.1} & \textbf{12.0} & \textbf{10.8}\\
walker2d-random & 4.1 & 1.6 & \textbf{6.7} & 0.8 & 0.5 & \textbf{7.0}\\ \hline
halfcheetah-medium & -4.3 & 36.1 & 38.6 & \textbf{44.0} & \textbf{45.5} & \textbf{44.4}\\
walker2d-medium & 0.9 & 6.6  & 33.2 & 72.7 & \textbf{81.3} & {79.2}\\
hopper-medium & 0.8 & 29.0 & 47.6 & 31.2 & 32.3 & \textbf{58.0}\\ \hline
halfcheetah-expert & -1.9 & \textbf{107.0} & \textbf{108.2} & 3.8 & -1.1 & {104.8}\\
hopper-expert & 0.7 & \textbf{109.0} & \textbf{110.3} & 6.6 & 3.7 & \textbf{109.9} \\
walker2d-expert & -0.3 & 125.7 & 106.1 & -0.2 & -0.0 & \textbf{153.9} \\ \hline
halfcheetah-medium-expert & 1.8 & 35.8 & 51.7 & 43.8 & 45.3 & \textbf{62.4}\\
walker2d-medium-expert & 1.9 & 11.3 & 10.8 & -0.3 & 0.9 & \textbf{98.7}\\
hopper-medium-expert & 1.6 & \textbf{111.9} & 4.0 & 1.1 & 0.8 & \textbf{111.0}\\ \hline
halfcheetah-random-expert & 53.0 & 1.3 & 24.6 & 30.2 & 2.2 & \textbf{92.5}\\
walker2d-random-expert & 0.8 & 0.7 & 1.9 & 0.2 & 2.7 & \textbf{91.1}\\
hopper-random-expert & 5.6 & 10.1 & 10.1 & 5.8 & 11.1 & \textbf{110.5} \\ \hline
halfcheetah-mixed & -2.4 & 38.4 & 36.2 & \textbf{45.6} & \textbf{45.9} & \textbf{46.2}\\
hopper-mixed & 3.5 & 11.8 & 25.3 & 0.7 & 0.8 & \textbf{48.6}\\
walker2d-mixed & 1.9 & 11.3 & 10.8 & -0.3 & 0.9 & \textbf{26.7}\\
\hline
\end{tabular}
\vspace{-4pt}
\caption{\label{table:mujoco}{\small Performance of CQL($\mathcal{H}$) and prior methods on gym domains from D4RL, on the normalized return metric, averaged over 4 seeds. Note that CQL performs similarly or better than the best prior method with simple datasets, and greatly outperforms prior methods with complex distributions (``--mixed'', ``--random-expert'', ``--medium-expert'').}}
\normalsize
\vspace{-10pt}
\end{table*}

\textbf{Gym domains.} Results for the gym domains are shown in Table~\ref{table:mujoco}. The results for BEAR, BRAC, SAC, and BC are based on numbers reported by \citet{d4rl}. On the datasets generated from a single policy, marked as ``-random'', ``-expert'' and ``-medium'', CQL roughly matches or exceeds
the best prior methods, but by a small margin. However, on datasets that combine multiple policies (``-mixed'', ``-medium-expert'' and ``-random-expert''), that are more likely to be common in practical datasets, CQL outperforms prior methods by large margins, sometimes as much as \textbf{2-3x}.

\textbf{Adroit tasks.} The more complex Adroit~\citep{rajeswaran2018dapg} tasks (shown on the right) in D4RL require controlling a 24-DoF robotic hand, using limited data from human demonstrations. These tasks are
substantially more difficult than the gym tasks in terms of both the dataset composition and high dimensionality. Prior offline RL methods generally struggle to learn meaningful behaviors on 
\begin{wrapfigure}{r}{0.18\textwidth}
  \vspace{-13pt}
  \begin{center}
    \includegraphics[width=0.97\linewidth]{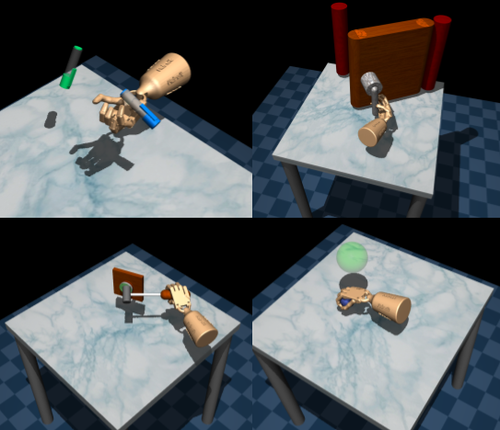}
  \end{center}
  \vspace{-19pt}
\end{wrapfigure}
these tasks, and the strongest baseline is BC. As shown in Table~\ref{table:adroit_antmaze}, CQL variants are the only methods that improve over BC, attaining scores that are \textbf{2-9x} those of the next best offline RL method. CQL($\rho$) with $\rho = \hat{\policy}^{k-1}$ (the previous policy) outperforms CQL($\mathcal{H}$) on a number of these tasks, due to the higher action dimensionality resulting in higher variance for the CQL($\mathcal{H}$) importance weights. Both variants outperform prior methods.

\begin{table}[H]
\captionsetup{font=small}
\vspace{-5pt}
\centering
\fontsize{8}{8}\selectfont
\begin{tabular}{l|l|r|r|r|r|r||r|r}
\hline
\textbf{Domain} & \textbf{Task Name} & \textbf{BC} & \textbf{SAC} & \textbf{BEAR} & \textbf{BRAC-p} & \textbf{BRAC-v} & \textbf{CQL($\mathcal{H}$)} & \textbf{CQL($\rho$)}\\ \hline
\multirow{6}*{AntMaze}
& antmaze-umaze & 65.0 & 0.0 & \textbf{73.0} & 50.0 & 70.0 & \textbf{74.0} & \textbf{73.5}\\
& antmaze-umaze-diverse  & 55.0 & 0.0 & 61.0 & 40.0 & 70.0 & \textbf{84.0} & 61.0\\
& antmaze-medium-play  & 0.0 & 0.0 & 0.0 & 0.0 & 0.0 & \textbf{61.2} & 4.6 \\
& antmaze-medium-diverse  & 0.0 & 0.0 & 8.0 & 0.0 & 0.0 & \textbf{53.7} & 5.1 \\
& antmaze-large-play & 0.0 & 0.0 & 0.0 & 0.0 & 0.0 & \textbf{15.8} &  3.2\\
& antmaze-large-diverse & 0.0 & 0.0 & 0.0 & 0.0 & 0.0 & \textbf{14.9} & 2.3 \\
\hline
\multirow{8}*{Adroit}
& pen-human  & 34.4 & 6.3 & -1.0 & 8.1 & 0.6 & 37.5 & \textbf{55.8}\\
& hammer-human & 1.5 & 0.5 & 0.3 & 0.3 & 0.2 & \textbf{4.4} & {2.1}\\
& door-human & 0.5 & 3.9 & -0.3 & -0.3 & -0.3 & \textbf{9.9} & \textbf{9.1} \\
& relocate-human & 0.0 & 0.0 & -0.3 & -0.3 & -0.3 & 0.20 & \textbf{0.35}\\
& pen-cloned  & \textbf{56.9} & 23.5 & 26.5 & 1.6 & -2.5 & 39.2 & 40.3\\
& hammer-cloned & 0.8 & 0.2 & 0.3 & 0.3 & 0.3 & 2.1 & \textbf{5.7} \\
& door-cloned & -0.1 & 0.0 & -0.1 & -0.1 & -0.1 & 0.4 & \textbf{3.5}\\
& relocate-cloned & \textbf{-0.1} & -0.2 & -0.3 & -0.3 & -0.3 & \textbf{-0.1} & \textbf{-0.1}\\
\hline
\multirow{3}*{Kitchen}
& kitchen-complete & 33.8 & 15.0 & 0.0 & 0.0 & 0.0 & \textbf{43.8} & 31.3\\
& kitchen-partial & 33.8 & 0.0 & 13.1 & 0.0 & 0.0 & \textbf{49.8} & \textbf{50.1} \\
& kitchen-undirected & 47.5 & 2.5 & 47.2 & 0.0 & 0.0 & \textbf{51.0} & \textbf{52.4} \\ \hline
\end{tabular}
\caption{\label{table:adroit_antmaze}{Normalized scores of all methods on AntMaze, Adroit, and kitchen domains from D4RL, averaged across 4 seeds. On the harder mazes, CQL is the \textit{only} method that attains non-zero returns, and is the only method to outperform simple behavioral cloning on Adroit tasks with human demonstrations.
We observed that the CQL($\rho$) variant, which avoids importance weights, trains more stably, with no sudden fluctuations in policy performance over the course of training, on the higher-dimensional Adroit tasks.}}
\normalsize
\vspace{-22pt}
\end{table}

\textbf{AntMaze.} These D4RL tasks require composing parts of suboptimal trajectories to form more optimal policies for reaching goals on a MuJoco Ant robot. 
Prior methods make some progress on the simpler U-maze, but only CQL is able to make meaningful progress  
on the much harder medium and large mazes, outperforming prior methods by a very wide margin.

\begin{wrapfigure}{r}{0.24\textwidth}
  \vspace{-19pt}
  \begin{center}
  \includegraphics[width=0.99\linewidth]{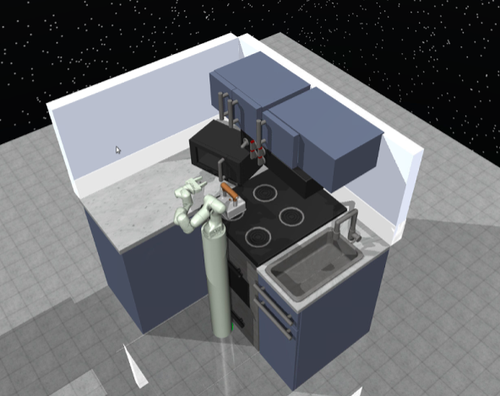}
  \end{center}
  \vspace{-19pt}
\end{wrapfigure}
\textbf{Kitchen tasks.} Next, we evaluate CQL on the Franka kitchen domain~\citep{gupta2019relay} from D4RL~\citep{d4rl_repo}.
The goal is to control a 9-DoF robot to manipulate multiple objects (microwave, kettle, etc.) \textit{sequentially}, in a single episode to reach a desired configuration, with only sparse 0-1 completion reward for every object that attains the target configuration. These tasks are especially challenging, since they require composing parts of trajectories, precise long-horizon manipulation, and handling human-provided teleoperation data. As shown in Table~\ref{table:adroit_antmaze}, CQL outperforms prior methods in this setting, and is the only method that outperforms behavioral cloning, attaining over \textbf{40\%} success rate on all tasks.

\textbf{Offline RL on Atari games.} Lastly, we evaluate a discrete-action Q-learning variant of CQL (Algorithm~\ref{alg:practical_alg}) on offline, image-based Atari games~\citep{bellemare2013arcade}. We compare CQL to REM~\citep{agarwal2019optimistic} and QR-DQN~\citep{dabney2018distributional} on the five Atari tasks (Pong, Breakout, Qbert, Seaquest and Asterix) that are evaluated in detail by \citet{agarwal2019optimistic}, using the dataset released by the authors. 

\begin{figure*}[h]
\vspace{-12pt}
\begin{center}
  \includegraphics[width=0.23\linewidth]{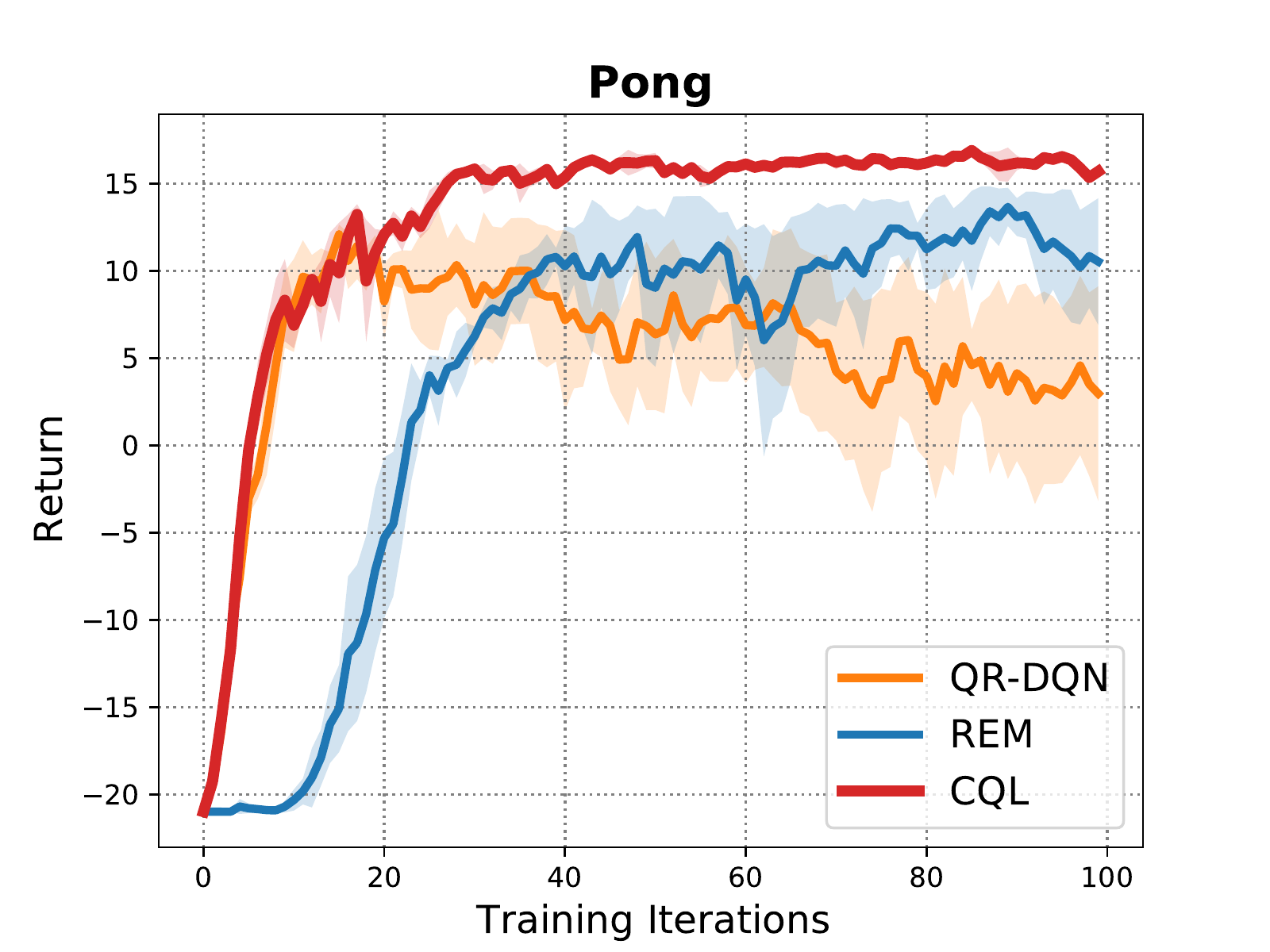}  
  \includegraphics[width=.23\linewidth]{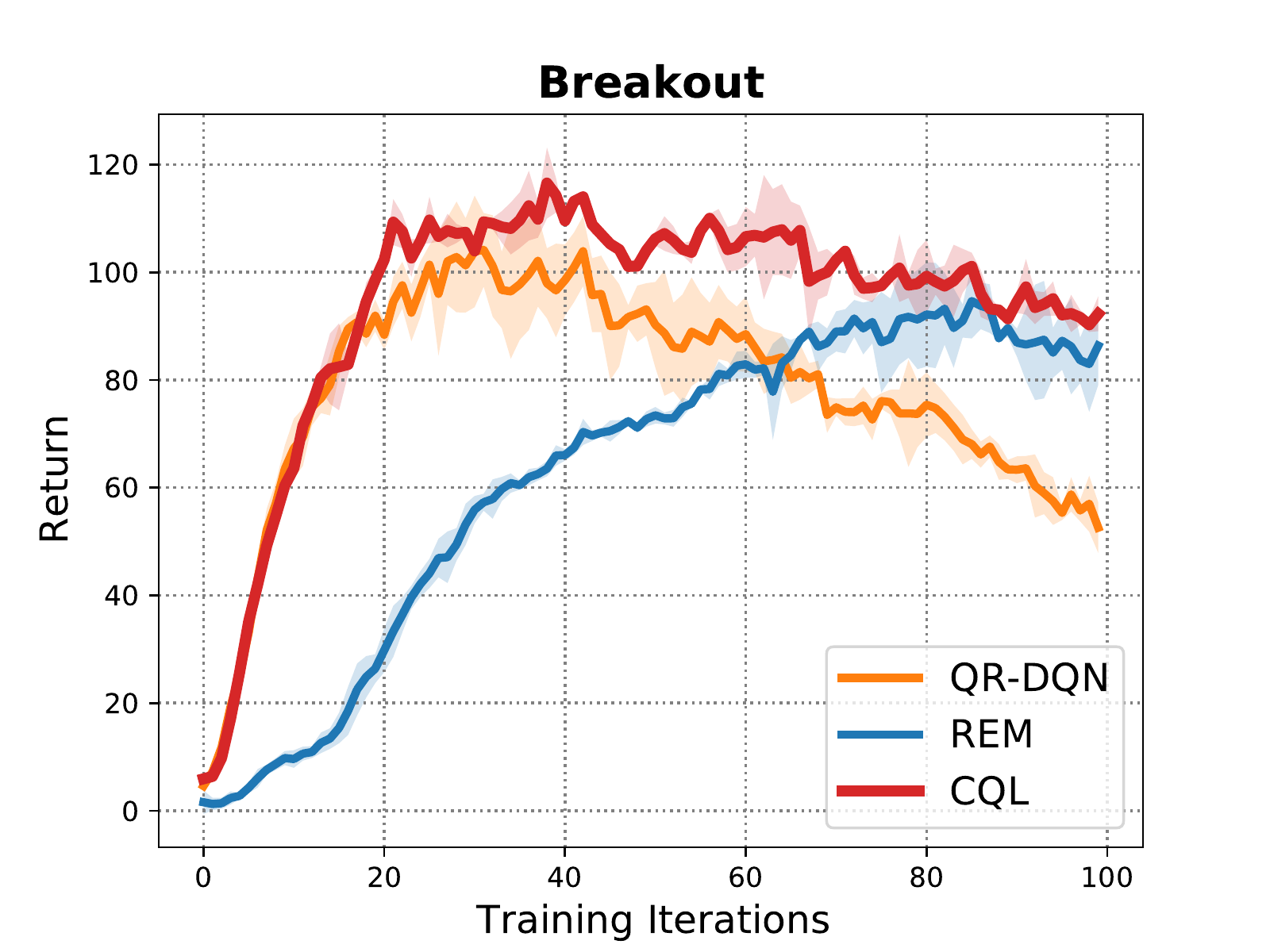}  
  \includegraphics[width=.23\linewidth]{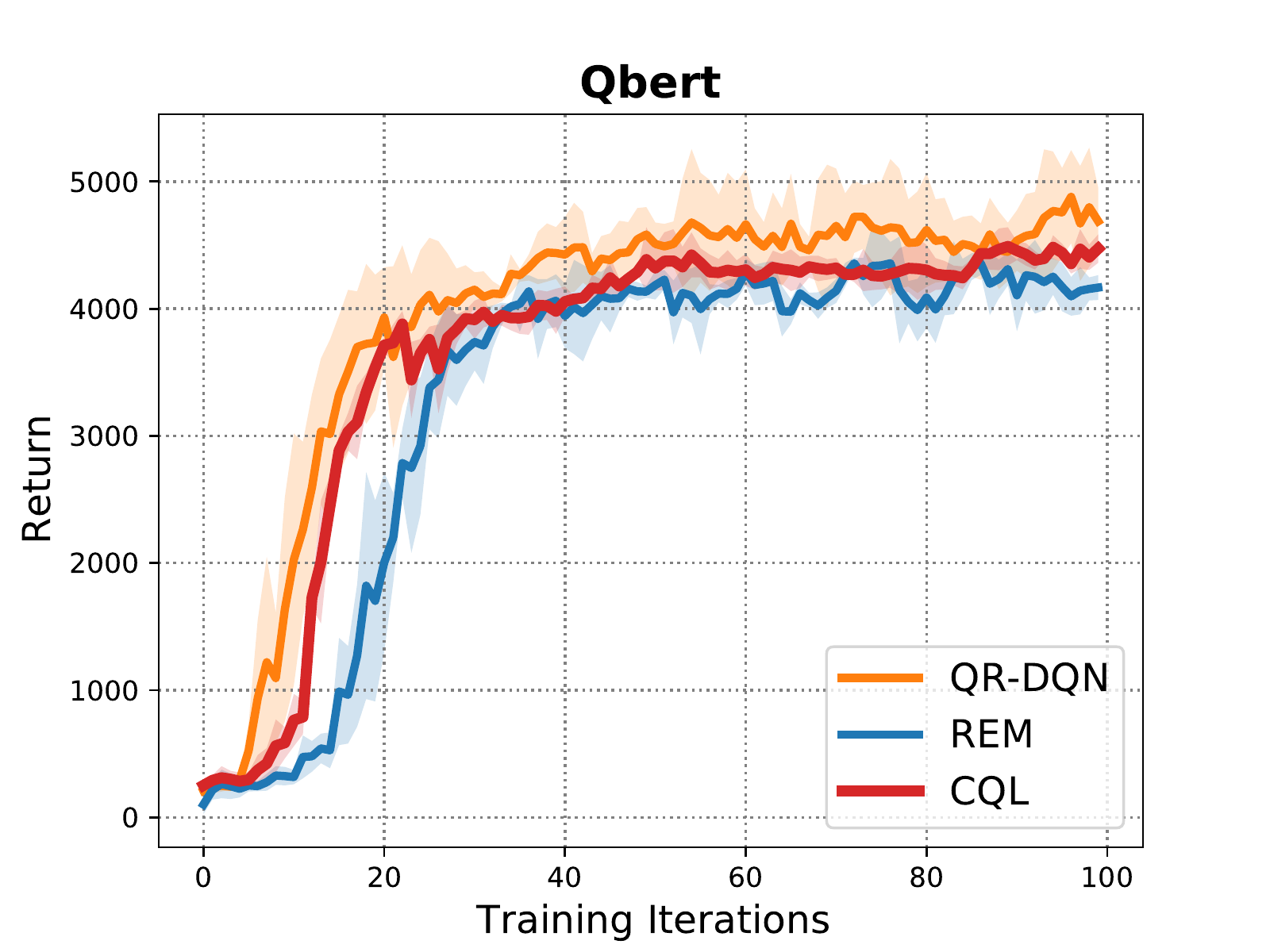}  
  \includegraphics[width=.23\linewidth]{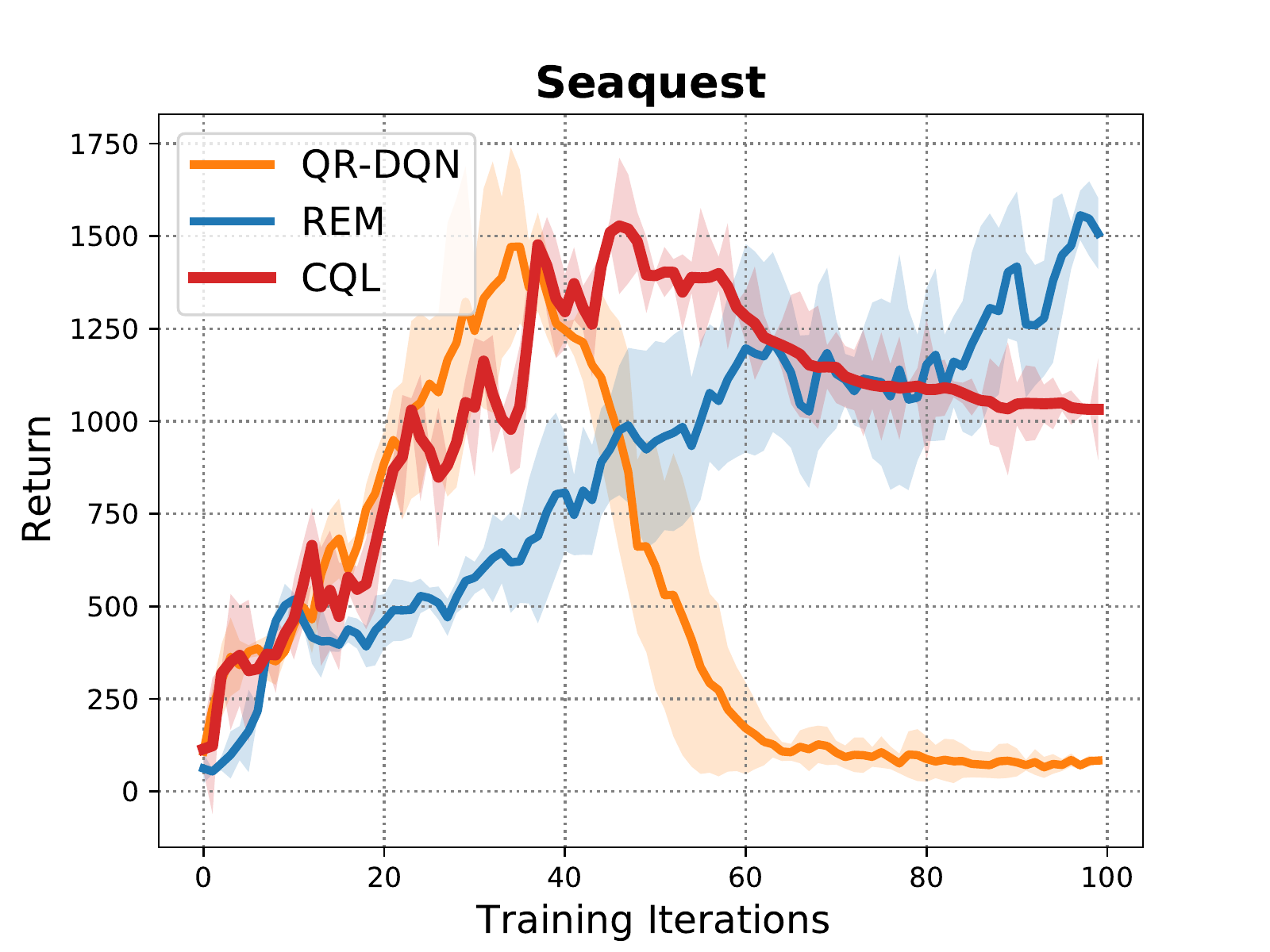}  
\end{center}
\caption{\label{fig:cql_20m_atari}{\footnotesize Performance of CQL, QR-DQN and REM as a function of training steps (x-axis) in setting \textbf{(1)} when provided with only the first 20\% of the samples of an online DQN run. Note that CQL is able to learn stably on 3 out of 4 games, and its performance does not degrade as steeply as QR-DQN on Seaquest$^*$.}}
\vspace{-12pt}
\end{figure*}
Following the evaluation protocol of \citet{agarwal2019optimistic}, we evaluated on two types of datasets, both of which were generated from the DQN-replay dataset, released by~\citep{agarwal2019optimistic}:
\textbf{(1)} a dataset consisting of the first 20\% of the samples observed by an online DQN agent and 
\textbf{(2)} datasets consisting of only 1\%  and 10\% of all samples observed by an online DQN agent (Figures 6 and 7 in \citep{agarwal2019optimistic}). In setting \textbf{(1)}, shown in Figure~\ref{fig:cql_20m_atari}, CQL generally achieves similar or better performance throughout as QR-DQN and REM. When only using only 1\% or 10\% of the data, in setting \textbf{(2)} (Table~\ref{table:atari_reduced_size}),  CQL 
\begin{wraptable}{r}{6.0cm}
\captionsetup{font=small}
    \centering
    \vspace{-7pt}
    \fontsize{8}{8}\selectfont
    \begin{tabular}{l|r|r||r}
    \hline
        \textbf{Task Name} & \textbf{QR-DQN} & \textbf{REM} & \textbf{CQL($\mathcal{H}$)} \\
        \hline
         Pong (1\%) & -13.8 & -6.9 & \textbf{19.3} \\
         Breakout & 7.9 & 11.0 & \textbf{61.1} \\
         Q*bert & 383.6 & 343.4 & \textbf{14012.0} \\
         Seaquest & 672.9 & 499.8 & \textbf{779.4} \\
         Asterix  & 166.3 & 386.5 & \textbf{592.4}\\
         \hline
         \hline
         Pong (10\%) & 15.1 & 8.9 & \textbf{18.5} \\
         Breakout & 151.2 & 86.7 & \textbf{269.3} \\
         Q*bert & 7091.3 & 8624.3 & \textbf{13855.6}\\
         Seaquest & 2984.8 & \textbf{3936.6} & 3674.1 \\
         Asterix & \textbf{189.2} & 75.1 & 156.3 \\
         \hline
    \end{tabular}
    \vspace{-4pt}
    \caption{{CQL, REM and QR-DQN in setting \textbf{(1)} with 1\% data (top), and 10\% data (bottom). CQL drastically outperforms prior methods with 1\% data, and usually attains better performance with 10\% data.}}
    \normalsize
    \label{table:atari_reduced_size}
    \vspace{-17pt}
\end{wraptable}
substantially outperforms REM and QR-DQN, especially in the harder 1\% condition, achieving \textbf{36x} and \textbf{6x} times the return of the best prior method on Q*bert and Breakout, respectively.

\textbf{Analysis of CQL.} Finally, we perform empirical evaluation to verify that CQL indeed lower-bounds the value function, thus verifying Theorems~\ref{thm:cql_underestimates}, Appendix~\ref{thm:policy_eval_func_approx} empirically. To this end, we estimate the average value of the learned policy predicted by CQL, $\E_{\bs \sim \mathcal{D}}[\hat{V}^k(\bs)]$, and report the difference against the actual discounted return of the policy $\policy^{k}$ in Table~\ref{table:cql_lower_bound}. We also estimate these values for baselines, including the minimum predicted Q-value under an ensemble~\citep{haarnoja,fujimoto2018addressing}
of Q-functions with varying ensemble sizes, which is a standard technique to prevent overestimed Q-values~\citep{fujimoto2018addressing,haarnoja,hasselt2010double} and BEAR~\citep{kumar2019stabilizing}, a policy constraint method. The results show that CQL learns a lower bound for all three tasks, whereas the baselines are prone to overestimation. We also evaluate a variant of CQL that uses Equation~\ref{eqn:objective_1}, and observe that the resulting values are lower (that is, underestimate the true values) as compared to CQL($\mathcal{H}$). This provides empirical evidence that CQL($\mathcal{H}$) attains a tighter lower bound than the point-wise bound in Equation~\ref{eqn:objective_1}, as per Theorem~\ref{thm:cql_underestimates}.

\begin{table}[h]
\captionsetup{font=small}
\centering
\vspace{-5pt}
    \fontsize{8}{8}\selectfont
    \begin{tabular}{l|r|r||r|r|r|r|r}
    \hline
        \textbf{Task Name} & \textbf{CQL($\mathcal{H})$} & \textbf{CQL (Eqn.~\ref{eqn:objective_1})} & \textbf{Ensemble(2)}  & \textbf{Ens.(4)} & \textbf{Ens.(10)} & \textbf{Ens.(20)} & \textbf{BEAR} \\
        \hline
        hopper-medium-expert & \textbf{-43.20} & -151.36 & 3.71e6 & 2.93e6 & 0.32e6 & 24.05e3 & 65.93 \\
        hopper-mixed & \textbf{-10.93} & -22.87 & 15.00e6 &  59.93e3 & 8.92e3 & 2.47e3 & 1399.46 \\
        hopper-medium & \textbf{-7.48} & -156.70 & 26.03e12 & 437.57e6 & 1.12e12 & 885e3 & 4.32 \\
        \hline
    \end{tabular}
    \caption{{Difference between policy values predicted by each algorithm and the true policy value for CQL, a variant of CQL that uses Equation~\ref{eqn:objective_1}, the minimum of an ensemble of varying sizes, and BEAR~\citep{kumar2019stabilizing} on three D4RL datasets. CQL is the only method that lower-bounds the actual return (i.e., has negative differences), and CQL($\mathcal{H})$ is much less conservative than CQL (Eqn.~\ref{eqn:objective_1}).}}
    \normalsize
    \vspace{-10pt}
    \label{table:cql_lower_bound}
\end{table}

We also present an empirical analysis to show that Theorem~\ref{thm:gap_amplify}, that CQL is gap-expanding, holds in practice in Appendix~\ref{app:gap_amplify}, and present an ablation study on various design choices used in CQL in Appendix~\ref{app:additional_results}.

\vspace{-8pt}
\section{Discussion}
\vspace{-5pt}
We proposed conservative Q-learning (CQL), an algorithmic framework for offline RL that learns a lower bound on the policy value.
Empirically, we demonstrate that CQL outperforms prior offline RL methods on a wide range of offline RL benchmark tasks, including complex control tasks and tasks with raw image observations. In many cases, the performance of CQL is substantially better than the best-performing prior methods, exceeding their final returns by 2-5x.
The simplicity and efficacy of CQL make it a promising choice for a wide range of real-world offline RL problems. However, a number of challenges remain. While we prove that CQL learns lower bounds on the Q-function in the tabular, linear, and a subset of non-linear function approximation cases, a rigorous theoretical analysis of CQL with deep neural nets, is left for future work. Additionally, offline RL methods are liable to suffer from overfitting in the same way as standard supervised methods, so another important challenge for future work is to devise simple and effective early stopping methods, analogous to validation error in supervised learning.

\section*{Acknowledgements}
We thank Mohammad Norouzi, Oleh Rybkin, Anton Raichuk, Vitchyr Pong and anonymous reviewers from the Robotic AI and Learning Lab at UC Berkeley for their feedback on an earlier version of this paper. We thank Rishabh Agarwal for help with the Atari QR-DQN/REM codebase and for sharing baseline results. This research was funded by the DARPA Assured Autonomy program, and compute support from Google, Amazon, and NVIDIA.

\bibliography{main.bib}
\bibliographystyle{plainnat}

\newpage
\appendix
\part*{Appendices}

\section{Discussion of CQL Variants}
\label{app:cql_variants}
We derive several variants of CQL in Section~\ref{sec:framework}. Here, we discuss these variants on more detail and describe their specific properties. We first derive the variants: CQL($\mathcal{H}$), CQL($\rho$), and then present another variant of CQL, which we call CQL(var). This third variant has strong connections to distributionally robust optimization~\citep{namkoong2017variance}.

\textbf{CQL($\mathcal{H}$).} In order to derive CQL($\mathcal{H}$), we substitute $\mathcal{R} = \mathcal{H}(\mu)$, and solve the optimization over $\mu$ in closed form for a given Q-function. For an optimization problem of the form:
\begin{equation*}
    \max_{\mu}~~ \E_{\bx \sim \mu(\bx)}[f(\bx)] + \mathcal{H}(\mu)~~~ \text{s.t.}~~~ \sum_{\bx} \mu(\bx) = 1,~ \mu({\bx}) \geq 0~ \forall \bx,
\end{equation*}
the optimal solution is equal to $\mu^*(\bx) = \frac{1}{Z} \exp(f(\bx))$, where $Z$ is a normalizing factor. Plugging this into Equation~\ref{eqn:cql_framework}, we exactly obtain Equation~\ref{eqn:practical_objective}.

\textbf{CQL($\rho$).} In order to derive CQL($\rho$), we follow the above derivation, but our regularizer is a KL-divergence regularizer instead of entropy.
\begin{equation*}
    \small{\max_{\mu}~~ \E_{\bx \sim \mu(\bx)}[f(\bx)] + D_{\mathrm{KL}}(\mu || \rho)~~~ \text{s.t.}~~~ \sum_{\bx} \mu(\bx) = 1,~ \mu({\bx}) \geq 0~ \forall \bx}.
\end{equation*}
The optimal solution is given by, $\mu^*(\bx) = \frac{1}{Z} \rho(\bx) \exp(f(\bx))$, where $Z$ is a normalizing factor. Plugging this back into the CQL family (Equation~\ref{eqn:cql_framework}), we obtain the following objective for training the Q-function (modulo some normalization terms):
\begin{equation}
    \small{\min_{Q}~ \alpha \E_{\bs \sim d^\behavior(\bs)}\left[\E_{\ba \sim \rho(\ba|\bs)} \left[Q(\bs, \ba) \frac{\exp(Q(\bs, \ba))}{Z}\right] - \E_{\ba \sim \behavior(\ba|\bs)}\left[Q(\bs, \ba)\right]\right] + \frac{1}{2}\!\E_{\bs, \ba, \bs' \sim \mathcal{D}}\left[\left(Q - \bellman^{\policy_k} \hat{Q}^{k} \right)^2 \right]\!.}
    \label{eqn:cql_rho_objective}
\end{equation}

\textbf{CQL(var).} Finally, we derive a CQL variant that is inspired from the perspective of distributionally robust optimization (DRO)~\citep{namkoong2017variance}. This version penalizes the variance in the Q-function across actions at all states $\bs$, under some action-conditional distribution of our choice. In order to derive a canonical form of this variant, we invoke an identity from \citet{namkoong2017variance}, which helps us simplify Equation~\ref{eqn:cql_framework}. To start, we define the notion of ``robust expectation'': for any function $f(\bx)$, and any empirical distribution $\hat{P}(\bx)$ over a dataset $\{ \bx_1, \cdots, \bx_N\}$ of $N$ elements, the ``robust'' expectation defined by: 
\begin{equation*}
    R_N(\hat{P}) := \max_{\mu(\bx)} ~~\E_{\bx \sim \mu(\bx)}[f(\bx)] \text{~~~s.t.~~~} D_{f}(\mu(\bx), \hat{P}(\bx)) \leq \frac{\delta}{N},
\end{equation*}    
can be approximated using the following upper-bound:
\begin{equation}
    R_N(\hat{P}) \leq \E_{\bx \sim \hat{P}(\bx)}[f(\bx)] + \sqrt{\frac{2 \delta~ \text{var}_{\hat{P}(\bx)}(f(\bx))}{N}},
    \label{eqn:robust_expectation}
\end{equation}
where the gap between the two sides of the inequality decays inversely w.r.t. to the dataset size, $\mathcal{O}(1/N)$. By using Equation~\ref{eqn:robust_expectation} to simplify Equation~\ref{eqn:cql_framework}, we obtain an objective for training the Q-function that penalizes the variance of Q-function predictions under the distribution $\hat{P}$. 
\begin{multline}
    \min_{Q}~~ \frac{1}{2}~ \E_{\bs, \ba, \bs' \sim \mathcal{D}}\left[\left(Q - \bellman^{\policy_k} \hat{Q}^{k} \right)^2 \right] + \alpha \E_{\bs \sim d^\behavior(\bs)}\left[\sqrt{\frac{\text{var}_{\hat{P}(\ba|\bs)}\left( Q(\bs, \ba) \right)}{d^\behavior(s) |\mathcal{D}|}} \right] \\ 
    + \alpha \E_{s \sim d^\behavior(\bs)}\left[ \E_{\hat{P}(\ba|\bs)}[Q(\bs, \ba)] - \E_{\behavior(\ba|\bs)}[Q(\bs, \ba)] \right]
    \label{eqn:variance_regularized_again}
\end{multline}
The only remaining decision is the choice of $\hat{P}$, which can be chosen to be the inverse of the empirical action distribution in the dataset, $\hat{P}(\ba|\bs) \propto \frac{1}{\hat{D}(\ba|\bs)}$, or even uniform over actions, $\hat{P}(\ba|\bs) = \text{Unif}(\ba)$, to obtain this variant of variance-regularized CQL.

\section{Discussion of Gap-Expanding Behavior of CQL Backups}
\label{app:gap_amplify}

In this section, we discuss in detail the consequences of the gap-expanding behavior of CQL backups over prior methods based on policy constraints that, as we show in this section, may not exhibit such gap-expanding behavior in practice. To recap, Theorem~\ref{thm:gap_amplify} shows that the CQL backup operator increases the difference between expected Q-value at in-distribution ($\ba \sim \behavior(\ba|\bs)$) and out-of-distribution ($\ba \text{~s.t.~} \frac{\mu_k(\ba|\bs)}{\behavior(\ba|\bs)} << 1$) actions. We refer to this property as the gap-expanding property of the CQL update operator.

\textbf{Function approximation may give rise to erroneous Q-values at OOD actions.} We start by discussing the behavior of prior methods based on policy constraints~\citep{kumar2019stabilizing,fujimoto2018off,jaques2019way,wu2019behavior} in the presence of function approximation.
To recap, because computing the target value requires $\E_\policy[\hat{Q}(\bs,\ba)]$, constraining $\policy$ to be close to $\behavior$ will avoid evaluating $\hat{Q}$ on OOD actions. These methods typically do not impose any further form of regularization on the learned Q-function.
Even with policy constraints, because function approximation used to represent the Q-function, learned Q-values at two distinct state-action pairs are coupled together. As prior work has argued and shown~\citep{achiam2019towards,fu2019diagnosing,kumar2020discor}, the ``generalization'' or the coupling effects of the function approximator may be heavily influenced by the properties of the data distribution~\citep{fu2019diagnosing,kumar2020discor}. For instance, \citet{fu2019diagnosing} empirically shows that when the dataset distribution is narrow (i.e. state-action marginal entropy, $\mathcal{H}(d^\behavior(\bs, \ba))$, is low~\citep{fu2019diagnosing}), the coupling effects of the Q-function approximator can give rise to incorrect Q-values at different states, though this behavior is absent without function approximation, and is not as severe with high-entropy (e.g. Uniform) state-action marginal distributions.

In offline RL, we will shortly present empirical evidence on high-dimensional MuJoCo tasks showing that certain dataset distributions, $\mathcal{D}$, may cause the learned Q-value at an OOD action $\ba$ at a state $\bs$, to in fact take on high values than Q-values at in-distribution actions at intermediate iterations of learning. This problem persists even when a large number of samples (e.g. $1M$) are provided for training, and the agent cannot correct these errors due to no active data collection.  

Since actor-critic methods, including those with policy constraints, use the learned Q-function to train the policy, in an iterative online policy evaluation and policy improvement cycle, as discussed in Section~\ref{sec:background}, the errneous Q-function may push the policy towards OOD actions, especially when no policy constraints are used. Of course, policy constraints should prevent the policy from choosing OOD actions, however, as we will show that in certain cases, policy constraint methods might also fail to prevent the effects on the policy due to incorrectly high Q-values at OOD actions.

\textbf{How can CQL address this problem?} As we show in Theorem~\ref{thm:gap_amplify}, the difference between expected Q-values at in-distribution actions and out-of-distribution actions is expanded by the CQL update. This property is a direct consequence of the specific nature of the CQL regularizer -- that maximizes Q-values under the dataset distribution, and minimizes them otherwise. This difference depends upon the choice of $\alpha_k$, which can directly be controlled, since it is a free parameter. Thus, by effectively controlling $\alpha_k$, CQL can push down the learned Q-value at out-of-distribution actions as much is desired, correcting for the erroneous overestimation error in the process.

\textbf{Empirical evidence on high-dimensional benchmarks with neural networks.}  
We next empirically demonstrate the existence of of such Q-function estimation error on high-dimensional MuJoCo domains when deep neural network function approximators are used with stochastic optimization techniques. In order to measure this error, we plot the difference in expected Q-value under actions sampled from the behavior distribution, $\ba \sim \behavior(\ba|\bs)$, and the maximum Q-value over actions sampled from a uniformly random policy, $\ba \sim \text{Unif}(\ba|\bs)$. That is, we plot the quantity
\begin{equation}
\label{eqn:delta_eqn}
    \hat{\Delta}^k = \E_{\bs, \ba \sim \mathcal{D}}\left[\max_{\ba'_1, \cdots, \ba'_N \sim \text{Unif}(\ba')}[\hat{Q}^k(\bs, \ba')]- \hat{Q}^k(\bs, \ba)\right]
\end{equation}
over the iterations of training, indexed by $k$. This quantity, intuitively, represents an estimate of the ``advantage'' of an action $\ba$, under the Q-function, with respect to the optimal action $\max_{\ba'} \hat{Q}^k(\bs, \ba')$. Since, we cannot perform exact maximization over the learned Q-function in a continuous action space to compute $\Delta$, we estimate it via sampling described in Equation~\ref{eqn:delta_eqn}.

We present these plots in Figure~\ref{fig:delta_plots} on two datasets: hopper-expert and hopper-medium. The expert dataset is generated from a near-deterministic, expert policy, exhibits a narrow coverage of the state-action space, and limited to only a few directed trajectories. On this dataset, we find that $\hat{\Delta}^k$ is always positive for the policy constraint method (Figure~\ref{fig:delta_plots}(a)) and increases during training -- note, the continuous rise in $\hat{\Delta}^k$ values, in the case of the policy-constraint method, shown in Figure~\ref{fig:delta_plots}(a). This means that even if the dataset is generated from an expert policy, and policy constraints correct target values for OOD actions,
incorrect Q-function generalization may make an out-of-distribution action appear promising. For the more stochastic hopper-medium dataset, that consists of a more diverse set of trajectories, shown in Figure~\ref{fig:delta_plots}(b), we still observe that $\hat{\Delta}^k > 0$ for the policy-constraint method, however, the relative magnitude is smaller than hopper-expert.

In contrast, Q-functions learned by CQL, generally satisfy $\hat{\Delta}^k < 0$, as is seen  and these values are clearly smaller than those for the policy-constraint method. This provides some empirical evidence for Theorem~\ref{thm:gap_amplify}, in that, the maximum Q-value at a randomly chosen action from the uniform distribution the action space is smaller than the Q-value at in-distribution actions.

On the hopper-expert task, as we show in Figure~\ref{fig:delta_plots}(a) (right), we eventually observe an ``unlearning'' effect, in the policy-constraint method where the policy performance deteriorates after a extra iterations in training. This ``unlearning'' effect is similar to what has been observed when standard off-policy Q-learning algorithms without any policy constraint are used in the offline regime~\citep{kumar2019stabilizing,levine2020offline}, on the other hand this effect is absent in the case of CQL, even after equally many training steps. The performance in the more-stochastic hopper-medium dataset fluctuates, but does not deteriorate suddenly.

To summarize this discussion, we concretely observed the following points via empirical evidence:
\begin{itemize}
\vspace{-10pt}
    \item CQL backups are gap expanding in practice, as justified by the negative $\hat{\Delta}^k$ values in Figure~\ref{fig:delta_plots}.
    \item Policy constraint methods, that do not impose any regularization on the Q-function may observe highly positive $\hat{\Delta}^k$ values during training, especially with narrow data distributions, indicating that gap-expansion may  be absent.
    \item When $\hat{\Delta}^k$ values continuously grow during training, the policy might eventually suffer from an unlearning effect~\citep{levine2020offline}, as shown in Figure~\ref{fig:delta_plots}(a).
    \vspace{-10pt}
\end{itemize}

\begin{figure}
    \begin{subfigure}[h]{0.49\linewidth}
      \centering
      \includegraphics[width=0.47\linewidth]{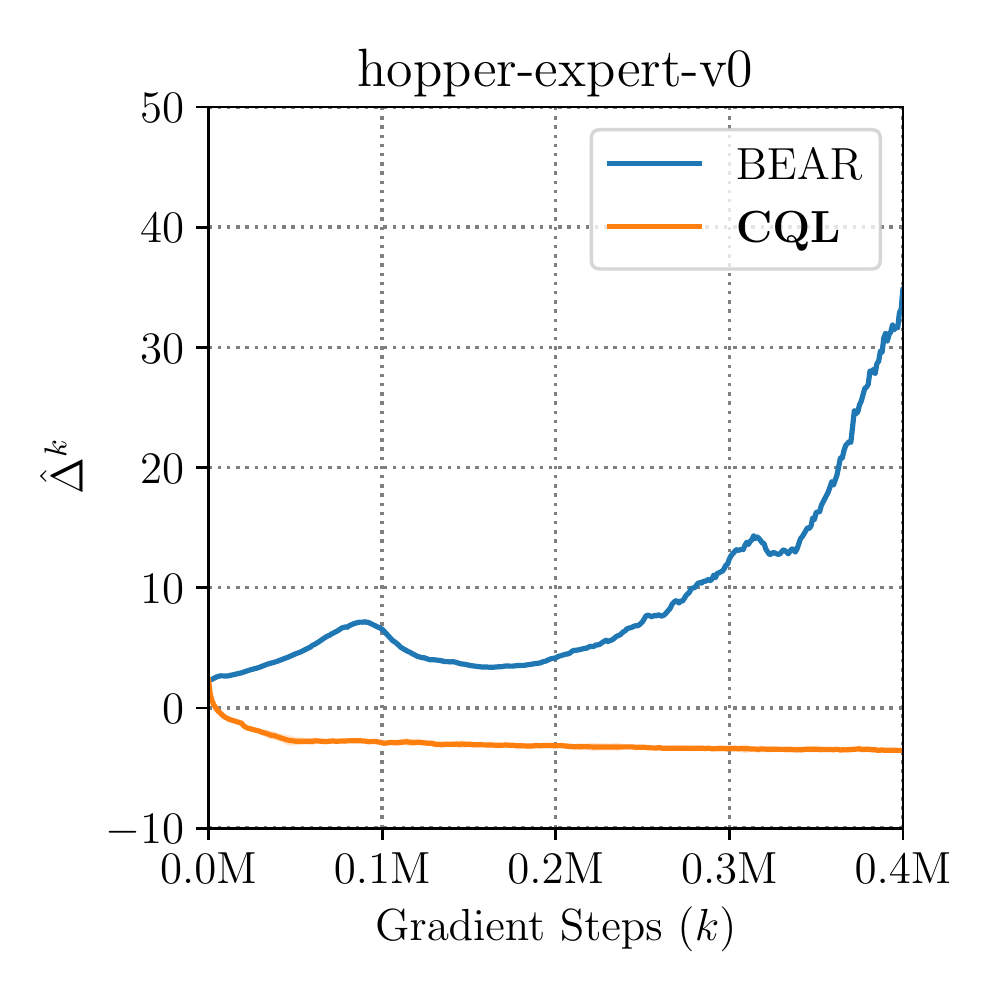}
      \includegraphics[width=0.47\linewidth]{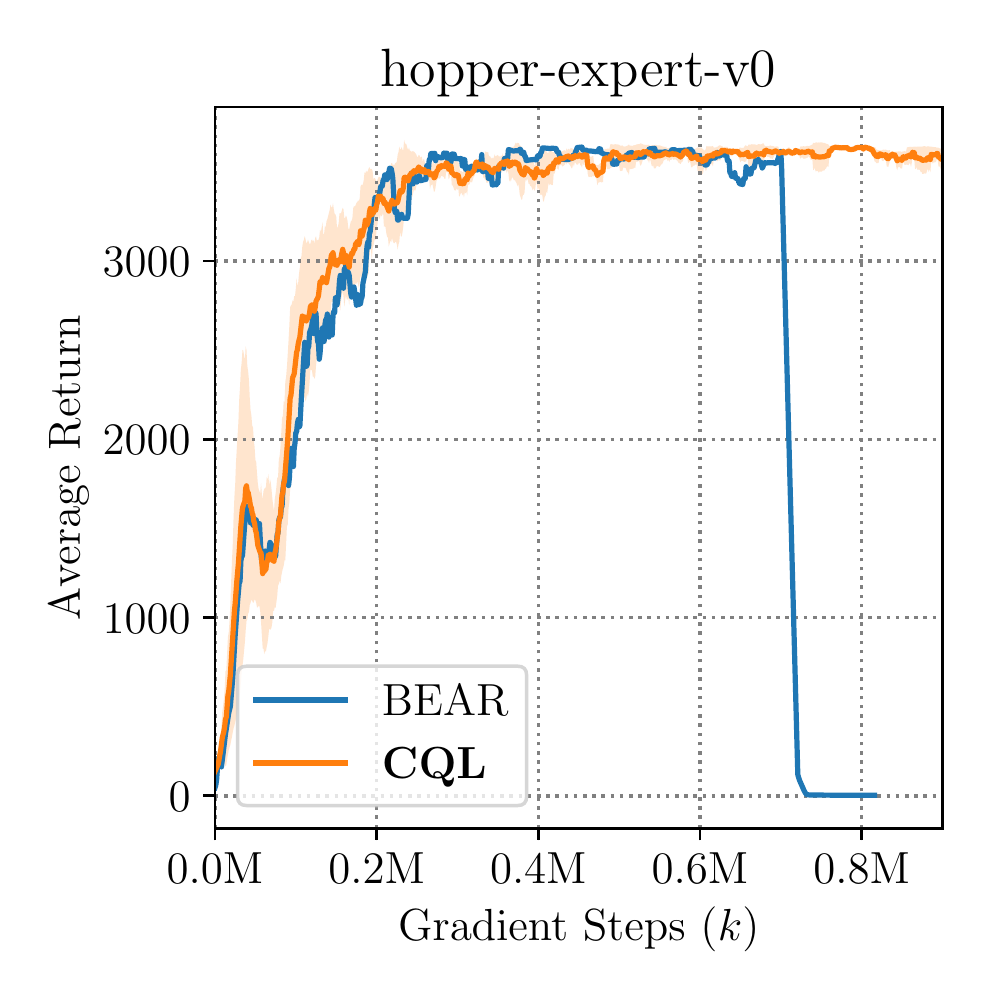}
      \caption{hopper-expert-v0}
    \end{subfigure}
    ~
    \begin{subfigure}[h]{0.49\linewidth}
      \centering
      \includegraphics[width=0.47\linewidth]{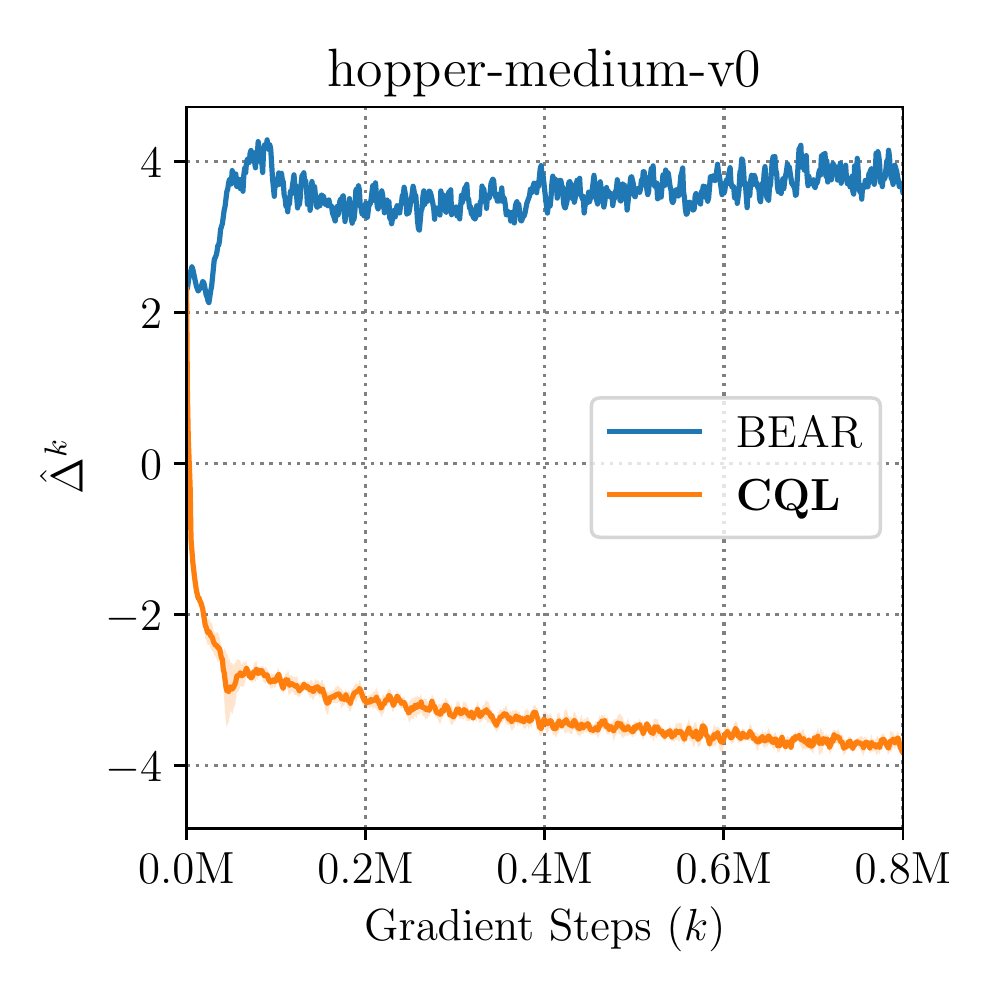}
      \includegraphics[width=0.47\linewidth]{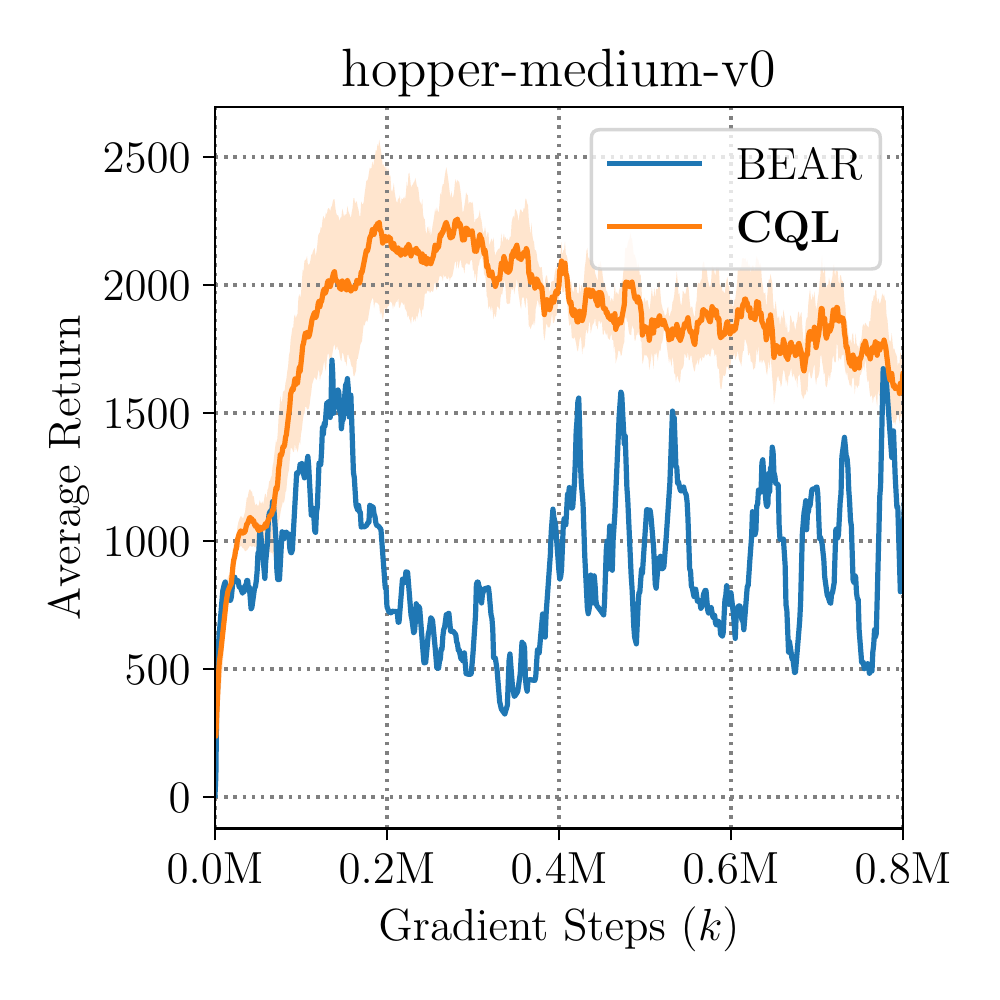}
      \caption{hopper-medium-v0}
    \end{subfigure}
    \caption{$\Delta^k$ as a function of training iterations for hopper-expert and hopper-medium datasets. Note that CQL (left) generally has negative values of $\Delta$, whereas BEAR (right) generally has positive $\Delta$ values, which also increase during training with increasing $k$ values.}
    \label{fig:delta_plots}
\end{figure}

\section{Theorem Proofs}
\label{app:missing_proofs}
In this section, we provide proofs of the theorems in Sections~\ref{sec:policy_eval} and \ref{sec:framework}. We first redefine notation for clarity and then provide the proofs of the results in the main paper.

\textbf{Notation.} Let $k \in \mathbb{N}$ denote an iteration of policy evaluation (in Section~\ref{sec:policy_eval}) or Q-iteration (in Section~\ref{sec:framework}). In an iteration $k$, the objective -- Equation~\ref{eqn:modified_policy_eval} or Equation~\ref{eqn:cql_framework} -- is optimized using the previous iterate (i.e. $\hat{Q}^{k-1}$) as the target value in the backup. $Q^k$ denotes the true, tabular Q-function iterate in the MDP, without any correction. In an iteration, say $k+1$, the current tabular Q-function iterate, $Q^{k+1}$ is related to the previous tabular Q-function iterate ${Q}^k$ as: $Q^{k+1} = \bellman^{\policy} Q^k$ (for policy evaluation) or $Q^{k+1} = \bellman^{\policy_k} Q^k$ (for policy learning). Let $\hat{Q}^k$ denote the $k$-th Q-function iterate obtained from CQL. Let $\hat{V}^k$ denote the value function, $\hat{V}^k := \E_{\ba \sim \policy(\ba|\bs)}[\hat{Q}^k(\bs, \ba)]$.   

\textbf{A note on the value of $\alpha$.} Before proving the theorems, we remark that while the statements of Theorems~\ref{thm:cql_underestimates}, \ref{thm:min_q_underestimates} and \ref{thm:policy_eval_func_approx} (we discuss this in Appendix~\ref{app:additional_theory}) show that CQL produces lower bounds if $\alpha$ is larger than some threshold, so as to overcome either sampling error (Theorems~\ref{thm:cql_underestimates} and \ref{thm:min_q_underestimates}) or function approximation error (Theorem~\ref{thm:policy_eval_func_approx}). While the optimal $\alpha_k$ in some of these cases depends on the current Q-value, $\hat{Q}^k$, we can always choose a worst-case value of $\alpha_k$ by using the inequality $\hat{Q}^k \leq 2 R_{\max}/(1 - \gamma)$, still guaranteeing a lower bound. If it is unclear why the learned Q-function $\hat{Q}^k$ should be bounded, we can always clamp the Q-values if they go outside $\left[ \frac{-2 R_{\max}}{1 - \gamma}, \frac{2 R_{\max}}{1 - \gamma} \right]$.

We first prove Theorem~\ref{thm:min_q_underestimates}, which shows that policy evaluation using a simplified version of CQL (Equation~\ref{eqn:objective_1}) results in a point-wise lower-bound on the Q-function. 

\textbf{Proof of Theorem~\ref{thm:min_q_underestimates}.} In order to start, we first note that the form of the resulting Q-function iterate, $\hat{Q}^k$, in the setting without function approximation. By setting the derivative of Equation~\ref{eqn:objective_1} to 0, we obtain the following expression for $\hat{Q}^{k+1}$ in terms of $\hat{Q}^k$,
\begin{equation}
    \forall~\bs, \ba \in \mathcal{D}, k, ~~ \hat{Q}^{k+1}(\bs, \ba) = \hat{\bellman}^\pi \hat{Q}^k(\bs, \ba) - \alpha \frac{\mu(\ba|\bs)}{\hatbehavior(\ba|\bs)}.
    \label{eqn:q_expression_objective1}
\end{equation}
Now, since, $\mu(\ba|\bs) > 0, \alpha > 0, \hatbehavior(\ba|\bs) > 0$, we observe that at each iteration we underestimate the next Q-value iterate, i.e. $\hat{Q}^{k+1} \leq \hat{\bellman}^\policy \hat{Q}^k$.

\textbf{Accounting for sampling error.} Note that so far we have only shown that the Q-values are upper-bounded by the the ``empirical Bellman targets'' given by, $\hat{\bellman}^\policy \hat{Q}^k$. In order to relate $\hat{Q}^k$ to the true Q-value iterate, $Q^k$, we need to relate the empirical Bellman operator, $\hat{\bellman}^\policy$ to the actual Bellman operator, $\bellman^\policy$. In Appendix~\ref{app:handling_unobserved_actions}, we show that if the reward function $r(\bs, \ba)$ and the transition function, $\transitions(\bs'|\bs, \ba)$ satisfy ``concentration'' properties, meaning that the difference between the observed reward sample, $r$ ($\bs, \ba, r, \bs') \in \mathcal{D}$) and the actual reward function $r(\bs, \ba)$ (and analogously for the transition matrix) is bounded with high probability, then overestimation due to the empirical Backup operator is bounded. Formally, with high probability (w.h.p.) $\geq 1 - \delta$, $\delta \in (0, 1)$, 
\begin{equation*}
    \forall Q, \bs, \ba \in \mathcal{D},~~ \left\vert \hat{\bellman}^\policy Q(\bs, \ba) - \bellman^\policy Q(\bs, \ba) \right\vert \leq \frac{C_{r, T, \delta} R_{\max}}{(1 - \gamma) \sqrt{|\mathcal{D}(\bs, \ba)|}}.
\end{equation*}
Hence, the following can be obtained, w.h.p.:
\begin{align}
\label{eqn:pac_bound_q_value}
    \hat{Q}^{k+1}(\bs, \ba) = \bellman^\policy \hat{Q}^k(\bs, \ba) \leq \bellman^\policy \hat{Q}^k(\bs, \ba) - \alpha \frac{\mu(\ba|\bs)}{\hatbehavior(\ba|\bs)} + \frac{C_{r, T, \delta} R_{\max}}{(1 - \gamma) \sqrt{|\mathcal{D}(\bs, \ba)|}}.
\end{align}

Now we need to reason about the fixed point of the update procedure in Equation~\ref{eqn:q_expression_objective1}. The fixed point of Equation~\ref{eqn:q_expression_objective1} is given by:
\begin{multline*}
    \hat{Q}^\policy \leq \bellman^\policy \hat{Q}^\policy - \alpha \frac{\mu(\ba|\bs)}{\hatbehavior(\ba|\bs)} + \frac{C_{r, T, \delta} R_{\max}}{(1 - \gamma) \sqrt{|\mathcal{D}(\bs, \ba)|}} \implies \hat{Q}^\pi \leq (I - \gamma P^\pi)^{-1} \left[ R  - \alpha \frac{\mu}{\hatbehavior} + \frac{C_{r, T, \delta} R_{\max}}{1 - \gamma) \sqrt{|\mathcal{D}}}\right]\\
    \hat{Q}^\policy(\bs, \ba) \leq Q^\policy(\bs, \ba) - \alpha \left[ \left(I - \gamma P^\pi \right)^{-1} \left[\frac{\mu}{\hatbehavior} \right] \right](\bs, \ba) + \left[(I - \gamma P^\policy)^{-1} \frac{C_{r, T, \delta} R_{\max}}{(1 - \gamma) \sqrt{|\mathcal{D}|}} \right](\bs, \ba),
\end{multline*}
thus proving the relationship in Theorem~\ref{thm:min_q_underestimates}.

In order to guarantee a lower bound, $\alpha$ can be chosen to cancel any potential overestimation incurred by $\frac{C_{r, T, \delta} R_{\max}}{(1 - \gamma)\sqrt{|\mathcal{D}|}}$. Note that this  choice works, since $(I - \gamma P^\pi)^{-1}$ is a matrix with all non-negative entries. The choice of $\alpha$ that guarantees a lower bound is then given by:
\begin{align*}
    \alpha& ~ \cdot \min_{\bs, \ba} \left[\frac{\mu(\ba|\bs)}{\hatbehavior(\ba|\bs)} \right] \geq \max_{\bs, \ba} \frac{C_{r, T, \delta} R_{\max}}{(1 - \gamma) \sqrt{|\mathcal{D}(\bs, \ba)|}}\\
    \implies \alpha&~ \geq \max_{\bs, \ba} \frac{C_{r, T, \delta} R_{\max}}{(1 - \gamma) \sqrt{|\mathcal{D}(\bs, \ba)|}} \cdot \max_{\bs, \ba} \left[\frac{\mu(\ba|\bs)}{\hatbehavior(\ba|\bs)} \right]^{-1}.
\end{align*}
Note that the theoretically minimum possible value of $\alpha$ decreases as more samples are observed, i.e., when $|\mathcal{D}(\bs, \ba)|$ is large. Also, note that since, $\frac{C_{r, T, \delta} R_{\max}}{(1 - \gamma0 \sqrt{|\mathcal{D}|}} \approx 0$, when $\hat{\bellman}^\policy = \bellman^\policy$, any $\alpha \geq 0$ guarantees a lower bound. And so choosing a value of $\alpha = 0$ is sufficient in this case.

Next, we prove Theorem~\ref{thm:cql_underestimation} that shows that the additional term that maximizes the expected Q-value under the dataset distribution, $\mathbb{D}(\bs, \ba)$, (or $d^\behavior(\bs) \behavior(\ba|\bs)$, in the absence of sampling error), results in a lower-bound on only the expected value of the policy at a state, and not a pointwise lower-bound on Q-values at all actions.

\textbf{Proof of Theorem~\ref{thm:cql_underestimates}.} We first prove this theorem in the absence of sampling error, and then incorporate sampling error at the end, using a technique similar to the previous proof. In the tabular setting, we can set the derivative of the modified objective in Equation~\ref{eqn:modified_policy_eval}, and compute the Q-function update induced in the exact, tabular setting (this assumes $\hat{\bellman}^\policy = \bellman^\policy)$ and $\behavior(\ba|\bs) = \hatbehavior(\ba|\bs)$).
\begin{equation}
    \forall ~\bs, \ba, k ~~ \hat{Q}^{k+1} (\bs, \ba) = \bellman^\policy \hat{Q}^k(\bs, \ba) - \alpha \left[\frac{\mu(\ba|\bs)}{\behavior(\ba|\bs)} - 1 \right].
    \label{eqn:q_function_modified_eval}
\end{equation}
Note that for state-action pairs, $(\bs, \ba)$, such that, $\mu(\ba|\bs) < \behavior(\ba|\bs)$, we are infact adding a positive quantity, $1 - \frac{\mu(\ba|\bs)}{\behavior(\ba|\bs)}$, to the Q-function obtained, and this we cannot guarantee a point-wise lower bound, i.e. $\exists~ \bs, \ba, \text{~~s.t.}~~ \hat{Q}^{k+1}(\bs, \ba) \geq Q^{k+1}(\bs, \ba)$. To formally prove this, we can construct a counter-example three-state, two-action MDP, and choose a specific behavior policy $\policy(\ba|\bs)$, such that this is indeed the case.

The value of the policy, on the other hand, $\hat{V}^{k+1}$ is underestimated, since:
\begin{equation}
    \hat{V}^{k+1}(\bs) := \E_{\ba \sim \policy(\ba|\bs)} \left[ \hat{Q}^{k+1}(\bs, \ba) \right] = \bellman^\policy \hat{V}^k (\bs) - \alpha \E_{\ba \sim \policy(\ba|\bs)}\left[\frac{\mu(\ba|\bs)}{\behavior(\ba|\bs)} - 1 \right].
    \label{eqn:value_recursion}
\end{equation}
and we can show that $D_{\text{CQL}}(\bs): = \sum_{\ba} \policy(\ba|\bs) \left[\frac{\mu(\ba|\bs)}{\behavior(\ba|\bs)} - 1 \right]$ is always positive, when $\policy(\ba|\bs) = \mu(\ba|\bs)$. To note this, we present the following derivation:
\begin{align*}
    D_{\text{CQL}}(\bs) &:=~ \sum_{\ba} \policy(\ba|\bs) \left[\frac{\mu(\ba|\bs)}{\behavior(\ba|\bs)} - 1 \right]\\
    &= \sum_{\ba} (\policy(\ba|\bs) - \behavior(\ba|\bs) + \behavior(\ba|\bs)) \left[\frac{\mu(\ba|\bs)}{\behavior(\ba|\bs)} - 1 \right]\\
    &= \sum_{\ba} (\policy(\ba|\bs) - \behavior(\ba|\bs)) \left[ \frac{\policy(\ba|\bs) - \behavior(\ba|\bs)}{\behavior(\ba|\bs} \right] + \sum_{\ba} \behavior(\ba|\bs) \left[\frac{\mu(\ba|\bs)}{\behavior(\ba|\bs)} - 1 \right]\\
    &= \sum_{\ba} \underbrace{\left[ \frac{\left(\policy(\ba|\bs) - \behavior(\ba|\bs) \right)^2}{\behavior(\ba|\bs)} \right]}_{\geq 0}~ +~ 0 \text{~~~since,~} \sum_{\ba} \policy(\ba|\bs) = \sum_{\ba} \behavior(\ba|\bs) = 1.
\end{align*}
Note that the marked term, is positive since both the numerator and denominator are positive, and this implies that $D_\text{CQL}(\bs) \geq 0$. Also, note that $D_\text{CQL}(\bs) = 0$, iff $\policy(\ba|\bs) = \behavior(\ba|\bs)$. This implies that each value iterate incurs some underestimation, $\hat{V}^{k+1}(\bs) \leq \bellman^\policy \hat{V}^k (\bs)$.

Now, we can compute the fixed point of the recursion in Equation~\ref{eqn:value_recursion}, and this gives us the following estimated policy value:
\begin{equation*}
    \hat{V}^\policy(\bs) = V^\policy(\bs) - \alpha \left[ \underbrace{(I - \gamma P^\policy)^{-1}}_{\text{non-negative entries}}
    \underbrace{\E_{\policy}\left[\frac{\policy}{\behavior} - 1 \right]}_{\geq 0} \right](\bs),
\end{equation*}
thus showing that in the absence of sampling error, Theorem~\ref{thm:cql_underestimates} gives a lower bound. It is straightforward to note that this expression is tighter than the expression for policy value in Proposition~\ref{thm:cql_underestimates}, since, we explicitly subtract $1$ in the expression of Q-values (in the exact case) from the previous proof.

\textbf{Incorporating sampling error.} To extend this result to the setting with sampling error, similar to the previous result, the maximal overestimation at each iteration $k$, is bounded by $\frac{C_{r, T, \delta} R_{\max}}{1 - \gamma}$.
The resulting value-function satisfies (w.h.p.), $\forall \bs \in \mathcal{D}$, 
\begin{equation*}
   \hat{V}^\policy(\bs) \leq V^\policy(\bs) - \alpha \left[\left(I - \gamma P^\pi \right)^{-1} \E_{\policy}\left[\frac{\policy}{\hatbehavior} - 1 \right] \right](\bs) + \left[ (I - \gamma P^\pi)^{-1} \frac{C_{r, T, \delta} R_{\max}}{(1- \gamma) \sqrt{|\mathcal{D}|}}\right](\bs)
\end{equation*}
thus proving the theorem statement. In this case, the choice of $\alpha$, that prevents overestimation w.h.p. is given by:
\begin{equation*}
    \alpha \geq \max_{\bs, \ba \in \mathcal{D}} \frac{C_{r, T} R_{\max}}{(1 - \gamma) \sqrt{|\mathcal{D}(\bs, \ba)|}} \cdot \max_{\bs \in \mathcal{D}} \left[\sum_{\ba} \policy(\ba|\bs) \left(\frac{\policy(\ba|\bs)}{\hatbehavior(\ba|\bs))} - 1\right)\right]^{-1}.
\end{equation*}
Similar to Theorem~\ref{thm:min_q_underestimates}, note that the theoretically acceptable value of $\alpha$ decays as the number of occurrences of a state action pair in the dataset increases.
Next we provide a proof for Theorem~\ref{thm:cql_underestimation}.

\textbf{Proof of Theorem~\ref{thm:cql_underestimation}.} In order to prove this theorem, we compute the difference induced in the policy value, $\hat{V}^{k+1}$, derived from the Q-value iterate, $\hat{Q}^{k+1}$, with respect to the previous iterate $\bellman^\policy \hat{Q}^{k}$. If this difference is negative at each iteration, then the resulting Q-values are guaranteed to lower bound the true policy value.

\begin{align*}
    \E_{\hat{\policy}^{k+1}(\ba|\bs)}[\hat{Q}^{k+1}(\bs, \ba)] &=  \E_{\hat{\policy}^{k+1}(\ba|\bs)}\left[ \bellman^\policy \hat{Q}^k (\bs, \ba) \right] - \E_{\hat{\policy}^{k+1}(\ba|\bs)}\left[ \frac{\pi_{\hat{Q}^k}(\ba | \bs)}{\hatbehavior(\ba|\bs)} -1 \right]\\
    &= \E_{\hat{\policy}^{k+1}(\ba|\bs)}\left[ \bellman^\policy \hat{Q}^k (\bs, \ba) \right] - \underbrace{\E_{\policy_{\hat{Q}^k}(\ba | \bs)}\left[ \frac{\policy_{\hat{Q}^k}(\ba | \bs)}{\hatbehavior(\ba|\bs)} -1 \right]}_{\text{underestimation, ~~(a)}}\\
    &~~~~~~~~~~~~~~~~~~~~~~~~~~~~~+ \sum_{\ba} \underbrace{\left(\policy_{\hat{Q}^k}(\ba | \bs) - \hat{\policy}^{k+1}(\ba|\bs) \right)}_{\text{(b),}~~{\leq \mathrm{D_{TV}}(\policy_{\hat{Q}^k}, \hat{\policy}^{k+1})}} \frac{\policy_{\hat{Q}^k}(\ba | \bs)}{\hatbehavior(\ba|\bs)}
\end{align*}
If (a) has a larger magnitude than (b), then the learned Q-value induces an underestimation in an iteration $k+1$, and hence, by a recursive argument, the learned Q-value underestimates the optimal Q-value. We note that by upper bounding term (b) by $\mathrm{D_{TV}}(\policy_{\hat{Q}^k}, \hat{\policy}^{k+1}) \cdot \max_{\ba} \frac{\policy_{\hat{Q}^k}(\ba | \bs)}{\hatbehavior(\ba|\bs)}$, and writing out (a) > upper-bound on (b), we obtain the desired result.

Finally, we show that under specific choices of $\alpha_1, \cdots, \alpha_k$, the CQL backup is gap-expanding by providing a proof for Theorem~\ref{thm:gap_amplify}

\textbf{Proof of Theorem~\ref{thm:gap_amplify} (CQL is gap-expanding).} For this theorem, we again first present the proof in the absence of sampling error, and then incorporate sampling error into the choice of $\alpha$. We follow the strategy of observing the Q-value update in one iteration. Recall that the expression for the Q-value iterate at iteration $k$ is given by:
\begin{equation*}
    \hat{Q}^{k+1}(\bs, \ba) = \bellman^{\policy^k} \hat{Q}^{k}(\bs, \ba) - \alpha_k \frac{\mu_k(\ba|\bs) - \behavior(\ba|\bs)}{\behavior(\ba|\bs)}.
\end{equation*}
Now, the value of the policy $\mu_{k}(\ba|\bs)$ under $\hat{Q}^{k+1}$ is given by:
\begin{equation*}
    \E_{\ba \sim \mu_k(\ba|\bs)}[\hat{Q}^{k+1}(\bs, \ba)] = \E_{\ba \sim \mu_k(\ba|\bs)}[\bellman^{\policy^k} \hat{Q}^k (\bs, \ba)] - \alpha_k \underbrace{\mu_k^T \left(\frac{\mu_k(\ba|\bs) - \behavior(\ba|\bs)}{\behavior(\ba|\bs)} \right)}_{:= \hat{\Delta}^k, \geq 0, \text{~~by proof of Theorem~\ref{thm:cql_underestimates}.}} 
\end{equation*}
Now, we also note that the expected amount of extra underestimation introduced at iteration $k$ under action sampled from the behavior policy $\behavior(\ba|\bs)$ is 0, as,
\begin{equation*}
    \E_{\ba \sim \behavior(\ba|\bs)}[\hat{Q}^{k+1}(\bs, \ba)] = \E_{\ba \sim \behavior(\ba|\bs)}[\bellman^{\policy^k} \hat{Q}^k (\bs, \ba)] - \alpha_k \underbrace{\behavior^T \left(\frac{\mu_k(\ba|\bs) - \behavior(\ba|\bs)}{\behavior(\ba|\bs)} \right)}_{= 0}. 
\end{equation*}
where the marked quantity is equal to 0 since it is equal since $\behavior(\ba|\bs)$ in the numerator cancels with the denominator, and the remaining quantity is a sum of difference between two density functions, $\sum_{\ba} \mu_k(\ba|\bs) - \behavior(\ba|\bs)$, which is equal to 0. Thus, we have shown that,
\begin{equation*}
    \E_{\behavior(\ba|\bs)}[\hat{Q}^{k+1}(\bs, \ba)] - \E_{\mu_k(\ba|\bs)}[\hat{Q}^{k+1}(\bs, \ba)] = \E_{\behavior(\ba|\bs)}[\bellman^{\policy^k} \hat{Q}^k (\bs, \ba)] - \E_{\mu_k(\ba|\bs)}[\bellman^{\policy^k} \hat{Q}^k (\bs, \ba)] - \alpha_k \hat{\Delta}^k.
\end{equation*}
Now subtracting the difference, $\E_{\behavior(\ba|\bs)}[Q^{k+1}(\bs, \ba)] - \E_{\mu_k(\ba|\bs)}[Q^{k+1}(\bs, \ba)]$, computed under the tabular Q-function iterate, $Q^{k+1}$, from the previous equation, we obtain that
\begin{multline*}
    \E_{\ba \sim \behavior(\ba|\bs)}[\hat{Q}^{k+1}(\bs, \ba)] - \E_{\behavior(\ba|\bs)}[{Q}^{k+1}(\bs, \ba)] = \E_{\mu_k(\ba|\bs)}[\hat{Q}^{k+1}(\bs, \ba)] - \E_{\mu_k(\ba|\bs)}[{Q}^{k+1}(\bs, \ba)]\\
    + (\mu_k(\ba|\bs) - \behavior(\ba|\bs))^T \underbrace{\left[ \bellman^{\policy^k} \left( \hat{Q}^k - Q^k \right)(\bs, \cdot) \right]}_{(a)}- \alpha_k \hat{\Delta}^k.
\end{multline*}

Now, by choosing $\alpha_k$, such that any positive bias introduced by the quantity $(\mu_k(\ba|\bs) - \behavior(\ba|\bs))^T (a)$ is cancelled out, we obtain the following gap-expanding relationship:
\begin{equation*}
    \E_{\ba \sim \behavior(\ba|\bs)}[\hat{Q}^{k+1}(\bs, \ba)] - \E_{\behavior(\ba|\bs)}[{Q}^{k+1}(\bs, \ba)] > \E_{\mu_k(\ba|\bs)}[\hat{Q}^{k+1}(\bs, \ba)] - \E_{\mu_k(\ba|\bs)}[{Q}^{k+1}(\bs, \ba)]
\end{equation*}
for, $\alpha_k$ satisfying, 
\begin{equation*}
    \alpha_k > \max \left(\frac{(\behavior(\ba|\bs) - \mu_k(\ba|\bs))^T \left[ \bellman^{\policy^k} \left( \hat{Q}^k - Q^k \right)(\bs, \cdot) \right]}{\hat{\Delta}^k}, 0\right),
\end{equation*}
thus proving the desired result.

To avoid the dependency on the true Q-value iterate, ${Q}^k$, we can upper-bound $Q^k$ by $\frac{R_{\max}}{1 - \gamma}$, and upper-bound $(\behavior(\ba|\bs) - \mu_k(\ba|\bs))^T \bellman^{\policy^k} Q^k (\bs, \cdot)$ by $D_{\mathrm{TV}}(\behavior, \mu_k) \cdot \frac{R_{\max}}{1 - \gamma}$, and use this in the expression for $\alpha_k$. While this bound may be loose, it still guarantees the gap-expanding property, and we indeed empirically show the existence of this property in practice in Appendix~\ref{app:gap_amplify}. 

To incorporate sampling error, we can follow a similar strategy as previous proofs: the worst case overestimation due to sampling error is given by $\frac{C_{r, T, \delta} R_{\max}}{1 - \gamma}$. In this case, we note that, w.h.p.,
\begin{equation*}
    \left\vert \hat{\bellman}^{\policy^k} \left( \hat{Q}^k - Q^k \right) - \bellman^{\policy^k} \left( \hat{Q}^k - Q^k \right) \right\vert \leq \frac{2 \cdot C_{r, T, \delta} R_{\max}}{1 - \gamma}. 
\end{equation*}
Hence, the presence of sampling error adds $D_{\mathrm{TV}}(\hat{\behavior}, \mu_k) \cdot \frac{2 \cdot C_{r, T, \delta} R_{\max}}{1 - \gamma}$ to the value of $\alpha_k$, giving rise to the following, sufficient condition on $\alpha_k$ for the gap-expanding property:
\begin{equation*}
     \alpha_k > \max \left(\frac{(\behavior(\ba|\bs) - \mu_k(\ba|\bs))^T \left[ \bellman^{\policy^k} \left( \hat{Q}^k - Q^k \right)(\bs, \cdot) \right]}{\hat{\Delta}^k} + D_{\mathrm{TV}}(\hat{\behavior}, \mu_k) \cdot \frac{2 \cdot C_{r, T, \delta} R_{\max}}{1 - \gamma}, 0\right),
\end{equation*}
concluding the proof of this theorem.

\section{Additional Theoretical Analysis}
\label{app:additional_theory}
In this section, we present a theoretical analysis of additional properties of CQL. For ease of presentation, we state and prove theorems in Appendices~\ref{app:cql_linear_non_linear} and \ref{app:maximizing_distributions} in the absence of sampling error, but as discussed extensively in Appendix~\ref{app:missing_proofs}, we can extend each of these results by adding extra terms induced due to sampling error. 

\subsection{CQL with Linear and Non-Linear Function Approximation}
\label{app:cql_linear_non_linear}
\begin{theorem}
\label{thm:policy_eval_func_approx}
Assume that the Q-function is represented as a linear function of given state-action feature vectors $\Qfeat$, i.e., $Q(s, a) = w^T \Qfeat(s, a)$. Let $D = \text{diag}\left(d^\behavior(\bs) \behavior(\ba|\bs)\right)$ denote the diagonal matrix with data density, and assume that $\Qfeat^T D \Qfeat$ is invertible. Then, the expected value of the policy under Q-value from Eqn~\ref{eqn:modified_policy_eval} at iteration $k+1$, $\E_{d^\behavior(\ba)}[\hat{V}^{k+1}(\bs)] = \E_{d^\behavior(\bs), \policy(\ba|\bs)}[\hat{Q}^{k+1}(\bs, \ba)]$, lower-bounds the corresponding tabular value, $\E_{d^\behavior(\bs)}[V^{k+1}(\bs)] =\E_{d^\behavior(\bs), \policy(\ba|\bs)}[Q^{k+1}(\bs, \ba)]$, if
\begin{equation*}
\small{\alpha_k \geq \max \left(\frac{D^T\left[\Qfeat \left(\Qfeat^T D \Qfeat \right)^{-1} \Qfeat^T - I \right]\left((\bellman^\policy \hat{Q}^k)(\bs, \ba)\right)}{D^T\left[ \Qfeat \left(\Qfeat^T D \Qfeat \right)^{-1} \Qfeat^T \right] \left(D \left[\frac{\policy(\ba|\bs) - \behavior(\ba|\bs)}{\behavior(\ba|\bs)} \right] \right)},~ 0 \right).}
\end{equation*}
\end{theorem}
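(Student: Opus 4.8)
The plan is to reduce the linear-function-approximation update to a finite-dimensional convex program in the weight vector $w$, solve it in closed form, and then compare the resulting expected value against the exact tabular backup. First I would substitute $Q = \Qfeat w$ into the CQL objective (Equation~\ref{eqn:modified_policy_eval}) with $\mu = \policy$. Writing the weighted inner product $\langle x, y\rangle_D := x\tr D y$, the Bellman-error term becomes $\frac{1}{2}\|\Qfeat w - \bellman^\policy \hat{Q}^k\|_D^2$ and the penalty term becomes $\alpha\,\langle \frac{\policy - \behavior}{\behavior}, \Qfeat w\rangle_D$, which is linear in $w$. Since $\Qfeat\tr D \Qfeat$ is assumed invertible, the objective is a strictly convex quadratic in $w$, so its unique minimizer is obtained by setting the gradient to zero. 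This yields $w^{k+1} = (\Qfeat\tr D\Qfeat)^{-1}\Qfeat\tr D\,\bellman^\policy\hat{Q}^k - \alpha(\Qfeat\tr D\Qfeat)^{-1}\Qfeat\tr D\,\frac{\policy-\behavior}{\behavior}$, and hence $\hat{Q}^{k+1} = \Qfeat w^{k+1} = \Pi_D\,\bellman^\policy\hat{Q}^k - \alpha\,\Pi_D\,\frac{\policy-\behavior}{\behavior}$, where $\Pi_D := \Qfeat(\Qfeat\tr D\Qfeat)^{-1}\Qfeat\tr D$ is the $D$-weighted orthogonal projection onto the column span of $\Qfeat$.

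Next I would take the expected value under $d^\behavior(\bs)\policy(\ba|\bs)$, writing $p$ for the vector with entries $d^\behavior(\bs)\policy(\ba|\bs)$, and compare against the exact backup $Q^{k+1} = \bellman^\policy\hat{Q}^k$. The difference then splits cleanly into two pieces, $p\tr(\hat{Q}^{k+1} - Q^{k+1}) = \underbrace{p\tr(\Pi_D - I)\,\bellman^\policy\hat{Q}^k}_{\text{projection (approximation) bias}} - \alpha\,\underbrace{p\tr\Pi_D\,\frac{\policy-\behavior}{\behavior}}_{\text{CQL underestimation}}$. Requiring the left-hand side to be nonpositive and solving for $\alpha$ gives exactly the stated threshold: $\alpha_k$ must exceed the ratio of the projection-bias term to the projected density-ratio term whenever the latter is positive, and the $\max(\cdot,0)$ accounts for the case in which the projection bias is already nonpositive, so that any $\alpha \geq 0$ suffices.

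The main obstacle is controlling the sign of the two pieces introduced by the projection, which have no analog in the tabular proof of Theorem~\ref{thm:cql_underestimates}. In the exact case $\Pi_D = I$ eliminates the approximation-bias term, and the penalty reduces to the manifestly nonnegative quantity $D_{\text{CQL}}(\bs)$; here, by contrast, the projection can inflate the Bellman target, which is a genuine overestimation source arising from function approximation, and the underestimation term $p\tr\Pi_D\frac{\policy-\behavior}{\behavior}$ is no longer guaranteed positive state-by-state, since $\Pi_D$ couples values across state-action pairs. The delicate step is therefore to verify that this projected penalty is positive so that the threshold is well-defined; this is the function-approximation counterpart of the $D_{\text{CQL}}\geq 0$ computation in Theorem~\ref{thm:cql_underestimates}, and once it is in hand, choosing $\alpha_k$ above the ratio forces the CQL term to dominate the projection bias and drive the difference nonpositive. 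Finally, I would note that setting $\Pi_D = I$ recovers Theorem~\ref{thm:cql_underestimates}, and that sampling error can be reinstated by adding the usual $\frac{C_{r,T,\delta}R_{\max}}{(1-\gamma)\sqrt{|\mathcal{D}|}}$ correction exactly as in the earlier proofs.
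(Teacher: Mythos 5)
Your overall route is the same as the paper's: substitute $Q=\Qfeat w$, solve the strictly convex quadratic in closed form to get $\hat{Q}^{k+1}=\Pi_D\,\bellman^\policy\hat{Q}^k-\alpha\,\Pi_D\frac{\policy-\behavior}{\behavior}$ with $\Pi_D=\Qfeat(\Qfeat\tr D\Qfeat)\inv\Qfeat\tr D$, decompose the expected difference into a projection-bias term and an $\alpha$-scaled projected-penalty term, and read off the threshold on $\alpha_k$. That matches the paper's proof structure exactly, including the identification of the first term as the LSTD-Q value and the observation that $\Pi_D=I$ recovers the tabular result.

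The genuine gap is that you flag the nonnegativity of the projected penalty $p\tr\Pi_D\frac{\policy-\behavior}{\behavior}$ as ``the delicate step'' but never actually establish it. This is not a side remark: if that quantity could be zero or negative, the denominator of the stated threshold is ill-defined or the inequality flips, and the entire conclusion fails. The paper devotes the core of its argument to precisely this point. It defines $f(\policy)=\policy(\ba|\bs)\tr P_\Qfeat\left[\frac{\policy(\ba|\bs)-\behavior(\ba|\bs)}{\behavior(\ba|\bs)}\right]$ and minimizes $f$ over the probability simplex via a Lagrangian/KKT computation, showing the minimizer $\policy^*$ satisfies $(P_\Qfeat+P_\Qfeat\tr)\left[\frac{\policy^*}{\behavior}\right]=(P_\Qfeat+P_\Qfeat\tr)\vec{1}$ and that $f(\policy^*)=0$, hence $f(\policy)\geq 0$ for every $\policy$. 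Your analogy to the tabular $D_{\text{CQL}}\geq 0$ computation is the right intuition, but it does not transfer automatically because, as you yourself note, $\Pi_D$ couples entries across state-action pairs, so the tabular term-by-term argument is unavailable and a separate optimization argument is required. A secondary omission: your one-step comparison is against $\bellman^\policy\hat{Q}^k$, whereas the theorem compares against the tabular iterate $Q^{k+1}=\bellman^\policy Q^k$; the paper closes this gap with an induction over $k$ using the per-iteration underestimation, which you should state explicitly rather than leave implicit.
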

The choice of $\alpha_k$ in Theorem~\ref{thm:policy_eval_func_approx} intuitively amounts to compensating for overestimation in value induced if the true value function cannot be represented in the chosen linear function class (numerator), by the potential decrease in value due to the CQL regularizer (denominator). This implies that if the actual value function can be represented in the linear function class, such that the numerator can be made $0$, then \textbf{any} $\alpha > 0$ is sufficient to obtain a lower bound. We now prove the theorem.

\begin{proof} 
In order to extend the result of Theorem~\ref{thm:cql_underestimates} to account for function approximation, we follow the similar recipe as before. We obtain the optimal solution to the optimization problem below in the family of linearly expressible Q-functions, i.e. $\mathcal{Q} := \{\Qfeat w | w \in \mathbb{R}^{dim(\Qfeat)} \}$. 
\begin{equation*}
    \min_{Q \in \mathcal{Q}}~~ \alpha_k \cdot \left(\E_{d^\behavior(\bs),\mu(\ba|\bs)}\left[Q(\bs, \ba)\right] - {\E_{d^\behavior(\bs),\behavior(\ba|\bs)}\left[Q(\bs, \ba)\right]} \right) + \frac{1}{2}~ \E_{\mathcal{D}}\left[\left(Q(\bs, \ba) - \bellman^\policy \hat{Q}^{k} (\bs, \ba) \right)^2 \right].
\end{equation*}
By substituting $Q(\bs, \ba) = w^T \Qfeat(\bs, \ba)$, and setting the derivative with respect to $w$ to be 0, we obtain,
\begin{align*}
    \alpha \sum_{\bs, \ba} d^\behavior(\bs) \cdot \left( \mu(\ba|\bs) - \behavior(\ba|\bs) \right) \Qfeat(\bs, \ba) + \sum_{\bs, \ba} d^\behavior(\bs) \behavior(\ba|\bs) \left( Q(\bs, \ba) - \bellman^\pi \hat{Q}^k(\bs, \ba) \right) \Qfeat(\bs, \ba) = 0.
\end{align*}
By re-arranging terms, and converting it to vector notation, defining $D = \text{diag}(d^\behavior(\bs)\behavior(\bs))$, and referring to the parameter $w$ at the k-iteration as $w^k$ we obtain:
\begin{align*}
    \left(\Qfeat^T D \Qfeat \right) w^{k+1} = \underbrace{\Qfeat^T D \left( \bellman^\policy \hat{Q}^k \right)}_{\text{LSTD iterate}} - \underbrace{\alpha_k \Qfeat^T \text{diag}\left[d^\behavior(\bs) \cdot(\mu(\ba|\bs) - \behavior(\ba|\bs)) \right]}_{\text{underestimation}}.
\end{align*}
Now, our task is to show that the term labelled as ``underestimation'' is indeed negative in expectation under $\mu(\ba|\bs)$ (This is analogous to our result in the tabular setting that shows underestimated values). In order to show this, we write out the expression for the value, under the linear weights $w^{k+1}$ at state $\bs$, after substituting $\mu = \policy$,
\begin{align}
    \hat{V}^{k+1}(\bs) &:= \policy(\ba|\bs)^T \Qfeat w^{k+1}\\
    = \policy(&\ba|\bs)^T \underbrace{\Qfeat \left(\Qfeat^T D \Qfeat \right)^{-1} \Qfeat^T D \left(\bellman^\policy \hat{Q}^k \right)}_{\text{value under LSTD-Q~\citep{lagoudakis2003least}}} - \alpha_k \policy(\ba|\bs)^T \Qfeat \left(\Qfeat^T D \Qfeat \right)^{-1} \Qfeat^T D \left[\frac{\policy(\ba|\bs) - \behavior(\ba|\bs)}{\behavior(\ba|\bs)} \right].
    \label{expr:lstdq_value}
\end{align}
Now, we need to reason about the penalty term. Defining, $P_\Qfeat := \Qfeat \left(\Qfeat^T D \Qfeat \right)^{-1} \Qfeat^T D$ as the projection matrix onto the subspace of features $\Qfeat$, we need to show that the product that appears as a penalty is positive: $\policy(\ba|\bs)^T P_\Qfeat \left[\frac{\policy(\ba|\bs) - \behavior(\ba|\bs)}{\behavior(\ba|\bs)} \right] \geq 0$. In order to show this, we compute minimum value of this product optimizing over $\policy$. If the minimum value is $0$, then we are done.

Let's define $f(\policy) = \policy(\ba|\bs)^T P_\Qfeat \left[\frac{\policy(\ba|\bs) - \behavior(\ba|\bs)}{\behavior(\ba|\bs)} \right]$, our goal is to solve for $\min_{\policy} f(\policy)$. Setting the derivative of $f(\policy)$ with respect to $\policy$ to be equal to 0, we obtain (including Lagrange multiplier $\eta$ that guarantees $\sum_{\ba} \policy(\ba|\bs) = 1$,
\begin{equation*}
    \left( P_\Qfeat + P_\Qfeat^T \right) \left[\frac{\policy(\ba|\bs)}{\behavior(\ba|\bs)}\right] = P_\Qfeat \Vec{1} + \eta \Vec{1}.
\end{equation*}
By solving for $\eta$ (using the condition that a density function sums to 1), we obtain that the minimum value of $f(\policy)$ occurs at a $\policy^*(\ba|\bs)$, which satisfies the following condition,
\begin{equation*}
    \left(P_\Qfeat + P_\Qfeat^T \right) \left[\frac{\policy^*(\ba|\bs)}{\behavior(\ba|\bs)}\right] = \left( P_\Qfeat + P_\Qfeat^T \right) \Vec{1}.
\end{equation*}
Using this relation to compute $f$, we obtain, $f(\policy^*) = 0$, indicating that the minimum value of $0$ occurs when the projected density ratio matches under the matrix $(P_\Qfeat + P_\Qfeat^T)$ is equal to the projection of a vector of ones, $\Vec{1}$. Thus, \begin{equation*}
    \forall~ \policy(\ba|\bs),~ f(\policy) = \policy(\ba|\bs)^T P_\Qfeat \left[\frac{\policy(\ba|\bs) - \behavior(\ba|\bs)}{\behavior(\ba|\bs)} \right] \geq 0.
\end{equation*}

This means that: $\forall ~\bs, \hat{V}^{k+1}(\bs) \leq \hat{V}^{k+1}_{\text{LSTD-Q}}(\bs)$ given identical previous $\hat{Q}^{k}$ values. This result indicates, that if $\alpha_k \geq 0$, the resulting CQL value estimate with linear function approximation is guaranteed to lower-bound the value estimate obtained from a least-squares temporal difference Q-learning algorithm (which only minimizes Bellman error assuming a linear Q-function parameterization), such as LSTD-Q~\citep{lagoudakis2003least}, since at each iteration, CQL induces a lower-bound with respect to the previous value iterate, whereas this underestimation is absent in LSTD-Q, and an inductive argument is applicable.

So far, we have only shown that the learned value iterate, $\hat{V}^{k+1}(\bs)$ lower-bounds the value iterate obtained from LSTD-Q, $\forall ~\bs, \hat{V}^{k+1}(\bs) \leq \hat{V}^{k+1}_{\text{LSTD-Q}}(\bs)$. But, our final aim is to prove a stronger result, that the learned value iterate, $\hat{V}^{k+1}$, lower bounds the exact tabular value function iterate, ${V}^{k+1}$, at each iteration. The reason why our current result does not guarantee this is because function approximation may induce overestimation error in the linear approximation of the Q-function.

In order to account for this change, we make a simple change: we choose $\alpha_k$ such that the resulting penalty nullifes the effect of any over-estimation caused due to the inability to fit the true value function iterate in the linear function class parameterized by $\Qfeat$. Formally, this means:
\begin{align*}
    \E_{d^\behavior(\bs)}\left[\hat{V}^{k+1}(\bs)\right] &\leq \E_{d^\behavior(\bs)}\left[\hat{V}^{k+1}_{\text{LSTD-Q}}(\bs)\right] - \alpha_k \E_{d^\behavior(\bs)}[f(\policy(\ba|\bs))] \\
    & \leq \E_{d^\behavior(\bs)}\left[V^{k+1}(\bs)\right] - \underbrace{\E_{d^\behavior(\bs)}\left[\hat{V}^{k+1}_{\text{LSTD-Q}}(\bs) - V^{k+1}(\bs) \right] - \alpha_k \E_{d^\behavior(\bs)}[f(\policy(\ba|\bs))]}_{\text{choose~} \alpha_k \text{~to make this negative}} \\
    & \leq \E_{d^\behavior(\bs)}\left[V^{k+1}(\bs)\right]
\end{align*}
And the choice of $\alpha_k$ in that case is given by:
\begin{align*}
    \alpha_k \geq& ~ \max \left(\frac{\E_{d^\behavior(\bs)}\left[\hat{V}^{k+1}_{\text{LSTD-Q}}(\bs) - V^{k+1}(\bs) \right]}{\E_{d^\behavior(\bs)}[f(\policy(\ba|\bs))]}, 0 \right)\\
    \geq& ~ \max \left(\frac{D^T\left[\Qfeat \left(\Qfeat^T D \Qfeat \right)^{-1} \Qfeat^T - I \right]\left((\bellman^\policy \hat{Q}^k)(\bs, \ba)\right)}{D^T\left[ \Qfeat \left(\Qfeat^T D \Qfeat \right)^{-1} \Qfeat^T \right] \left(D \left[\frac{\policy(\ba|\bs) - \behavior(\ba|\bs)}{\behavior(\ba|\bs)} \right] \right)},~ 0 \right).
\end{align*}
Finally, we note that since this choice of $\alpha_k$ induces under-estimation in the next iterate, $\hat{V}^{k+1}$ with respect to the previous iterate, $\hat{V}^k$, for all $k \in \mathbb{N}$, by induction, we can claim that this choice of $\alpha_1, \cdots, \alpha_k$ is sufficient to make $\hat{V}^{k+1}$ lower-bound the tabular, exact value-function iterate. $V^{k+1}$, for all $k$, thus completing our proof.
\end{proof}

We can generalize Theorem~\ref{thm:policy_eval_func_approx} to non-linear function approximation, such as neural networks, {under the standard NTK framework~\citep{ntk}}, assuming that each iteration $k$ is performed by a single step of gradient descent on Equation~\ref{eqn:modified_policy_eval}, rather than a complete minimization of this objective. As we show in Theorem~\ref{corr:nonlinear_ntk}, CQL learns lower bounds in this case for an appropriate choice of $\alpha_k$. We will also empirically show in Appendix~\ref{app:additional_results} that CQL can learn effective conservative Q-functions with multilayer neural networks.

\begin{theorem}[Extension to non-linear function approximation]
\label{corr:nonlinear_ntk}
Assume that the Q-function is represented by a general non-linear function approximator parameterized by $\theta$, $Q_\theta(\bs, \ba)$. let $D = \text{diag}(d^\behavior(\bs) \behavior(\ba|\bs))$ denote the matrix with the data density on the diagonal, and assume that $\nabla_\theta Q_\theta^T D \nabla_\theta Q_\theta$ is invertible. Then, the expected value of the policy under the Q-function obtained by taking a gradient step on Equation~\ref{eqn:modified_policy_eval}, at iteration $k+1$ lower-bounds the corresponding tabular function iterate if: 
\end{theorem}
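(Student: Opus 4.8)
The plan is to reduce this to the linear analysis of Theorem~\ref{thm:policy_eval_func_approx} via the NTK linearization, with the iteration-dependent gradient features $\Phi_k := \nabla_\theta Q_{\theta^k}$ — a matrix with rows indexed by $(\bs,\ba)$ and columns by the parameters — playing the role of the fixed feature matrix $\Qfeat$. First I would write the single (preconditioned) gradient step on Equation~\ref{eqn:modified_policy_eval} and push it through to function space. Under the NTK regime the network is linear in its parameters to first order, so $Q_{\theta^{k+1}}(\bs,\ba) = \hat{Q}^k(\bs,\ba) + \Phi_k(\bs,\ba)\tr(\theta^{k+1}-\theta^k)$ (exactly in the infinite-width limit). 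The gradient of the objective splits into a Bellman-error part $\Phi_k\tr D(\hat{Q}^k - \bellman^\policy\hat{Q}^k)$ and a penalty part $\alpha_k \Phi_k\tr(\rho - \nu)$, where I abbreviate $\rho(\bs,\ba) = d^\behavior(\bs)\policy(\ba|\bs)$, $\nu(\bs,\ba) = d^\behavior(\bs)\behavior(\ba|\bs)$, and $D = \mathrm{diag}(\nu)$, taking $\mu = \policy$ as in the statement.

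The crucial step — and the reason the theorem assumes $\nabla_\theta Q_\theta\tr D \nabla_\theta Q_\theta$ is invertible — is that taking this step under the metric induced by the NTK (equivalently a Gauss--Newton step with preconditioner $(\Phi_k\tr D \Phi_k)\inv$) makes the function-space dynamics identical to the linear case with $\Phi_k$ in place of $\Qfeat$. In particular, the penalty contribution to the update becomes, in function space, $-\alpha_k \Phi_k(\Phi_k\tr D \Phi_k)\inv \Phi_k\tr(\rho-\nu) = -\alpha_k P_k\!\left[\frac{\policy-\behavior}{\behavior}\right]$, where $P_k := \Phi_k(\Phi_k\tr D \Phi_k)\inv\Phi_k\tr D$ is the $D$-orthogonal projection onto the column space of $\Phi_k$ and I have used $\rho-\nu = D\left[\frac{\policy-\behavior}{\behavior}\right]$. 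This is term-for-term the same penalty that appears in the proof of Theorem~\ref{thm:policy_eval_func_approx}, while the Bellman-error part reduces to an NTK analog of the LSTD-Q / projected fitted-value iterate, which I denote $\hat{V}^{k+1}_{\mathrm{proj}}$.

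With this reduction, the rest follows the linear proof almost verbatim. I would first establish that the penalty lowers the expected value, i.e. $f_k(\policy) := \policy\tr P_k\left[\frac{\policy-\behavior}{\behavior}\right] \geq 0$, by minimizing $f_k$ over $\policy$ subject to $\sum_\ba \policy(\ba|\bs)=1$ and checking that the minimizer attains value $0$; this argument only uses that $P_k$ is a projection of the stated form, so it transfers directly. Hence $\E_{d^\behavior}[\hat{V}^{k+1}] \leq \E_{d^\behavior}[\hat{V}^{k+1}_{\mathrm{proj}}]$. I would then pick $\alpha_k$ large enough that the induced underestimation also cancels any overestimation $\E_{d^\behavior}[\hat{V}^{k+1}_{\mathrm{proj}} - V^{k+1}] \geq 0$ coming from the inability of the NTK feature class to represent the exact tabular backup, which yields the stated threshold on $\alpha_k$ with $\nabla_\theta Q_{\theta^k}$ substituted for $\Qfeat$. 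A final induction over $k$, exactly as in Theorem~\ref{thm:policy_eval_func_approx}, propagates the per-iteration underestimation to give $\E_{d^\behavior}[\hat{V}^{k+1}(\bs)] \leq \E_{d^\behavior}[V^{k+1}(\bs)]$.

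The main obstacle is justifying the first reduction: showing that a ``gradient step'' in the NTK framework produces the projected update governed by $P_k$ rather than moving only partway toward it. This requires both the infinite-width/linearization assumption, so that the function-space update is controlled solely by the NTK Gram matrix, and the invertibility (and preconditioning) of $\Phi_k\tr D \Phi_k$. A secondary subtlety absent from the linear theorem is that the features $\Phi_k$ — and hence $P_k$, $f_k$, and the required $\alpha_k$ — change at every iteration, so the induction must track a per-iteration underestimation instead of relying on a single fixed feature space, and one must verify that $\hat{V}^{k+1}_{\mathrm{proj}}$ remains a valid comparison point across iterates. Once these points are settled, the positivity of $f_k(\policy)$ and the choice of $\alpha_k$ go through as in the linear analysis.
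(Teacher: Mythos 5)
Your proposal follows the same overall skeleton as the paper's proof: NTK linearization of the one-step update, decomposition of $\hat{V}^{k+1}$ into an unpenalized (projected Bellman) part plus a penalty, positivity of the penalty via the argument from Theorem~\ref{thm:policy_eval_func_approx}, a choice of $\alpha_k$ that cancels any overestimation from the unpenalized part, and an induction over $k$. The one genuine divergence is how you treat the gradient step. The paper analyzes a \emph{plain} gradient step, so in function space the update is governed by the NTK Gram matrix $\bM^k = (\nabla_\theta \hat{Q}^k)^T \nabla_\theta \hat{Q}^k$, and the penalty appears as $\eta\,\alpha_k\, \policy^T \bM^k D\left[\frac{\policy - \behavior}{\behavior}\right]$; the paper then asserts its non-negativity ``by a similar analysis'' to the linear case. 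You instead precondition the step by $(\Phi_k^T D \Phi_k)^{-1}$ (a Gauss--Newton / natural-gradient step), which makes the function-space penalty exactly $\alpha_k\, \policy^T P_k\left[\frac{\policy-\behavior}{\behavior}\right]$ with $P_k$ the same oblique projection as in the linear proof, so the positivity lemma transfers verbatim and the invertibility hypothesis is actually used. What each buys: your route is the more rigorous reduction --- the paper's $\bM^k D$ is not a projection, so its positivity claim does not literally follow from the linear argument and is left as a sketch --- but it analyzes a modified update rather than the literal ``gradient step'' in the theorem statement, a discrepancy you correctly flag as the main obstacle. Your observation that the per-iteration features $\Phi_k$ (hence $P_k$ and the threshold on $\alpha_k$) change across iterations, so the induction must track per-iteration underestimation, is a point the paper glosses over but handles implicitly the same way.
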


\begin{proof} Our proof strategy is to reduce the non-linear optimization problem into a linear one, with features $\Qfeat$ (in Theorem~\ref{thm:policy_eval_func_approx}) replaced with features given by the gradient of the current Q-function $\hat{Q}^{k}_\theta$ with respect to parameters $\theta$, i.e. $\nabla_\theta \hat{Q}^k$. To see, this we start by writing down the expression for $\hat{Q}^{k+1}_\theta$ obtained via one step of gradient descent with step size $\eta$, on the objective in Equation~\ref{eqn:modified_policy_eval}.
\begin{align*}
    \theta^{k+1} = \theta^k &- \eta \alpha_k \left( \E_{d^\behavior(\bs), \mu(\ba|\bs)}\left[\nabla_\theta \hat{Q}^{k}(\bs, \ba)\right] - \E_{d^\behavior(\bs), \behavior(\ba|\bs)}\left[\nabla_\theta \hat{Q}^{k}(\bs, \ba)\right] \right) \\ 
    &- \eta \E_{d^\behavior(\bs), \behavior(\ba|\bs)}\left[ \left(\hat{Q}^k - \bellman^\policy \hat{Q}^{k} \right)(\bs, \ba) \cdot \nabla_\theta \hat{Q}^{k}(\bs, \ba) \right].
\end{align*}
Using the above equation and making an approximation linearization assumption on the non-linear Q-function, for small learning rates $\eta << 1$, as has been commonly used by prior works on the neural tangent kernel (NTK) in deep learning theory~\citep{ntk} in order to explain neural network learning dynamics in the infinite-width limit, we can write out the expression for the next Q-function iterate, $\hat{Q}^{k+1}_\theta$ in terms of $\hat{Q}^k_\theta$ as~\citep{achiam2019towards,ntk}:
\begin{align*}
    \hat{Q}^{k+1}_\theta(\bs, \ba) &\approx \hat{Q}^k_\theta(\bs, \ba) + \left(\theta^{k+1} - \theta^{k}\right)^T \nabla_\theta \hat{Q}^{k}_\theta (\bs, \ba) ~~~~ \text{(under NTK assumptions)}\\
    &= \hat{Q}^k_\theta(\bs, \ba) - \eta \alpha_k \E_{d^\behavior(\bs'), \mu(\ba'|\bs')}\left[\nabla_\theta \hat{Q}^{k}(\bs', \ba')^T \nabla_\theta \hat{Q}^{k}(\bs, \ba) \right]\\
    &~~ + \eta \alpha_k \E_{d^\behavior(\bs'), \behavior(\ba'|\bs')}\left[\nabla_\theta \hat{Q}^{k}(\bs', \ba')^T \nabla_\theta \hat{Q}^k(\bs, \ba) \right]\\
    &~~ - \eta \E_{d^\behavior(\bs'), \behavior(\ba'|\bs')}\left[ \left(\hat{Q}^k - \bellman^\policy \hat{Q}^{k} \right)(\bs', \ba') \cdot \nabla_\theta \hat{Q}^{k}(\bs', \ba')^T \nabla_\theta \hat{Q}^k(\bs, \ba) \right].
\end{align*}
To simplify presentation, we convert into matrix notation, where we define the $|\mathcal{S}||\mathcal{A}| \times |\mathcal{S}||\mathcal{A}|$ matrix, $\bM^k = \left(\nabla_\theta \hat{Q}^k\right)^T \nabla_\theta \hat{Q}^k$, as the neural tangent kernel matrix of the Q-function at iteration $k$. Then, the vectorized $\hat{Q}^{k+1}$ (with $\mu = \policy$) is given by,
\begin{align*}
    \hat{Q}^{k+1} = \hat{Q}^{k} -&~ \eta \alpha_k \bM^k D \left[ \frac{\policy(\ba|\bs) - \behavior(\ba|\bs)}{\behavior(\ba|\bs)} \right] + \eta \bM^k D \left(\bellman^\policy \hat{Q}^k - \hat{Q}^k \right).
\end{align*}
Finally, the value of the policy is given by:
\begin{equation}
    \hat{V}^{k+1} := \underbrace{\policy(\ba|\bs)^T \hat{Q}^k(\bs, \ba) + \eta \policy(\ba|\bs) \bM^k D \left(\bellman^\policy \hat{Q}^k - \hat{Q}^k \right)}_{\text{(a) unpenalized value}} - \underbrace{\eta \alpha_k \policy(\ba|\bs)^T \bM^k D \left[ \frac{\policy(\ba|\bs) - \behavior(\ba|\bs)}{\behavior(\ba|\bs)} \right]}_{\text{(b) penalty}}.
\end{equation}
Term marked (b) in the above equation is similar to the penalty term shown in Equation~\ref{expr:lstdq_value}, and by performing a similar analysis, we can show that $\text{(b)} \geq 0$. Again similar to how $\hat{V}^{k+1}_\text{LSTD-Q}$ appeared in Equation~\ref{expr:lstdq_value}, we observe that here we obtain the value function corresponding to a regular gradient-step on the Bellman error objective. 
\end{proof}

Again similar to before, term (a) can introduce overestimation relative to the tabular counterpart, starting at $\hat{Q}^k$: $Q^{k+1} = \hat{Q}^k - \eta \left(\bellman^\policy \hat{Q}^k - \hat{Q}^k \right)$, and we can choose $\alpha_k$ to compensate for this potential increase as in the proof of Theorem~\ref{thm:policy_eval_func_approx}. As the last step, we can recurse this argument to obtain our final result, for underestimation.

\subsection{Choice of Distribution to Maximize Expected Q-Value in Equation~\ref{eqn:modified_policy_eval}}
\label{app:maximizing_distributions}

In Section~\ref{sec:policy_eval}, we introduced a term that maximizes Q-values under the dataset $d^\behavior(\bs) \behavior(\ba|\bs)$ distribution when modifying Equation~\ref{eqn:objective_1} to Equation~\ref{eqn:modified_policy_eval}. Theorem~\ref{thm:cql_underestimates} indicates the ``sufficiency'' of maximizing Q-values under the dataset distribution -- this guarantees a lower-bound on value. We now investigate the neccessity of this assumption: We ask the formal question: \textbf{For which other choices of $\nu(\ba|\bs)$ for the maximization term, is the value of the policy under the learned Q-value, $\hat{Q}^{k+1}_\nu$ guaranteed to be a lower bound on the actual value of the policy?}

To recap and define notation, we restate the objective from Equation~\ref{eqn:objective_1} below.  
\begin{equation}
    \small{\hat{Q}^{k+1} \leftarrow \arg\min_{Q}~ \alpha~ \E_{\bs \sim d^\behavior(\bs), \ba \sim \mu(\ba|\bs)}\left[Q(\bs, \ba)\right] + \frac{1}{2}~ \E_{\bs, \ba, \bs' \sim \mathcal{D}}\left[\left(Q(\bs, \ba) - \bellman^\policy \hat{Q}^{k} (\bs, \ba) \right)^2 \right].} 
    \label{eqn:objective_1_restated}
\end{equation}
We define a general family of objectives from Equation~\ref{eqn:modified_policy_eval}, parameterized by a distribution $\nu$ which is chosen to maximize Q-values as shown below (CQL is a special case, with $\nu(\ba|\bs) = \behavior(\ba|\bs)$):
\begin{multline}
    \small{\hat{Q}^{k+1}_\nu \leftarrow \arg\min_{Q}~~ \alpha \cdot \left(\E_{\bs \sim d^\behavior(\bs), \ba \sim \mu(\ba|\bs)}\left[Q(\bs, \ba)\right] - \textcolor{red}{\E_{\bs \sim d^\behavior(\bs), \ba \sim \nu(\ba|\bs)}\left[Q(\bs, \ba)\right]} \right)} \\
    \small{+ \frac{1}{2}~ \E_{\bs, \ba, \bs' \sim \mathcal{D}}\left[\left(Q(\bs, \ba) - \bellman^\policy \hat{Q}^{k} (\bs, \ba) \right)^2 \right].~~~~~ (\nu\text{-CQL})}
    \label{eqn:modified_policy_eval_new}
\end{multline}

In order to answer our question, we prove the following result:
\begin{theorem}[Necessity of maximizing Q-values under $\behavior(\ba|\bs)$.] For any policy $\policy(\ba|\bs)$, any $\alpha > 0$, and for all $k > 0$, the value of the policy., $\hat{V}^{k+1}_\nu$ under Q-function iterates from $\nu-$CQL, $\hat{Q}^{k+1}_\nu(\bs, \ba)$ is guaranteed to be a lower bound on the exact value iterate, $\hat{V}^{k+1}$, only if $\nu(\ba|\bs) = \behavior(\ba|\bs)$.   
\end{theorem}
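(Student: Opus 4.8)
The plan is to reduce the lower-bound guarantee to a single per-state scalar inequality and then exhibit a policy that violates it whenever $\nu \neq \behavior$. As in the proof of Theorem~\ref{thm:cql_underestimates}, I would set $\mu = \policy$ (the regime in which a lower bound on the policy value is sought) and differentiate the $\nu$-CQL objective of Equation~\ref{eqn:modified_policy_eval_new} with respect to $Q(\bs,\ba)$ in the exact tabular setting, where $\hat{\bellman}^\policy = \bellman^\policy$ and $\hatbehavior = \behavior$. Setting the gradient to zero and dividing through by $d^\behavior(\bs)\behavior(\ba|\bs) > 0$ gives the closed form
\begin{equation*}
\hat{Q}^{k+1}_\nu(\bs,\ba) = \bellman^\policy \hat{Q}^k(\bs,\ba) - \alpha\,\frac{\policy(\ba|\bs) - \nu(\ba|\bs)}{\behavior(\ba|\bs)}.
\end{equation*}
Taking the expectation under $\policy(\ba|\bs)$ yields $\hat{V}^{k+1}_\nu(\bs) = \hat{V}^{k+1}(\bs) - \alpha\, D_\nu(\bs)$, where $\hat{V}^{k+1}(\bs) := \E_{\policy(\ba|\bs)}[\bellman^\policy \hat{Q}^k(\bs,\ba)]$ is the exact value iterate and $D_\nu(\bs) := \sum_{\ba} \policy(\ba|\bs)\,\frac{\policy(\ba|\bs) - \nu(\ba|\bs)}{\behavior(\ba|\bs)}$. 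Because $\alpha > 0$ and this difference is independent of both $k$ and $\hat{Q}^k$, the bound $\hat{V}^{k+1}_\nu(\bs) \leq \hat{V}^{k+1}(\bs)$ holds for every policy, every $\alpha>0$ and every $k$ if and only if $D_\nu(\bs) \geq 0$ for all $\policy$.

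To prove necessity I would then minimize $D_\nu(\bs)$ over the policy simplex at a fixed state. Writing $p_a, b_a, n_a$ for $\policy(\ba|\bs), \behavior(\ba|\bs), \nu(\ba|\bs)$, the map $D_\nu = \sum_a p_a^2/b_a - \sum_a p_a n_a / b_a$ is a strictly convex quadratic in $p$, so its minimizer over $\{p \geq 0,~ \sum_a p_a = 1\}$ is obtained from a single Lagrange multiplier $\lambda$. The stationarity condition $(2 p_a - n_a)/b_a = \lambda$, combined with $\sum_a b_a = \sum_a n_a = 1$, forces $\lambda = 1$ and hence $p_a^* = (b_a + n_a)/2$. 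This candidate is automatically nonnegative (it is an average of two densities), so the nonnegativity constraints are inactive and $\policy^*(\ba|\bs) = \tfrac{1}{2}(\behavior(\ba|\bs) + \nu(\ba|\bs))$ is the global minimizer.

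Finally I would evaluate the objective at $\policy^*$ to extract its sign. A short computation gives
\begin{equation*}
D_\nu(\policy^*) = \sum_a \frac{p_a^*\,(p_a^* - n_a)}{b_a} = \frac{1}{4}\sum_a \frac{b_a^2 - n_a^2}{b_a} = \frac{1}{4}\left(1 - \sum_{\ba} \frac{\nu(\ba|\bs)^2}{\behavior(\ba|\bs)}\right).
\end{equation*}
By Cauchy--Schwarz, $\sum_{\ba} \nu(\ba|\bs)^2/\behavior(\ba|\bs) \geq \left(\sum_{\ba}\nu(\ba|\bs)\right)^2 = 1$, with equality precisely when $\nu(\ba|\bs)/\behavior(\ba|\bs)$ is constant in $\ba$, i.e. (since both are densities) when $\nu = \behavior$. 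Hence $D_\nu(\policy^*) \leq 0$ with equality iff $\nu = \behavior$, so whenever $\nu(\cdot|\bs) \neq \behavior(\cdot|\bs)$ the policy $\policy^*$ makes $D_\nu(\bs) < 0$, forcing $\hat{V}^{k+1}_\nu(\bs) > \hat{V}^{k+1}(\bs)$ for every $\alpha > 0$ and $k > 0$ and breaking the lower bound. This establishes that $\nu = \behavior$ is necessary. The main obstacle is the middle step: correctly solving the simplex-constrained quadratic program and recognizing that its optimal value collapses to $\tfrac{1}{4}$ times one minus a $\chi^2$-divergence-type quantity, after which Cauchy--Schwarz cleanly supplies the strict inequality for $\nu \neq \behavior$.
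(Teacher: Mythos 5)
Your proposal is correct and follows essentially the same route as the paper: derive the closed-form tabular iterate, reduce the lower-bound condition to the sign of the per-state penalty $\sum_{\ba}\policy(\ba|\bs)\bigl(\policy(\ba|\bs)-\nu(\ba|\bs)\bigr)/\behavior(\ba|\bs)$, minimize over the policy simplex to get $\policy^* = \tfrac{1}{2}(\behavior+\nu)$, and conclude that the minimal value is nonnegative only when $\nu=\behavior$. Your final step is in fact slightly more explicit than the paper's, which merely asserts that the resulting maximization over $\nu$ is solved by $\nu=\behavior$, whereas you supply the Cauchy--Schwarz argument showing the optimal value equals $\tfrac{1}{4}\bigl(1-\sum_{\ba}\nu(\ba|\bs)^2/\behavior(\ba|\bs)\bigr)\leq 0$ with equality iff $\nu=\behavior$.
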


\begin{proof}
We start by noting the parametric form of the resulting tabular Q-value iterate:
\begin{equation}
    \hat{Q}^{k+1}_\nu(\bs, \ba) = \bellman^\policy \hat{Q}^{k}_\nu(\bs, \ba) - \alpha_k \frac{\mu(\ba|\bs) - \nu(\ba|\bs)}{\behavior(\ba|\bs)}.
    \label{eqn:q_value_nu}
\end{equation}
The value of the policy under this Q-value iterate, when distribution $\mu$ is chosen to be the target policy $\pi(\ba|\bs)$ i.e. $\mu(\ba|\bs) = \policy(\ba|\bs)$ is given by:
\begin{equation}
    \small{\hat{V}^{k+1}_\nu(\bs): = \E_{\ba \sim \policy(\ba|\bs)}\left[\hat{Q}^{k+1}_\nu(\bs, \ba)\right] = \E_{\ba \sim \policy(\ba|\bs)}\left[\bellman^\policy \hat{Q}^k_\nu(\bs, \ba) \right] - \alpha_k~ \policy(\ba|\bs)^T \left(\frac{\policy(\ba|\bs) - \nu(\ba|\bs)}{\behavior(\ba|\bs)}\right)}. 
\end{equation}
We are interested in conditions on $\nu(\ba|\bs)$ such that the penalty term in the above equation is positive. It is clear that choosing $\nu(\ba|\bs) = \behavior(\ba|\bs)$ returns a policy that satisfies the requirement, as shown in the proof for Theorem~\ref{thm:cql_underestimates}. In order to obtain other choices of $\nu(\ba|\bs)$ that guarantees a lower bound for all possible choices of $\policy(\ba|\bs)$, we solve the following concave-convex maxmin optimization problem, that computes a $\nu(\ba|\bs)$ for which a lower-bound is guaranteed for \textit{all} choices of $\mu(\ba|\bs)$:
\begin{align*}
    \max_{\nu(\ba|\bs)} ~\min_{\policy(\ba|\bs)}~~& ~\sum_{\ba} \policy(\ba|\bs) \cdot \left(\frac{\policy(\ba|\bs) - \nu(\ba|\bs)}{\behavior(\ba|\bs)}\right)\\
    \text{s.t.}~~& ~~ \sum_{\ba} \policy(\ba|\bs) = 1,~ \sum_{\ba} \nu(\ba|\bs) = 1,~ \nu(\ba|\bs) \geq 0,~ \policy(\ba|\bs) \geq 0. 
\end{align*}
We first solve the inner minimization over $\policy(\ba|\bs)$ for a fixed $\nu(\ba|\bs)$, by writing
out the Lagrangian and setting the gradient of the Lagrangian to be 0, we obtain:
\begin{equation*}
    \forall \ba, ~~ 2 \cdot \frac{\policy^*(\ba|\bs)}{\behavior(\ba|\bs)} - \frac{\nu(\ba|\bs)}{\behavior(\ba|\bs)} - \zeta(\ba|\bs) + \eta = 0, 
\end{equation*}
where $\zeta(\ba|\bs)$ is the Lagrange dual variable for the positivity constraints on $\policy(\ba|\bs)$, and $\eta$ is the Lagrange dual variable for the normalization constraint on $\policy$. If $\policy(\ba|\bs)$ is full support (for example, when it is chosen to be a Boltzmann policy), KKT conditions imply that, $\zeta(\ba|\bs)$ = 0, and computing $\eta$ by summing up over actions, $\ba$, the optimal choice of $\policy$ for the inner minimization is given by:
\begin{equation}
    \policy^*(\ba|\bs) = \frac{1}{2} \nu(\ba|\bs) + \frac{1}{2} \behavior(\ba|\bs).
    \label{eqn:optimal_inner}
\end{equation}
Now, plugging Equation~\ref{eqn:optimal_inner} in the original optimization problem, we obtain the following optimization over only $\nu(\ba|\bs)$:
\begin{equation}
    \max_{\nu(\ba|\bs)} \sum_{\ba} \behavior(\ba|\bs) \cdot \left( \frac{1}{2} - \frac{\nu(\ba|\bs)}{2 \behavior(\ba|\bs)} \right) \cdot \left( \frac{1}{2} + \frac{\nu(\ba|\bs)}{2 \behavior(\ba|\bs)} \right)~~~ \text{s.t.}~~ \sum_{\ba} \nu(\ba|\bs) = 1,~ \nu(\ba|\bs) \geq 0.
\end{equation}
Solving this optimization, we find that the optimal distribution, $\nu(\ba|\bs)$ is equal to $\behavior(\ba|\bs)$. and the optimal value of penalty, which is also the objective for the problem above is equal to $0$. Since we are maximizing over $\nu$, this indicates for other choices of $\nu \neq \behavior$, we can find a $\policy$ so that the penalty is negative, and hence a lower-bound is not guaranteed. Therefore, we find that with a worst case choice of $\policy(\ba|\bs)$, a lower bound can only be guaranteed only if $\nu(\ba|\bs) = \behavior(\ba|\bs)$. This justifies the necessity of $\behavior(\ba|\bs)$ for maximizing Q-values in Equation~\ref{eqn:modified_policy_eval}. The above analysis doesn't take into account the effect of function approximation or sampling error. We can, however, generalize this result to those settings, by following a similar strategy of appropriately choosing $\alpha_k$, as previously utilized in Theorem~\ref{thm:policy_eval_func_approx}.
\end{proof}

\subsection{CQL with Empirical Dataset Distributions}
\label{app:handling_unobserved_actions}

The results in Sections~\ref{sec:policy_eval} and~\ref{sec:framework} account for sampling error due to the finite size of the dataset $\mathcal{D}$. In our practical implementation as well, we optimize a sample-based version of Equation~\ref{eqn:modified_policy_eval}, as shown below:
\begin{multline}
    \small{\hat{Q}^{k+1} \leftarrow \arg\min_{Q}~~ \alpha \cdot \left(\sum_{\bs \in \mathcal{D}} \E_{\ba \sim \mu(\ba|\bs)}\left[Q(\bs, \ba)\right] - {\sum_{\bs \in \mathcal{D}} \E_{\ba \sim \behavior(\ba|\bs)}\left[Q(\bs, \ba)\right]} \right)} \\
    \small{+ \frac{1}{2 |\mathcal{D}|}~ \sum_{\bs, \ba, \bs' \in \mathcal{D}}\left[\left(Q(\bs, \ba) - \hat{\bellman}^\policy \hat{Q}^{k} (\bs, \ba) \right)^2 \right]},
    \label{eqn:sampled_policy_eval}
\end{multline}
where $\hat{\bellman}^\policy$ denotes the ``empirical'' Bellman operator computed using samples in $\mathcal{D}$ as follows:
\begin{equation}
\label{eqn:app_empirical_bellman}
    \forall \bs, \ba \in \mathcal{D}, ~~ \left(\hat{\bellman}^\policy \hat{Q}^k \right)(\bs, \ba) = r + \gamma \sum_{\bs'} \hat{\transitions}(\bs'|\bs, \ba) \E_{\ba' \sim \policy(\ba'|\bs')}\left[\hat{Q}^k(\bs' , \ba')\right],  
\end{equation}
where $r$ is the empirical average reward obtained in the dataset when executing an action $\ba$ at state $\bs$, i.e. \mbox{$r = \frac{1}{|\mathcal{D}(\bs, \ba)|} \sum_{\bs_i, \ba_i in \mathcal{D}} \mathbf{1}_{\bs_i = \bs, \ba_i = \ba} \cdot r(\bs, \ba)$}, and $\hat{\transitions}(\bs'|\bs, \ba)$ is the empirical transition matrix.
Note that expectation under $\policy(\ba|\bs)$ can be computed exactly, since it does not depend on the dataset. The empirical Bellman operator can take higher values as compared to the actual Bellman operator, $\bellman^\policy$, for instance, in an MDP with stochastic dynamics, where $\mathcal{D}$ may not contain transitions to all possible next-states $\bs'$ that can be reached by executing action $\ba$ at state $\bs$, and only contains an optimistic transition. 

We next show how the CQL lower bound result (Theorem~\ref{thm:cql_underestimates}) can be modified to guarantee a lower bound even in this presence of sampling error. To note this, following prior work~\citep{jaksch2010near,osband2017posterior}, we assume concentration properties of the reward function and the transition dynamics:
\begin{assumption}
    $\forall~ \bs, \ba \in \mathcal{D}$, the following relationships hold with high probability, $\geq 1 - \delta$
    \begin{equation*}
        |r - r(\bs, \ba)| \leq \frac{C_{r, \delta}}{\sqrt{|\mathcal{D}(\bs, \ba)|}}, ~~~ ||\hat{\transitions}(\bs'|\bs, \ba) - \transitions(\bs'|\bs, \ba)||_{1} \leq \frac{C_{\transitions, \delta}}{\sqrt{|\mathcal{D}(\bs, \ba)|}}.
    \end{equation*}
\end{assumption}

Under this assumption, the difference between the empirical Bellman operator and the actual Bellman operator can be bounded:
\begin{align*}
    \left\vert\left(\hat{\bellman}^\policy \hat{Q}^k \right) - \left({\bellman}^\policy \hat{Q}^k \right)\right\vert &= \left\vert\left(r - r(\bs, \ba)\right) + \gamma \sum_{\bs'} \left(\hat{\transitions}(\bs'|\bs, \ba) - \transitions(\bs'|\bs,\ba)\right) \E_{\policy(\ba'|\bs')}\left[\hat{Q}^k(\bs' , \ba')\right]\right\vert\\
    &\leq \left\vert r - r(\bs, \ba)\right\vert + \gamma \left\vert \sum_{\bs'} \left(\hat{\transitions}(\bs'|\bs, \ba) - \transitions(\bs'|\bs,\ba)\right) \E_{\policy(\ba'|\bs')}\left[\hat{Q}^k(\bs' , \ba')\right]\right\vert\\
    &\leq \frac{C_{r, \delta} + \gamma C_{T, \delta} 2R_{\max} / (1 - \gamma)}{\sqrt{|\mathcal{D}(\bs, \ba)|}}. 
\end{align*}
This gives us an expression to bound the potential overestimation that can occur due to sampling error, as a function of a constant, $C_{r, \transitions, \delta}$ that can be expressed as a function of $C_{r, \delta}$ and $C_{T, \delta}$, and depends on $\delta$ via a $\sqrt{\log (1/\delta)}$ dependency. This is similar to how prior works have bounded the sampling error due to an empirical Bellman operator~\citep{osband2017posterior,jaksch2010near}.

\subsection{Safe Policy Improvement Guarantee for CQL}
\label{app:safe_pi}
In this section, we prove a safe policy improvement guarantee for CQL (and this analysis is also applicable in general to policy constraint methods with appropriate choices of constraints as we will show). We define the empirical MDP, $\hat{M}$ as the MDP formed by the transitions in the replay buffer, $\hat{M} = \{\bs, \ba, r, \bs' \in \mathcal{D} \}$, and let $J(\pi, \hat{M})$ denote the return of a policy $\pi(\ba|\bs)$ in MDP $\hat{M}$. Our goal is to show that $J(\policy, M) \geq J(\hatbehavior, M) - \varepsilon$, with high probability, where $\varepsilon$ is a small constant. We start by proving that CQL optimizes a penalized RL objective in the empirical MDP, $\hat{M}$.

\begin{lemma}
Let $\hat{Q}^\pi$ be the fixed point of Equation~\ref{eqn:modified_policy_eval}, then $\policy^*(\ba|\bs) := \arg\max_{\policy} \mathbb{E}_{\bs \sim \rho(\bs)}[\hat{V}^\pi(\bs)]$ is equivalently obtained by solving:
\begin{equation}
\label{eqn:policy_optimality}
    \policy^*(\ba|\bs) \leftarrow \arg \max_{\policy}~~ J(\pi, \hat{M}) - \alpha \frac{1}{1 - \gamma} \mathbb{E}_{\bs \sim d^\policy_{\hat{M}}(\bs)}\left[D_{\text{CQL}}(\policy, \hatbehavior)(\bs) \right],
\end{equation}
where $D_{\text{CQL}}(\policy, \hatbehavior)(\bs) := \sum_{\ba} \policy(\ba|\bs) \cdot \left(\frac{\policy(\ba|\bs)}{\hatbehavior(\ba|\bs)} - 1 \right)$.
\end{lemma}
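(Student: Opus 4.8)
The plan is to exploit the closed-form fixed-point characterization of $\hat{Q}^\policy$ already established in the proof of Theorem~\ref{thm:cql_underestimates} and to reinterpret it as an exact policy-evaluation Q-function in a modified-reward version of the empirical MDP $\hat{M}$. Recall that, setting $\mu=\policy$ and using the empirical Bellman operator $\hat{\bellman}^\policy$ (which backs up against the transitions actually present in $\hat{M}$), the fixed point of Equation~\ref{eqn:modified_policy_eval} satisfies
\begin{equation*}
\hat{Q}^\policy(\bs,\ba) = \hat{\bellman}^\policy \hat{Q}^\policy(\bs,\ba) - \alpha\left[\frac{\policy(\ba|\bs)}{\hatbehavior(\ba|\bs)} - 1\right].
\end{equation*}
Expanding $\hat{\bellman}^\policy$, this is precisely the Bellman consistency equation for $\policy$ evaluated in $\hat{M}$ under the modified reward $\tilde{r}(\bs,\ba) := r(\bs,\ba) - \alpha\big(\policy(\ba|\bs)/\hatbehavior(\ba|\bs) - 1\big)$. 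Hence $\hat{Q}^\policy$ is exactly the Q-function of $\policy$ in $\hat{M}$ with reward $\tilde{r}$, and $\hat{V}^\policy(\bs)=\E_{\policy(\ba|\bs)}[\hat{Q}^\policy(\bs,\ba)]$ is the corresponding value function.

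Next I would use linearity of the policy-evaluation fixed point in the reward. Because $\policy$ is held fixed, $\tilde{r}$ decomposes additively into the original reward $r$ and the penalty $-\alpha(\policy/\hatbehavior - 1)$, so the value splits as $\hat{V}^\policy(\bs) = V^\policy_{\hat{M}}(\bs) - \alpha\,U^\policy(\bs)$, where $V^\policy_{\hat{M}}$ is the value of $\policy$ in $\hat{M}$ under $r$ and $U^\policy$ is its value under the penalty reward $\policy/\hatbehavior - 1$. Taking the expectation over $\bs\sim\rho$ then gives $\E_{\bs\sim\rho}[\hat{V}^\policy(\bs)] = J(\policy,\hat{M}) - \alpha\,\E_{\bs\sim\rho}[U^\policy(\bs)]$, which already isolates the desired two pieces.

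The final step converts the penalty term into an occupancy-measure expectation via the standard discounted-occupancy identity: for any state-action function $g$, $\E_{\bs\sim\rho}[V^\policy_g(\bs)] = \frac{1}{1-\gamma}\,\E_{\bs\sim d^\policy_{\hat{M}}(\bs),\,\ba\sim\policy}[g(\bs,\ba)]$, where $d^\policy_{\hat{M}}$ is the normalized discounted state occupancy induced by $\policy$ started from $\rho$ in $\hat{M}$. Applying this with $g(\bs,\ba)=\policy(\ba|\bs)/\hatbehavior(\ba|\bs)-1$, the inner expectation over $\ba\sim\policy$ is exactly $D_{\text{CQL}}(\policy,\hatbehavior)(\bs)$, yielding $\E_{\bs\sim\rho}[\hat{V}^\policy(\bs)] = J(\policy,\hat{M}) - \frac{\alpha}{1-\gamma}\,\E_{\bs\sim d^\policy_{\hat{M}}(\bs)}[D_{\text{CQL}}(\policy,\hatbehavior)(\bs)]$. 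Since maximizing the left-hand side over $\policy$ is by definition the problem $\arg\max_\policy \E_{\bs\sim\rho}[\hat{V}^\policy(\bs)]$, the claimed equivalence with Equation~\ref{eqn:policy_optimality} follows immediately.

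I expect the one genuine subtlety to be the fact that the ``reward'' $\tilde{r}$ depends on $\policy$ itself through the density ratio, so $\hat{M}$-with-$\tilde{r}$ is not a standard fixed-reward MDP. The main obstacle is therefore to argue carefully that this is harmless: for the purpose of \emph{evaluating} a given $\policy$, the policy is frozen, the fixed-point equation above is linear in the reward, and both the unrolling into an infinite discounted sum and the occupancy identity go through verbatim once $\policy$ is fixed. The remaining care is bookkeeping — confirming that $d^\policy_{\hat{M}}$ is indeed the $\rho$-initialized discounted occupancy in the empirical MDP so that the normalization factor $1/(1-\gamma)$ matches exactly.
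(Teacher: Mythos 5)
Your proposal is correct and follows essentially the same route as the paper: the paper's proof is the one-sentence observation that $\hat{Q}^\pi$ solves the Bellman fixed-point equation in $\hat{M}$ under the altered reward $r(\bs,\ba) - \alpha\left[\frac{\policy(\ba|\bs)}{\hatbehavior(\ba|\bs)} - 1\right]$, from which Equation~\ref{eqn:policy_optimality_main} follows, and your argument simply makes explicit the linearity-in-reward decomposition and the discounted-occupancy identity that the paper leaves implicit. Your noted subtlety (the penalty ``reward'' depends on $\policy$) is handled correctly: for evaluating a fixed $\policy$ the modified reward is just a fixed function of $(\bs,\ba)$, so the standard identities apply.
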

\begin{proof}
$\hat{Q}^\pi$ is obtained by solving a recursive Bellman fixed point equation in the empirical MDP $\hat{M}$, with an altered reward, $r(\bs, \ba) - \alpha \left[ \frac{\policy(\ba|\bs)}{\hatbehavior(\ba|\bs)} - 1 \right]$, hence the optimal policy $\policy^*(\ba|\bs)$ obtained by optimizing the value under the CQL Q-function equivalently is characterized via Equation~\ref{eqn:policy_optimality}.
\end{proof}

Now, our goal is to relate the performance of $\policy^*(\ba|\bs)$ in the \emph{actual} MDP, $M$, to the performance of the behavior policy, $\behavior(\ba|\bs)$. To this end, we prove the following theorem:
\begin{theorem}
\label{thm:zeta_safe_app}
Let $\policy^*(\ba|\bs)$ be the policy obtained by optimizing Equation~\ref{eqn:policy_optimality}. Then the performance of $\policy^*(\ba|\bs)$ in the actual MDP $M$ satisfies,
\begin{multline}
    \label{eqn:performance_relation}
    J(\policy^*, M) \geq J(\hatbehavior, M) - 2\left({\frac{C_{r, \delta}}{1 - \gamma} + \frac{\gamma R_{\max} C_{T, \delta}}{(1 - \gamma)^2}} \right)\mathbb{E}_{\bs \sim d^{\policy^*}_{\hat{M}}(\bs)}\left[ \frac{\sqrt{|\mathcal{A}|}}{\sqrt{|\mathcal{D}(\bs)|}} \sqrt{ D_{\text{CQL}}(\policy^*, \hatbehavior)(\bs) + 1} \right] \\
    + \alpha \frac{1}{1 - \gamma} \mathbb{E}_{\bs \sim d^{\policy^*}_{\hat{M}}(\bs)}\left[D_{\text{CQL}}(\policy^*, \hatbehavior)(\bs) \right].
\end{multline}
\end{theorem}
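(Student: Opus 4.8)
The plan is to decompose the performance gap into an empirical-MDP improvement term and two sampling-error terms, each comparing $M$ to $\hat{M}$ for a single fixed policy:
\begin{align*}
J(\policy^*, M) - J(\hatbehavior, M) = &~\underbrace{\left[J(\policy^*, M) - J(\policy^*, \hat{M})\right]}_{\text{(i)}} + \underbrace{\left[J(\policy^*, \hat{M}) - J(\hatbehavior, \hat{M})\right]}_{\text{(ii)}}\\
&+ \underbrace{\left[J(\hatbehavior, \hat{M}) - J(\hatbehavior, M)\right]}_{\text{(iii)}}.
\end{align*}
Term (ii) is handled immediately by the optimality characterization (Theorem~\ref{thm:well_defined_obj}): since $\policy^*$ maximizes $J(\policy, \hat{M}) - \tfrac{\alpha}{1-\gamma}\E_{\bs \sim d^\policy_{\hat{M}}}[D_{\text{CQL}}(\policy, \hatbehavior)(\bs)]$ and $D_{\text{CQL}}(\hatbehavior, \hatbehavior) \equiv 0$, comparing this objective at $\policy^*$ and $\hatbehavior$ gives $\text{(ii)} \geq \tfrac{\alpha}{1-\gamma}\E_{\bs \sim d^{\policy^*}_{\hat{M}}}[D_{\text{CQL}}(\policy^*, \hatbehavior)(\bs)]$, which is precisely the positive term in the statement.

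For terms (i) and (iii) I would invoke a simulation-lemma identity expressing the return gap of a \emph{fixed} policy $\policy$ across two MDPs that differ only in reward and dynamics,
\begin{equation*}
J(\policy, M) - J(\policy, \hat{M}) = \frac{1}{1-\gamma}\E_{(\bs, \ba) \sim d^\policy_{\hat{M}}}\left[\left(r - \hat{r}\right)(\bs, \ba) + \gamma \sum_{\bs'}\left(T - \hat{T}\right)(\bs'|\bs, \ba)\, V^\policy_M(\bs')\right],
\end{equation*}
where the empirical visitation $d^\policy_{\hat{M}}$ pairs with the true value function $V^\policy_M$. Bounding $|V^\policy_M| \leq R_{\max}/(1-\gamma)$ and substituting the concentration bounds of Appendix~\ref{app:handling_unobserved_actions}, namely $|r - \hat{r}| \leq C_{r,\delta}/\sqrt{|\mathcal{D}(\bs, \ba)|}$ and $\|T - \hat{T}\|_1 \leq C_{T,\delta}/\sqrt{|\mathcal{D}(\bs, \ba)|}$, yields, with probability $\geq 1 - \delta$,
\begin{equation*}
\left|J(\policy, M) - J(\policy, \hat{M})\right| \leq \left(\frac{C_{r,\delta}}{1-\gamma} + \frac{\gamma R_{\max} C_{T,\delta}}{(1-\gamma)^2}\right)\E_{(\bs, \ba) \sim d^\policy_{\hat{M}}}\left[\frac{1}{\sqrt{|\mathcal{D}(\bs, \ba)|}}\right].
\end{equation*}

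The final ingredient converts the state-action counts into the state counts of the statement. Writing $|\mathcal{D}(\bs, \ba)| = |\mathcal{D}(\bs)|\,\hatbehavior(\ba|\bs)$ and evaluating the inner expectation over $\ba \sim \policy(\ba|\bs)$, Cauchy-Schwarz gives
\begin{equation*}
\sum_{\ba}\frac{\policy(\ba|\bs)}{\sqrt{\hatbehavior(\ba|\bs)}} \leq \sqrt{|\mathcal{A}|}\,\sqrt{\sum_{\ba}\frac{\policy(\ba|\bs)^2}{\hatbehavior(\ba|\bs)}} = \sqrt{|\mathcal{A}|}\,\sqrt{D_{\text{CQL}}(\policy, \hatbehavior)(\bs) + 1},
\end{equation*}
since $\sum_{\ba}\policy^2/\hatbehavior = \E_{\policy}[\policy/\hatbehavior] = D_{\text{CQL}}(\policy, \hatbehavior)(\bs) + 1$; this reproduces the $\sqrt{|\mathcal{A}|}\sqrt{D_{\text{CQL}}+1}/\sqrt{|\mathcal{D}(\bs)|}$ factor exactly. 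Collecting (i)--(iii) then gives the claimed inequality. The step I expect to be the main obstacle is combining the two sampling-error terms into the single factor of $2$ with a common expectation under $d^{\policy^*}_{\hat{M}}$: term (iii) naturally comes with the visitation $d^{\hatbehavior}_{\hat{M}}$ and a Cauchy-Schwarz factor of only $\sqrt{|\mathcal{A}|}$ (because $D_{\text{CQL}}(\hatbehavior, \hatbehavior) = 0$), so I must argue --- using the closeness of $\policy^*$ to $\hatbehavior$ enforced by the $D_{\text{CQL}}$ penalty, or a uniform worst-case bound over the data support --- that it is dominated by the $\policy^*$ term, which is what licenses writing both as $2$ times the single $d^{\policy^*}_{\hat{M}}$ expectation.
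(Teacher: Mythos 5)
Your proposal matches the paper's proof essentially step for step: the same three-term decomposition, the same use of the optimality of $\policy^*$ for the penalized empirical objective to dispatch term (ii), the same concentration-plus-Cauchy--Schwarz lemma bounding $|J(\policy,\hat{M})-J(\policy,M)|$ by $\bigl(\tfrac{C_{r,\delta}}{1-\gamma}+\tfrac{\gamma R_{\max}C_{T,\delta}}{(1-\gamma)^2}\bigr)\E_{\bs\sim d^\policy_{\hat{M}}}[\sqrt{|\mathcal{A}|}\sqrt{D_{\text{CQL}}(\policy,\hatbehavior)(\bs)+1}/\sqrt{|\mathcal{D}(\bs)|}]$, and the same factor-of-$2$ device exploiting $D_{\text{CQL}}(\hatbehavior,\hatbehavior)\equiv 0$. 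The one obstacle you flag --- that term (iii) naturally carries the visitation $d^{\hatbehavior}_{\hat{M}}$ rather than $d^{\policy^*}_{\hat{M}}$, so pointwise dominance of the integrand does not by itself license folding both errors into a single expectation --- is genuine, but the paper does not resolve it either: it simply asserts the pointwise comparison of integrands and writes both sampling-error terms under $d^{\policy^*}_{\hat{M}}$, so on this point your write-up is, if anything, more candid than the original.
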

\begin{proof}
The proof for this statement is divided into two parts. The first part involves relating the return of $\policy^*(\ba|\bs)$ in MDP $\hat{M}$ with the return of $\hatbehavior$ in MDP $\hat{M}$. Since, $\policy^*(\ba|\bs)$ optimizes Equation~\ref{eqn:policy_optimality}, we can relate $J(\policy^*, \hat{M})$ and $J(\hatbehavior, \hat{M})$ as:
\begin{equation*}
    J(\policy^*, \hat{M}) - \alpha \mathbb{E}_{\bs \sim d^\policy_{\hat{M}}(\bs)}\left[D_{\text{CQL}}(\policy^*, \behavior)(\bs) \right] \geq J(\hatbehavior, \hat{M)} - 0 = J(\hatbehavior, \hat{M}).
\end{equation*}
The next step involves using concentration inequalities to upper and lower bound $J(\policy^*, \hat{M})$ and $J(\policy^*, M)$ and the corresponding difference for the behavior policy. In order to do so, we prove the following lemma, that relates $J(\policy, M)$ and $J(\policy, \hat{M})$ for an arbitrary policy $\policy$. We use this lemma to then obtain the proof for the above theorem.
\end{proof}

\begin{lemma}
For any MDP $M$, an empirical MDP $\hat{M}$ generated by sampling actions according to the behavior policy $\hatbehavior(\ba|\bs)$ and a given policy $\policy$,
\begin{equation*}
    \left\vert J(\policy, \hat{M}) - J(\policy, M) \right\vert \leq \left({\frac{C_{r, \delta}}{1 - \gamma} + \frac{\gamma R_{\max} C_{T, \delta}}{(1 - \gamma)^2}} \right)\mathbb{E}_{\bs \sim d^{\policy}_{\hat{M}}(\bs)}\left[ \frac{\sqrt{|\mathcal{A}|}}{{|\mathcal{D}(\bs)|}} \sqrt{ D_{\text{CQL}}(\policy, \hatbehavior)(\bs) + 1} \right].
\end{equation*}
\end{lemma}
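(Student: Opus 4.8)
The plan is to prove this via the standard simulation-lemma decomposition, controlling the return gap by the per-state-action reward and transition estimation errors, and then re-expressing the state-action count weighting in terms of $D_{\text{CQL}}$. First I would write each return through the policy's value function in its respective MDP and subtract the two Bellman fixed-point equations $V^\policy_{\hat{M}} = \hat{r}^\policy + \gamma \hat{P}^\policy V^\policy_{\hat{M}}$ and $V^\policy_M = r^\policy + \gamma P^\policy V^\policy_M$. Rearranging and inverting $I - \gamma \hat{P}^\policy$ (valid since $\gamma < 1$) gives the closed form
\begin{equation*}
V^\policy_{\hat{M}} - V^\policy_M = \left(I - \gamma \hat{P}^\policy\right)^{-1}\left[(\hat{r}^\policy - r^\policy) + \gamma (\hat{P}^\policy - P^\policy) V^\policy_M\right].
\end{equation*}
Taking the expectation of the left-hand side under the initial-state distribution turns $\left(I - \gamma \hat{P}^\policy\right)^{-1}$ into $\tfrac{1}{1-\gamma}$ times the discounted occupancy $d^\policy_{\hat{M}}$ of the \emph{empirical} MDP, yielding
\begin{equation*}
J(\policy,\hat{M}) - J(\policy, M) = \frac{1}{1-\gamma}\,\E_{\bs\sim d^\policy_{\hat{M}},\,\ba\sim\policy}\!\left[(\hat{r} - r)(\bs,\ba) + \gamma\big(\hat{\transitions}(\cdot|\bs,\ba) - \transitions(\cdot|\bs,\ba)\big)^{\top} V^\policy_M\right],
\end{equation*}
whose outer expectation already matches the $\E_{\bs\sim d^\policy_{\hat{M}}}$ on the right-hand side of the claim.

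Next I would take absolute values, apply $|\E[\cdot]|\le\E[|\cdot|]$ and the triangle inequality inside the expectation, and invoke the concentration Assumption from Appendix~\ref{app:handling_unobserved_actions}. The reward term is at most $C_{r,\delta}/\sqrt{|\mathcal{D}(\bs,\ba)|}$, while the transition term is bounded by $\|\hat{\transitions}(\cdot|\bs,\ba)-\transitions(\cdot|\bs,\ba)\|_1\,\|V^\policy_M\|_\infty \le \tfrac{C_{T,\delta}}{\sqrt{|\mathcal{D}(\bs,\ba)|}}\cdot\tfrac{R_{\max}}{1-\gamma}$, using $\|V^\policy_M\|_\infty\le R_{\max}/(1-\gamma)$ (which follows from $|r|\le R_{\max}$). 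Folding in the $1/(1-\gamma)$ prefactor from the occupancy normalization produces exactly the constant $\tfrac{C_{r,\delta}}{1-\gamma}+\tfrac{\gamma R_{\max} C_{T,\delta}}{(1-\gamma)^2}$ multiplying $\E_{\bs\sim d^\policy_{\hat{M}},\,\ba\sim\policy}\big[1/\sqrt{|\mathcal{D}(\bs,\ba)|}\big]$.

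Finally I would convert the state-action count expectation into the stated state-count form. Using $|\mathcal{D}(\bs,\ba)| = \hatbehavior(\ba|\bs)\,|\mathcal{D}(\bs)|$ gives $\E_{\ba\sim\policy}\big[1/\sqrt{|\mathcal{D}(\bs,\ba)|}\big] = \tfrac{1}{\sqrt{|\mathcal{D}(\bs)|}}\sum_\ba \policy(\ba|\bs)/\sqrt{\hatbehavior(\ba|\bs)}$, and Cauchy--Schwarz against the constant vector bounds $\sum_\ba \policy(\ba|\bs)/\sqrt{\hatbehavior(\ba|\bs)} \le \sqrt{|\mathcal{A}|}\,\big(\sum_\ba \policy(\ba|\bs)^2/\hatbehavior(\ba|\bs)\big)^{1/2}$. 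Since $\sum_\ba \policy(\ba|\bs)^2/\hatbehavior(\ba|\bs) = D_{\text{CQL}}(\policy,\hatbehavior)(\bs)+1$ directly from the definition of $D_{\text{CQL}}$, this produces the factor $\sqrt{|\mathcal{A}|}\,\sqrt{D_{\text{CQL}}(\policy,\hatbehavior)(\bs)+1}$ inside the expectation and completes the bound.

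The main obstacle will be bookkeeping rather than any single hard inequality: I must ensure the occupancy generated by $\left(I - \gamma \hat{P}^\policy\right)^{-1}$ is that of the \emph{empirical} MDP (so it matches $d^\policy_{\hat{M}}$ on the right-hand side, not $d^\policy_M$), and that the high-probability concentration bounds hold simultaneously at all visited $(\bs,\ba)$, which requires a union bound absorbed into $C_{r,\delta}$ and $C_{T,\delta}$ (and explains their $\sqrt{\log(1/\delta)}$ dependence). I also note that this derivation yields $\sqrt{|\mathcal{D}(\bs)|}$ in the denominator, consistent with the expression for $\zeta$ in Theorem~\ref{thm:zeta_safe}, whereas the lemma as written shows $|\mathcal{D}(\bs)|$; I would treat the former as the intended statement.
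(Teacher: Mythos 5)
Your proof is correct and arrives at exactly the stated constant, but it takes a genuinely different (and arguably cleaner) route than the paper. The paper decomposes $J(\policy,\hat{M}) - J(\policy,M)$ into (i) a reward-estimation-error term weighted by $d^\policy_{\hat{M}}$ and (ii) a term of the form $\sum_{\bs,\ba}\bigl(d^\policy_{\hat{M}}(\bs)-d^\policy_{M}(\bs)\bigr)\policy(\ba|\bs)r_M(\bs,\ba)$, and then bounds the occupancy shift $\Vert d^\policy_{\hat{M}}-d^\policy_M\Vert_1$ explicitly via the CPO-style perturbation identity $d^\policy_{M}-d^\policy_{\hat{M}}=\gamma(I-\gamma P^\policy_{\hat{M}})^{-1}(P^\policy_M-P^\policy_{\hat{M}})d^\policy_{\hat{M}}$, paying $R_{\max}$ against the total variation of the transition kernels. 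You instead subtract the two Bellman fixed-point equations and invert $I-\gamma\hat{P}^\policy$ once, which folds the entire discrepancy into a single expectation under $d^\policy_{\hat{M}}$ of the local reward error plus $\gamma$ times the transition error applied to $V^\policy_M$ (bounded by $\Vert V^\policy_M\Vert_\infty\le R_{\max}/(1-\gamma)$); this is the textbook simulation lemma and avoids the intermediate occupancy-measure bound entirely. Both routes produce the identical prefactor and both finish with the same conversion $|\mathcal{D}(\bs,\ba)|=\hatbehavior(\ba|\bs)|\mathcal{D}(\bs)|$ followed by Cauchy--Schwarz to obtain $\sqrt{|\mathcal{A}|}\sqrt{D_{\text{CQL}}(\policy,\hatbehavior)(\bs)+1}$ (the paper phrases this as $(\sum_\ba\alpha(\bs,\ba))^2\le|\mathcal{A}|\sum_\ba\alpha(\bs,\ba)^2$, which is the same inequality). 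Your closing remarks are also well taken: the derivation indeed yields $\sqrt{|\mathcal{D}(\bs)|}$ in the denominator, matching Theorem~\ref{thm:zeta_safe}, so the $|\mathcal{D}(\bs)|$ in the lemma as printed is a typo, and the union bound over visited state-action pairs is implicitly absorbed into $C_{r,\delta}$ and $C_{T,\delta}$ in the paper as well.
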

\begin{proof}
To prove this, we first use the triangle inequality to clearly separate reward and transition dynamics contributions in the expected return.
\begin{align}
    & \left\vert J(\policy, \hat{M}) - J(\policy, M) \right\vert = \frac{1}{1 - \gamma} \left\vert \sum_{\bs, \ba} d^\policy_{\hat{M}}(\bs) \policy(\ba|\bs) r_{\hat{M}}(\bs, \ba) -  \sum_{\bs, \ba} d^\policy_{{M}}(\bs) \policy(\ba|\bs) r_{{M}}(\bs, \ba) \right\vert\\
    &\leq \frac{1}{1 - \gamma} \left\vert \sum_{\bs, \ba} d^\policy_{\hat{M}}(\bs) \underbrace{\left[ \policy(\ba|\bs) ( r_{\hat{M}}(\bs, \ba) - r_M(\bs, \ba) \right]}_{:= \Delta_1(\bs)} \right\vert + \frac{1}{1 - \gamma} \left\vert \sum_{\bs, \ba} \left(d^\policy_{\hat{M}}(\bs) - d^\policy_{{M}}(\bs) \right) \policy(\ba|\bs) r_M(\bs, \ba) \right\vert
\end{align}
We first use concentration inequalities to upper bound $\Delta_1(\bs)$. Note that under concentration assumptions, and by also using the fact that $\mathbb{E}[\Delta_1(\bs)] = 0$ (in the limit of infinite data), we get:
\begin{align*}
    \vert\Delta_1(\bs)\vert &~ \leq \sum_{\ba} \policy(\ba|\bs) \vert r_{\hat{M}}(\bs, \ba) - r_{M}(\bs, \ba)\vert \leq \sum_{\ba} \policy(\ba|\bs) \frac{C_{r, \delta}}{\sqrt{|\mathcal{D}(\bs)| \cdot |\mathcal{D}(\ba|\bs)|}} = \frac{C_{r, \delta}}{\sqrt{|\mathcal{D}(\bs)|}} \sum_{\ba} \frac{\policy(\ba|\bs)}{\sqrt{\hatbehavior(\ba|\bs)}},
\end{align*}
where the last step follows from the fact that $|\mathcal{D}(\bs, \ba)| = |\mathcal{D}(\bs)| \cdot \hatbehavior(\ba|\bs)$.

Next we bound the second term. We first note that if we can bound $\vert\vert d^\policy_{\hat{M}} - d^\policy_M \vert\vert_1$, (i.e., the total variation between the marginal state distributions in $\hat{M}$ and $M$, then we are done, since $|r_M(\bs, \ba)| \leq R_{\max}$ and $\policy(\ba|\bs) \leq 1$, and hence we bound the second term effectively. We use an analysis similar to \citet{achiam2017constrained} to obtain this total variation bound. Define, $G = (I - \gamma P^{\policy}_{M})^{-1}$ and $\bar{G} = (I - \gamma P^\policy_{\hat{M}})^{-1}$ and let $\Delta = P^\policy_{M} - P^{\policy}_{\hat{M}}$. Then, we can write:
\begin{equation*}
    d^\policy_{\hat{M}} - d^\policy_{M} = (1 - \gamma) (\bar{G} - G) \rho,
\end{equation*}
where $\rho(\bs)$ is the initial state distribution, which is assumed to be the same for both the MDPs. Then, using Equation 21 from \citet{achiam2017constrained}, we can simplify this to obtain,
\begin{equation*}
    d^\policy_{{M}} - d^\policy_{\hat{M}} = (1 - \gamma) \gamma {G} \Delta \bar{G} \mu = \gamma \bar{G} \Delta d^\policy_{\hat{M}}
\end{equation*}
Now, following steps similar to proof of Lemma 3 in \citet{achiam2017constrained} we obtain,
\begin{align*}
    \vert\vert \Delta d^\policy_M||_{1} &= \sum_{\bs'} \left\vert \sum_{\bs} \Delta(\bs'|\bs) d^\policy_{M}(\bs) \right\vert \leq \sum_{\bs, \bs'} \left\vert \Delta(\bs'|\bs) \right\vert d^\policy_{\hat{M}}\\
    &= \sum_{\bs, \bs'} \left\vert \sum_{\ba} \left(P_{\hat{M}}(\bs'|\bs, \ba) - P_{M}(\bs'|\bs, \ba) \right) \policy(\ba|\bs) \right\vert d^\policy_{\hat{M}}(\bs) \\
    &\leq \sum_{\bs, \ba} \vert\vert P_{\hat{M}}(\cdot|\bs, \ba) - P_M(\cdot|\bs, \ba) \vert\vert_{1} \policy(\ba|\bs) d^\policy_{\hat{M}}(\bs)\\
    &\leq \sum_{\bs} d^\policy_{\hat{M}}(\bs) \frac{C_{T, \delta}}{\sqrt{|\mathcal{D}(\bs)|}} \sum_{\ba}  \frac{\policy(\ba|\bs)}{\sqrt{\hatbehavior(\ba|\bs)}}.
\end{align*}
Hence, we can bound the second term by:
\begin{align*}
    \left\vert \sum_{\bs} \left(d^\policy_{\hat{M}}(\bs) - d^\policy_{{M}}(\bs) \right) \policy(\ba|\bs) r_M(\bs, \ba) \right\vert \leq \frac{\gamma C_{T, \delta} R_{\max}}{(1 - \gamma)} \sum_{\bs} d^\policy_{\hat{M}}(\bs) \frac{1}{\sqrt{|\mathcal{D}(\bs)|}} \sum_{\ba}  \frac{\policy(\ba|\bs)}{\sqrt{\hatbehavior(\ba|\bs)}}.
\end{align*}

To finally obtain $D_{\text{CQL}}(\policy, \behavior)(\bs)$ in the bound, let $\alpha(\bs, \ba) := \frac{\policy(\ba|\bs)}{\sqrt{\hatbehavior(\ba|\bs)}}$. Then, we can write $D_{\text{CQL}}(\policy, \hatbehavior)(\bs)$ as follows:
\begin{align*}
    D_\text{CQL}(\bs) &= \sum_{\ba} \frac{\policy(\ba|\bs)^2}{\hatbehavior(\ba|\bs)} - 1\\
    \implies D_{\text{CQL}}(\bs) + 1 &= \sum_{\ba} \alpha(\bs, \ba)^{2}\\
    \implies D_{\text{CQL}}(\bs) + 1 &\leq \left(\sum_{\ba} \alpha(\bs,\ba)\right)^2 \leq |\mathcal{A}| \left(D_\text{CQL}(\bs) + 1 \right).
\end{align*}
Combining these arguments together, we obtain the following upper bound on $|J(\policy, M) - J(\policy, \hat{M})|$,
\begin{equation*}
    \left\vert J(\policy, \hat{M}) - J(\policy, M) \right\vert \leq \left({\frac{C_{r, \delta}}{1 - \gamma} + \frac{\gamma R_{\max} C_{T, \delta}}{(1 - \gamma)^2}} \right)\mathbb{E}_{\bs \sim d^{\policy}_{\hat{M}}(\bs)}\left[ \frac{\sqrt{|\mathcal{A}|}}{\sqrt{|\mathcal{D}(\bs)|}} \sqrt{ D_{\text{CQL}}(\policy, \hatbehavior)(\bs) + 1} \right].
\end{equation*}
\end{proof}
The proof of Theorem~\ref{thm:zeta_safe_app} is then completed by using the above Lemma for bounding the sampling error for $\policy^*$ and then upper bounding the sampling error for $\hatbehavior$ by the corresponding sampling error for $\policy^*$, hence giving us a factor of $2$ on the sampling error term in Theorem~\ref{thm:zeta_safe_app}. To see why this is mathematically correct, note that $D_{\text{CQL}}(\hatbehavior, \hatbehavior)(\bs) = 0$, hence $\sqrt{D_{\text{CQL}}(\policy^*, \hatbehavior)(\bs) + 1} \geq \sqrt{D_{\text{CQL}}(\hatbehavior, \hatbehavior)(\bs) + 1}$, which means the sampling error term for $\policy^*$ pointwise upper bounds the sampling error for $\hatbehavior$, which justifies the factor of $2$.

\section{Extended Related Work and Connections to Prior Methods}
\label{sec:extended_related_work}

In this section, we discuss related works to supplement Section~\ref{sec:related}. Specifically, we discuss the relationships between CQL and uncertainty estimation and policy-constraint methods.

\textbf{Relationship to uncertainty estimation in offline RL.} 
A number of prior approaches to offline RL estimate some sort of epistemic uncertainty to determine the trustworthiness of a Q-value prediction~\citep{kumar2019stabilizing,fujimoto2018off,agarwal2019optimistic,levine2020offline}.
The policy is then optimized with respect to lower-confidence estimates derived using the uncertainty metric. However, it has been empirically noted that uncertainty-based methods are not sufficient to prevent against OOD actions~\citep{fujimoto2018off,kumar2019stabilizing} in and of themselves, are often augmented with policy constraints due to the inability to estimate tight and calibrated uncertainty sets. Such loose or uncalibrated uncertainty sets are still effective in providing exploratory behavior in standard, online RL~\citep{osband2016deep,osband2017posterior}, where these methods were originally developed. However, offline RL places high demands on the fidelity of such sets~\citep{levine2020offline}, making it hard to directly use these methods.

\textbf{How does CQL relate to prior uncertainty estimation methods?} Typical uncertainty estimation methods rely on learning a pointwise upper bound on the Q-function that depends on epistemic uncertainty~\citep{jaksch2010near,osband2017posterior} and these upper-confidence bound values are then used for exploration. In the context of offline RL, this means learning a pointwise lower-bound on the Q-function. We show in Section~\ref{sec:policy_eval} that, with a na\"ive choice of regularizer (Equation~\ref{eqn:objective_1}), we can learn a uniform lower-bound on the Q-function, however, we then showed that we can improve this bound since the value of the policy is the primary quantity of interest that needs to be lower-bounded. This implies that CQL strengthens the popular practice of point-wise lower-bounds made by uncertainty estimation methods.

\textbf{Can we make CQL dependent on uncertainty?} We can slightly modify CQL to make it be account for epistemic uncertainty under certain statistical concentration assumptions. Typical uncertainty estimation methods in RL~\citep{osband2016deep,jaksch2010near} assume the applicability of concentration inequalities (for example, by making sub-Gaussian assumptions on the reward and dynamics), to obtain upper or lower-confidence bounds and the canonical amount of over- (under-) estimation is usually given by,  $\mathcal{O}\left(\frac{1}{\sqrt{n(\bs, \ba)}}\right)$, where $n(\bs, \ba)$ is the number of times a state-action pair $(\bs, \ba)$ is observed in the dataset. We can incorporate such behavior in CQL by modifying Equation~\ref{eqn:cql_framework} to update Bellman error weighted by the cardinality of the dataset, $|\mathcal{D}|$, which gives rise to the following effective Q-function update in the tabular setting, without function approximation:
\begin{equation*}
    \hat{Q}^{k+1}(\bs, \ba) = \bellman^\pi \hat{Q}^k(\bs, \ba) - \alpha \frac{\mu(\ba|\bs) - \behavior(\ba|\bs)}{n(\bs,\ba)} \rightarrow \bellman^\pi \hat{Q}^k(\bs, \ba) \text{~~as~~} n(\bs, \ba) \rightarrow \infty.
\end{equation*}
In the limit of infinite data, i.e. $n(\bs, \ba) \rightarrow \infty$, we find that the amount of underestimation tends to $0$. When only a finite-sized dataset is provided, i.e. $n(\bs, \ba) < N$, for some $N$, we observe that by making certain assumptions, previously used in prior work~\citep{jaksch2010near,osband2016deep} on the concentration properties of the reward value, $r(\bs, \ba)$ and the dynamics function, $\transitions(\bs'|\bs, \ba)$, such as follows:
\begin{equation*}
    ||\hat{r}(\bs, \ba) - r(\bs, \ba)|| \leq \frac{C_r}{\sqrt{n(\bs, \ba)}} ~~~ \text{and} ~~~ ||\hat{\transitions}(\bs'|\bs, \ba) - \transitions(\bs'|\bs, \ba)|| \leq \frac{C_\transitions}{\sqrt{n(\bs, \ba)}}, 
\end{equation*}
where $C_r$ and $C_\transitions$ are constants, that depend on the concentration properties of the MDP, and by appropriately choosing $\alpha$, i.e. $\alpha = \Omega (n(\bs, \ba))$, such that the learned Q-function still lower-bounds the actual Q-function (by nullifying the possible overestimation that appears due to finite samples), we are still guaranteed a lower bound.

\section{Additional Experimental Setup and Implementation Details}
\label{sec:experimental_details}
In this section, we discuss some additional implementation details related to our method. As discussed in Section~\ref{sec:practical_alg}, CQL can be implemented as either a Q-learning or an actor-critic method. For our experiments on D4RL benchmarks~\citep{d4rl}, we implemented CQL on top of soft actor-critic (SAC)~\citep{haarnoja}, and for experiments on discrete-action Atari tasks, we implemented CQL on top of QR-DQN~\citep{dabney2018distributional}.
We experimented with two ways of implementing CQL, first with a fixed $\alpha$, where we chose $\alpha=5.0$, and second with a varying $\alpha$ chosen via dual gradient-descent. The latter formulation automates the choice of $\alpha$ by introducing a ``budget'' parameter, $\tau$, as shown below:
\begin{equation}
    \small{\min_{Q} \textcolor{red}{\max_{\alpha \geq 0}}~ \alpha \left( \E_{\bs \sim d^\behavior(\bs)}\left[\log \sum_{\ba} \exp(Q(\bs, \ba))\!-\!\E_{\ba \sim \behavior(\ba|\bs)}\left[Q(\bs, \ba)\right]\right] - \textcolor{red}{\tau}\right) \!+\!\frac{1}{2}\!\E_{\bs, \ba, \bs' \sim \mathcal{D}}\left[\left(Q - \bellman^{\policy_k} \hat{Q}^{k} \right)^2 \right]\!.}
    \label{eqn:practical_objective_with_lagrange}
\end{equation}
Equation~\ref{eqn:practical_objective_with_lagrange} implies that if the expected difference in Q-values is less than the specified threshold $\tau$, $\alpha$ will adjust to be close to $0$, whereas if the difference in Q-values is higher than the specified threshold, $\tau$, then $\alpha$ is likely to take on high values, and thus more aggressively penalize Q-values. We refer to this version as CQL-Lagrange, and we found that this version outperforms the version with a fixed $\alpha$ on the gym-MuJoCo D4RL benchmarks, and drastically outperforms the fixed $\alpha$ version on the more complex AntMazes. 

\textbf{Choice of $\alpha$.} Our experiments in Section~\ref{sec:experiments} use the Lagrange version to automatically tune $\alpha$ during training. 
For our experiments, across D4RL Gym MuJoCo domains, we choose $\tau = 10.0$. For the other D4RL domains (Franka Kitchen and Adroit), we chose $\tau = 5.0$. However, for our Atari experiments, we used a fixed penalty, with $\alpha=1.0$ chosen uniformly for Table~\ref{table:atari_reduced_size} (with 10\% data), $\alpha=4.0$ chosen uniformly for Table~\ref{table:atari_reduced_size} (with 1\% data), and $\alpha=0.5$ for Figure~\ref{fig:cql_20m_atari}.

\textbf{Computing $\log \sum_{\ba} \exp(Q(\bs, \ba)$.} CQL($\mathcal{H}$) uses log-sum-exp in the objective for training the Q-function (Equation~\ref{eqn:practical_objective}). In discrete action domains, we compute the log-sum-exp exactly by invoking the standard \texttt{tf.reduce\_logsumexp()} (or \texttt{torch.logsumexp()}) functions provided by autodiff libraries. In continuous action tasks, CQL($\mathcal{H}$) uses importance sampling to compute this quantity, where in practice, we sampled \textbf{10} action samples each at every state $\bs$ from a uniform-at-random $\text{Unif}(\ba)$ and the current policy, $\policy(\ba|\bs)$ and used these alongside importance sampling to compute it as follows using $N = 10$ action samples:
\begin{align*}
    \log \sum_{\ba}\exp (Q(\bs, \ba)) &= \log \left( \frac{1}{2}\sum_{\ba} \exp (Q(\bs, \ba)) + \frac{1}{2} \sum_{\ba} \exp (Q(\bs, \ba)) \right) \\
    &= \log \left( \frac{1}{2} \E_{\ba \sim \text{Unif}(\ba)}\left[ \frac{\exp (Q(\bs, \ba))}{\text{Unif}(\ba)}\right] + \frac{1}{2} \E_{\ba \sim \policy(\ba|\bs)}\left[ \frac{\exp (Q(\bs, \ba))}{\policy(\ba|\bs)}\right] \right) \\
    &\approx \log \left( \frac{1}{2N} \sum_{\ba_i \sim \text{Unif}(\ba)}^N \left[ \frac{\exp (Q(\bs, \ba_i))}{\text{Unif}(\ba)}\right] + \frac{1}{2N} \sum_{\ba_i \sim \policy(\ba|\bs)}^N \left[ \frac{\exp (Q(\bs, \ba_i))}{\policy(\ba_i|\bs)}\right] \right).
\end{align*}

\textbf{Hyperparameters.} For the D4RL tasks, we built CQL on top of the implementation of SAC provided at: \url{https://github.com/vitchyr/rlkit/}. Our implementation mimicked the RLkit SAC algorithm implementation, with the exception of a smaller policy learning rate, which was chosen to be 3e-5 or 1e-4 for continuous control tasks. Following the convention set by D4RL~\citep{d4rl}, we report the normalized, smooth average undiscounted return over 4 seeds for in our results in Section~\ref{sec:experiments}.

The other hyperparameters we evaluated on during our preliminary experiments, and might be helpful guidelines for using CQL are as follows:
\begin{itemize}
    \item \textbf{Q-function learning rate.} We tried two learning rate values $[1e-4, 3e-4]$ for the Q-function. We didn't observe a significant difference in performance across these values. $3e-4$ which is the SAC default was chosen to be the default for CQL.
    \item \textbf{Policy learning rate.} We evaluated CQL with a policy learning rate in the range of $[3e-5, 1e-4, 3e-4$. We found $3e-5$ to almost uniformly attain good performance. While $1e-4$ seemed to be better on some of our experiments (such as hopper-medium-v0 and antmaze-medium-play-v0), but it performed badly with the real-human demonstration datasets, such as the Adroit tasks. We chose $3e-5$ as the default across all environments.
    \item \textbf{Lagrange threshold $\tau$.} We ran our preliminary experiments with three values of threshold, $\tau = [2.0, 5.0, 10.0]$. However, we found that $\tau=2.0$, led to a huge increase in the value of $\alpha$ (sometimes upto the order of millions), and as a result, highly underestimated Q-functions on all domains (sometimes upto the order of -1e6). On the datasets with human demonstrations -- the Franka Kitchen and Adroit domains, we found that $\tau=5.0$ obtained lower-bounds on Q-values, whereas $\tau=10.0$ was unable to prevent overestimation in Q-values in a number of cases and Q-values diverged to highly positive values (> 1e+6). For the MuJoCo domains, we observed that $\tau=10.0$ or $\tau=5.0$ gave rise to a stable curve of Q-values, and hence this threshold was chosen for these experiments. Note that none of these hyperparameter selections required any notion of onine evaluation, since these choices were made based on the predicted Q-values on dataset state-actions pairs.  
    \item \textbf{Number of gradient steps.} We evaluated our method on varying number of gradient steps. Since CQL uses a reduced policy learning rate (3e-5), we trained CQL methods for 1M gradient steps. Due to a lack of a proper valdiation error metric for offline Q-learning methods, deciding the number of gradient steps dynamically has been an open problem in offline RL.~\citep{levine2020offline}. Different prior methods choose the number of steps differently. For our experiments, we used 1M gradient steps for all D4RL domains, and followed the convention from \citet{agarwal2019optimistic}, to report returns after 5X gradient steps of training for the Atari results in Table~\ref{table:atari_reduced_size}.  
    \item \textbf{Choice of Backup.} Instead of using an actor-critic version of CQL, we can instead also use an approximate max-backup for the Q-function in a continuous control setting. Similar to prior work~\citep{kumar2019stabilizing}, we sample 10 actions from the current policy at the next state $\bs'$, called $\ba_1, \cdots \ba_{10} \sim \policy(\ba'|\bs')$ and generate the target values for the backup using the following equation: $r(\bs, \ba) + \max_{\ba_1, \cdots, \ba_{10}} Q(\bs', \ba')$. This is different from the standard actor-critic backup that performs an expectation of the Q-values $Q(\bs', \ba')$ at the next state under the policy's distribution $\policy(\ba'|\bs')$. In certain environments, such as in the Franka Kitchen and AntMaze domains using these backups performs better than the reported numbers with actor-critic algorithm.
\end{itemize}

Other hyperparameters, were kept identical to SAC on the D4RL tasks, including the twin Q-function trick, soft-target updates, etc.
In the Atari domain, we based our implementation of CQL on top of the QR-DQN implementation provided by \citet{agarwal2019optimistic}. We did not tune any parameter from the QR-DQN implementation released with the official codebase with \citep{agarwal2019optimistic}.

\section{Ablation Studies}
\label{app:additional_results}
In this section. we describe the experimental findings of some ablations for CQL. Specifically we aim to answer the following questions: 
\begin{enumerate}
    \item How does CQL($\mathcal{H}$) compare to CQL($\rho$), with $\rho = \hat{\policy}^{k-1}$, the previous policy?
    \item How does the CQL variant which uses Equation~\ref{eqn:objective_1} compare to CQL($\mathcal{H}$) variant in Equation~\ref{eqn:practical_objective}, that results in a tighter lower-bound theoretically?
    \item How do the Lagrange (Equation~\ref{eqn:practical_objective_with_lagrange}) and non-Lagrange (Equation~\ref{eqn:practical_objective}) formulations of CQL($\mathcal{H}$) empirically compare to each other?
\end{enumerate}

We start with question \textbf{(1)}. On three MuJoCo tasks from D4RL, we evaluate the performance of both CQL($\mathcal{H}$) and CQL($\rho$), as shown in Table~\ref{table:mujoco_cql_ablation}. We observe that on these tasks, CQL($\mathcal{H}$) generally performs better than CQL($\rho$). However, when a sampled estimate of log-sum-exp of the Q-values becomes inaccurate due to high variance importance weights, especially in large action spaces, such as in the Adroit environments, we observe in Table~\ref{table:adroit_antmaze} that CQL($\rho$) tends to perform better.

\begin{table}[h]
    \centering
    \begin{tabular}{l|r|r}
    \hline
        \textbf{Task Name} & \textbf{CQL($\mathcal{H})$} & \textbf{CQL($\rho$)} \\
        \hline
        halfcheetah-medium-expert & \textbf{7234.5} & 3995.6 \\
        walker2d-mixed & \textbf{1227.2} & 812.7  \\
        hopper-medium & \textbf{1866.1} & 1166.1 \\
        \hline
    \end{tabular}
    \caption{{\small Average return obtained by CQL($\mathcal{H}$), and CQL($\rho$) on three D4RL MuJoCo environments. Observe that on these environments, CQL($\mathcal{H}$) generally outperforms CQL($\rho$).}}
    \label{table:mujoco_cql_ablation}
    \vspace{-15pt}
\end{table}

Next, we evaluate the answer to question \textbf{(2)}. On three MuJoCo tasks from D4RL, as shown in Table~\ref{table:data_subtract}, we evaluate the performance of CQL($\mathcal{H}$), with and without the dataset maximization term in Equation~\ref{eqn:modified_policy_eval}. We observe that omitting this term generally seems to decrease performance, especially in cases when the dataset distribution is generated from a single policy, for example, hopper-medium.

\begin{table}[h]
    \centering
    \begin{tabular}{l|r|r}
    \hline
        \textbf{Task Name} & \textbf{CQL($\mathcal{H})$} & \textbf{CQL($\mathcal{H}$) (w/ Equation~\ref{eqn:objective_1})} \\
        \hline
        hopper-medium-expert & 3628.4 & 3610.3 \\
        hopper-mixed & \textbf{1563.2} & 864.6 \\
        hopper-medium & \textbf{1866.1} & 1028.4 \\
        \hline
    \end{tabular}
    \caption{{\small Average return obtained by CQL($\mathcal{H}$) and CQL($\mathcal{H}$) without the dataset average Q-value maximization term. The latter formulation corresponds to Equation~\ref{eqn:objective_1}, which is void of the dataset Q-value maximization term. We show in Theorem~\ref{thm:cql_underestimates} that Equation~\ref{eqn:objective_1} results in a weaker lower-bound. In this experiment, we also observe that this approach is generally outperformed by CQL($\mathcal{H}$).}}
    \vspace{-15pt}
    \label{table:data_subtract}
\end{table}

Next, we answer question \textbf{(3)}. Table~\ref{table:lagrangian} shows the performance of CQL and CQL-Lagrange on some gym-MuJoCo and AntMaze tasks from D4RL. In all of these tasks, we observe that the Lagrange version (Equation~\ref{eqn:practical_objective_with_lagrange}, which automates the choice of $\alpha$ using dual gradient descent on the CQL regularizer, performs better than the non-Lagrange version (Equation~\ref{eqn:practical_objective}). In some cases, for example, the AntMazes, the difference can be as high as 30\% of the maximum achievable performance. On the gym MuJoCo tasks, we did not observe a large benefit of using the Lagrange version, however, there are some clear benefits, for instance in the setting when learning from hopper-mixed dataset.

\begin{table}[h]
    \centering
    \begin{tabular}{l|r|r}
    \hline
        \textbf{Task Name} & \textbf{CQL($\mathcal{H})$ (Lagrange), $\tau = 10.0$} & \textbf{CQL($\mathcal{H}$), $\alpha = 5.0$}  \\
        \hline
        hopper-medium-expert & 3628.4 & 3589.4\\
        hopper-mixed & {\textbf{1866.1}} & 1002.8 \\
        walker-mixed & {1227.2} & 1055.7 \\
        walker2d-random-expert & 4183.0 & 3934.5 \\ 
        hopper-medium & 1866.1 & 1583.4 \\ \hline \hline
        antmaze-medium-diverse & \textbf{0.53} & 0.21 \\
        antmaze-large-play & \textbf{0.15} & 0.02 \\
        antmaze-large-diverse & \textbf{0.14}  & 0.05 \\
        \hline
    \end{tabular}
    \caption{{\small Average return obtained by CQL($\mathcal{H}$) and CQL($\mathcal{H}$) with automatic tuning for $\alpha$ by using a Lagrange version. Observe that both versions are generally comparable, except in the AntMaze tasks, where an adaptive value of $\alpha$ greatly outperforms a single chosen value of $\alpha$.}}
    \label{table:lagrangian}
    \vspace{-15pt}
\end{table}

\end{document}